\DeclareMathOperator*{\argmin}{arg\,min}
\DeclareMathOperator*{\argmax}{arg\,max}
\DeclareMathOperator*{\softmin}{\text{soft}\!\min}
\newcommand{\manuallabel}[2]{\def\textbf{\@currentlabel{#2}}\label{#1} }
\newcounter{hypothsetcounter}
\newcounter{hypothcounter}[hypothsetcounter]
\renewcommand\thehypothsetcounter{\arabic{hypothsetcounter}} 
\renewcommand\thehypothcounter{\alph{hypothcounter}} 
\renewcommand\p@hypothcounter{\thehypothsetcounter}
\newcommand{\newhypothset}{
    \refstepcounter{hypothsetcounter}%
}
\newcommand{\hypothcounter}[2]{
    \refstepcounter{hypothcounter}\label{#2}%
    \noindent \textbf{H\thehypothsetcounter \thehypothcounter} \indent \textit{#1}%
}
\newcommand{\expctover}[2]{\mathbb{E}_{#1}\!\left[#2\right]}
\newcommand{\expctarg}[1]{\mathbb{E}\!\left[#1\right]}
\newcommand{\given}{\mid}
\newcommand{\usertext}{\text{u}}
\newcommand{\robottext}{\text{r}}
\newcommand{\policy}{\pi}
\newcommand{\policyuser}{\policy^\usertext}
\newcommand{\policyusergoal}{\policy^\usertext_\goal}
\newcommand{\policyrobot}{\policy^\robottext}
\newcommand{\goal}{g}
\newcommand{\Goal}{G}
\newcommand{\actionuser}{u}
\newcommand{\Actionuser}{U}
\newcommand{\actionrobot}{a}
\newcommand{\Actionrobot}{A}
\newcommand{\stateenv}{x}
\newcommand{\Stateenv}{X}
\newcommand{\staterobot}{x}
\newcommand{\stateenvgoal}{s}
\newcommand{\Stateenvgoal}{S}
\newcommand{\staterobgoal}{s}
\newcommand{\state}{\staterobgoal}
\newcommand{\belief}{b}
\newcommand{\transition}{T}
\newcommand{\transitionbelief}{\tau}
\newcommand{\costuser}{C^\usertext}
\newcommand{\costrobot}{C^\robottext}
\newcommand{\costusergoal}{C^\usertext_\goal}
\newcommand{\costrobotgoal}{C^\robottext_\goal}
\newcommand{\transitionuser}{\transition^{\usertext}}
\newcommand{\transitionallargs}{\transition(\stateenv' \given \stateenv, \actionuser, \actionrobot)}
\newcommand{\stateactionsarg}[1]{\state_{#1}, \actionuser_{#1}, \actionrobot_{#1}}
\newcommand{\stateactions}{\stateactionsarg{ }}
\newcommand{\stateenvactionsarg}[1]{\stateenv_{#1}, \actionuser_{#1}, \actionrobot_{#1}}
\newcommand{\stateenvactions}{\stateenvactionsarg{ }}
\newcommand{\beliefactionsarg}[1]{\belief_{#1}, \actionuser_{#1}, \actionrobot_{#1}}
\newcommand{\beliefactions}{\beliefactionsarg{ }}
\newcommand{\qrobot}{Q}
\newcommand{\vrobot}{V}
\newcommand{\sumtime}{\sum_{t}}
\newcommand{\vopt}{\vrobot^{*}}
\newcommand{\qopt}{\qrobot^{*}}
\newcommand{\vhs}{\vrobot^{\text{HS}}}
\newcommand{\qhs}{\qrobot^{\text{HS}}}
\newcommand{\qmdp}{Q}
\newcommand{\vmdp}{V}
\newcommand{\qsoft}{Q^{\approx}}
\newcommand{\vsoft}{V^{\approx}}
\newcommand{\qgoal}{\qmdp_{\goal}}
\newcommand{\vgoal}{\vmdp_{\goal}}
\newcommand{\vuser}{\vmdp^{\usertext}}
\newcommand{\qgoaluser}{\qmdp_{\goal}^{\usertext}}
\newcommand{\vgoaluser}{\vmdp_{\goal}^{\usertext}}
\newcommand{\vgoalrobot}{\vmdp_{\goal}^{\robottext}}
\newcommand{\qgoalsoft}{\qsoft_\goal}
\newcommand{\vgoalsoft}{\vsoft_\goal}
\newcommand{\qgoalt}[1]{\qmdp_{\goal}^{#1}}
\newcommand{\vgoalt}[1]{\vmdp_{\goal}^{#1}}
\newcommand{\qgoalsoftt}[1]{\qsoft_{ \goal,{#1}}}
\newcommand{\vgoalsoftt}[1]{\vsoft_{ \goal,{#1}}}
\newcommand{\costgoal}{C_\goal}
\newcommand{\costgoaluser}{\costgoal^\usertext}
\newcommand{\costgoalrobot}{\costgoal^\robottext}
\newcommand{\target}{\kappa}
\newcommand{\Target}{K}
\newcommand{\costtarg}{C_{\target}}
\newcommand{\costtargprime}{C_{\target'}}
\newcommand{\costtargstar}{C_{\target^*}}
\newcommand{\costtargrobot}{C_{\target}^{\robottext}}
\newcommand{\costtarguser}{C_{\target}^{\usertext}}
\newcommand{\qtarg}{Q_{\target}}
\newcommand{\vtarg}{V_{\target}}
\newcommand{\vtargt}[1]{V_{\target}^{#1}}
\newcommand{\qtargstar}{Q_{\target^*}}
\newcommand{\vtargstart}[1]{V_{\target^*}^{#1}}
\newcommand{\qtargsoft}{Q^{\approx}_{\target}}
\newcommand{\vtargsoft}{V^{\approx}_{\target}}
\newcommand{\qtargsoftt}[1]{\qsoft_{ \target,{#1}}}
\newcommand{\vtargsoftt}[1]{\vsoft_{ \target,{#1}}}
\newcommand{\traj}{\xi}
\newcommand{\trajat}[1]{\traj_{#1}^{t \rightarrow T}}
\newcommand{\trajatp}[1]{\traj_{#1}^{t+1 \rightarrow T}}
\newcommand{\trajtot}{\traj^{0 \rightarrow t}}
\newcommand{\goalrestrictionset}{\mathcal{R}}
\newcommand{\usergoal}{\goal^\usertext}
\newcommand{\userGoal}{\Goal^\usertext}
\newcommand{\robotgoal}{\goal^\robottext}
\newcommand{\robotGoal}{\Goal^\robottext}
\newcommand{\arbitration}{\alpha}
\begin{document}

\runninghead{autonomy et al.}

\title{Shared Autonomy via Hindsight Optimization for Teleoperation and Teaming}

\author{Shervin Javdani\affilnum{1}, Henny Admoni\affilnum{1}, Stefania Pellegrinelli\affilnum{2}, Siddhartha S. Srinivasa\affilnum{1}, and J. Andrew Bagnell\affilnum{1}}

\affiliation{
\affilnum{1}The Robotics Institute, Carnegie Mellon University
\affilnum{2}ITIA-CNR, Institute of Industrial Technologies and Automation, National Research Council of Italy\\
}

\corrauth{
Shervin Javdani
Carnegie Mellon University
Robotics Institute
5000 Forbes Ave
Pittsburgh, PA 15213
}

\email{sjavdani@cmu.edu}

\begin{abstract}
In shared autonomy, a user and autonomous system work together to achieve shared goals. To collaborate effectively, the autonomous system must know the user's goal. As such, most prior works follow a predict-then-act model, first predicting the user's goal with high confidence, then assisting given that goal. Unfortunately, confidently predicting the user's goal may not be possible until they have nearly achieved it, causing predict-then-act methods to provide little assistance. However, the system can often provide useful assistance even when confidence for any single goal is low (e.g. move towards multiple goals). In this work, we formalize this insight by modelling shared autonomy as a Partially Observable Markov Decision Process (POMDP), providing assistance that minimizes the expected cost-to-go with an unknown goal. As solving this POMDP optimally is intractable, we use hindsight optimization to approximate. We apply our framework to both shared-control teleoperation and human-robot teaming. Compared to predict-then-act methods, our method achieves goals faster, requires less user input, decreases user idling time, and results in fewer user-robot collisions.


\end{abstract}

\keywords{Physical Human-Robot Interaction, Telerobotics, Rehabilitation Robotics, Personal Robots, Human Performance Augmentation}

\maketitle

\section{Introduction}
\label{sec:intro}

\graphicspath{{./}{./images/}{./images_rss_2015/}}

Human-robot collaboration studies interactions between humans and robots sharing a workspace. One instance of collaboration arises in \emph{shared autonomy}, where both the user and robotic system act simultaneously to achieve shared goals. For example, in \emph{shared control teleoperation}~\citep{goertz_1963, rosenberg_1993, aigner_1997, debus_2000, dragan_2013_assistive}, both the user and system control a single entity, the robot, in order to achieve the user's goal. In \emph{human-robot teaming}, the user and system act independently to achieve a set of related goals~\citep{hoffman_2007, arai_2010, dragan_2013_legible, koppula_2013, mainprice_2013, gombolay_2014, nikolaidis_2017_mutual}. 

While each instance of shared autonomy has many unique requirements, they share a key common challenge - for the autonomous system to be an effective collaborator, it needs to know the user's goal. For example, feeding with shared control teleoperation, an important task for assistive robotics~\citep{chung_2013}, requires knowing what the user wants to eat (\cref{subfig:eating_ambiguous_goal}). Wrapping gifts with a human-robot team requires knowing which gift the user will wrap to avoid getting in their way and hogging shared resources (\cref{subfig:teaming_ambiguous_goal}).

\begin{figure}[t]
\centering
\begin{subfigure}{0.24\textwidth}
  \includegraphics[width=1.0\textwidth, trim=0 0 0 0, clip=true]{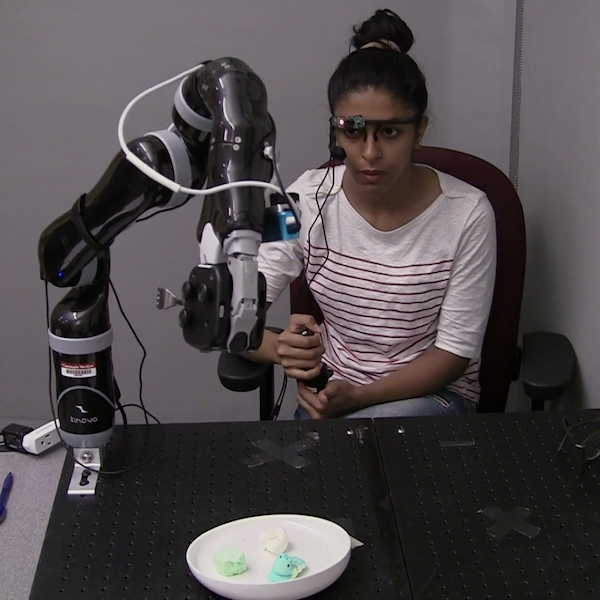}
  \caption{Shared Control Teleop}
  \label{subfig:eating_ambiguous_goal}
\end{subfigure}
\hfill
\begin{subfigure}{0.24\textwidth}
  \includegraphics[width=1.0\textwidth, trim=0 0 0 0, clip=true]{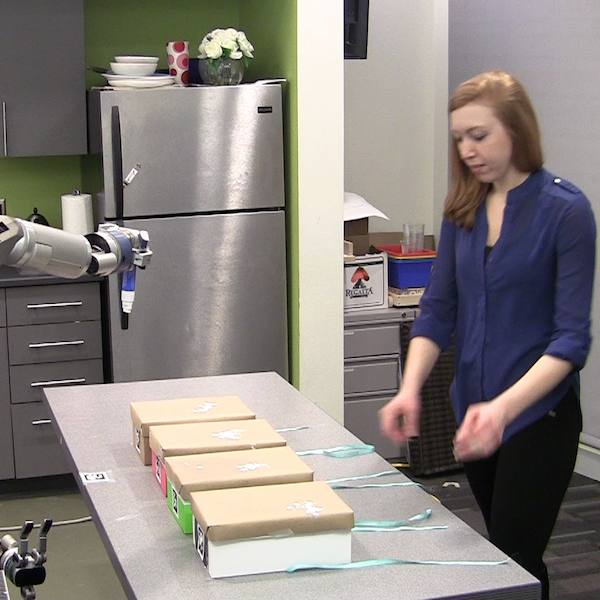}
  \caption{Human-Robot Teaming}
  \label{subfig:teaming_ambiguous_goal}
\end{subfigure}
\caption{We can provide useful assistance even when we do not know the user's goal. (\subref{subfig:eating_ambiguous_goal}) Our feeding experiment, where the user wants to eat one of the bites of food on the plate. With an unknown goal, our method autonomously orients the fork and moves towards all bites. In contrast, predict-then-act methods only helped position the fork at the end of execution. Users commented that the initial assistance orienting the fork and getting close to all bites was the most helpful, as this was the most time consuming portion of the task. (\subref{subfig:teaming_ambiguous_goal}) Our teaming experiment, where the user wraps a box, and the robot must stamp a different box. Here, the user's motion so far suggests their goal is likely either the green or white box. Though we cannot confidently predict their single goal, our method starts making progress for the other boxes.}
 \label{fig:ambiguous_goals}
\end{figure}

In general, the system does not know the user's goal apriori. We could alleviate this issue by requiring users to explicitly specify their goals (e.g. through voice commands). However, there are often a continuum of goals to choose from (e.g. location to place an object, size to cut a bite of food), making it impossible for users to precisely specify their goals. Furthermore, prior works suggest requiring explicit communication leads to ineffective collaboration~\citep{vanhooydonck_2003, goodrich_2003, green_2007}. Instead, implicit information should be used to make collaboration seamless. In shared autonomy, this suggests utilizing sensing of the environment and user actions to infer the user's goal. This idea has been successfully applied for shared control teleoperation~\citep{li_2003,yu_2005,kragic_2005,kofman_2005,aarno_2008,carlson_2012, dragan_2013_assistive, hauser_2013, muelling_2015} and human-robot teaming~\citep{hoffman_2007, nguyen_2011, macindoe_2012, mainprice_2013, koppula_2013, lasota_2015}.



As providing effective assistance requires knowing the user's goal, most shared autonomy methods do not assist when the goal is unknown. These works split shared autonomy into two parts: 1) predict the user's goal with high probability, and 2) assist for that single goal, potentially using prediction confidence to regulate assistance. We refer to this approach as \emph{predict-then-act}. While this has been effective in simple scenarios with few goals~\citep{yu_2005, kofman_2005, carlson_2012, dragan_2013_assistive, koppula_2013, muelling_2015}, it is often impossible to predict the user's goal until the end of execution (e.g. cluttered scenes), causing these methods to provide little assistance. Addressing this lack of assistance is of great practical importance - with uncertainty over only goals in our feeding experiment, a predict-then-act method provided assistance for only $31\%$ of the time on average, taking $29.4$ seconds on average before the confidence threshold was initially reached.

In this work, we present a general framework for goal-directed shared autonomy that does not rely on predicting a single user goal (\cref{fig:robot_human_model}). We assume the user's goal is fixed (e.g. they want a particular bite of food), and the autonomous system should adapt to the user goal\footnote{While we assume the goal is fixed, we do not assume how the user will achieve that goal (e.g. grasp location) is fixed.}. Our key insight is that there are useful assistance actions for \emph{distributions over goals}, even when confidence for a particular goal is low (e.g. move towards multiple goals) (\cref{fig:ambiguous_goals}). We formalize this notion by modelling our problem as a Partially Observable Markov Decision Process (POMDP)~\citep{kaelbling_1998_pomdp}, treating the user's goal as hidden state. When the system is uncertain of the user goal, our framework naturally optimizes for an assistance action that is helpful for many goals. When the system confidently predicts a single user goal, our framework focuses assistance given that goal (\cref{fig:teledata}). 

As our state and action spaces are both continuous, solving for the optimal action in our POMDP is intractable. Instead, we approximate using QMDP~\citep{littman_1995}, also referred to as hindsight optimization~\citep{chong_2000,yoon_2008}. This approximation has many properties suitable for shared autonomy: it is computationally efficient, works well when information is gathered easily~\citep{koval_2014}, and will not oppose the user to gather information. The result is a system that minimizes the expected cost-to-go to assist for any distribution over goals. 

We apply our framework in user study evaluations for both shared control teleoperation and human-robot teaming. For shared control teleoperation, users performed two tasks: a simpler object grasping task (\cref{sec:experiment_rss_2015}), and a more difficult feeding task (\cref{sec:experiment_hri_2016}). In both cases, we find that our POMDP based method enabled users to achieve goals faster and with less joystick input than a state-of-the-art predict-then-act method~\citep{dragan_2013_assistive}. Subjective user preference differed by task, with no statistical difference for the simpler object grasping task, and users preferring our POMDP method for the more difficult feeding task. 

For human-robot teaming (\cref{sec:experiment_iros_2016}), the user and robot performed a collaborative gift-wrapping task, where both agents had to manipulate the same set of objects while avoiding collisions. We found that users spent less time idling and less time in collision while collaborating with a robot using our method. However, results for total task completion time are mixed, as predict-then-act methods are able to take advantage of more optimized motion planners, enabling faster execution once the user goal is confidently predicted.


\begin{figure}[t]
\centering
\includegraphics[width=0.49\textwidth, trim=34 156 45 64, clip=true]{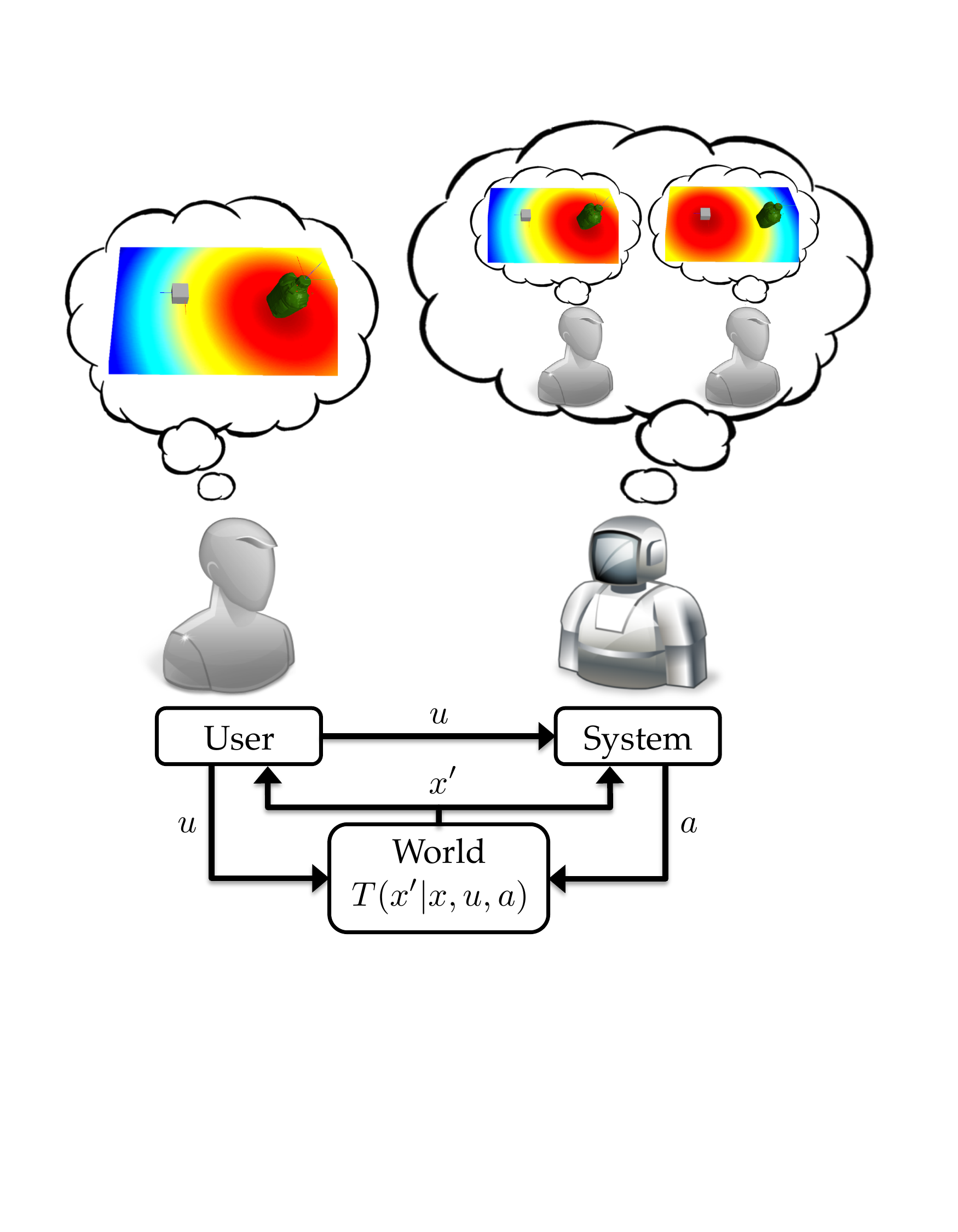}
\caption{ Our shared autonomy framework. We assume the user executes a policy for their single goal, depicted as a heatmap plotting the value function at each position. Our shared autonomy system models all possible user goals and their corresponding policies. From user actions $\actionuser$, a distribution over goals is inferred. Using this distribution and the value functions for each goal, the system selects an action $\actionrobot$. The world transitions from $\stateenv$ to $\stateenv'$. The user and shared autonomy system both observe this state, and repeat action selection.}
 \label{fig:robot_human_model}
\end{figure} 

\begin{figure}[t]
\centering
 \begin{subfigure}{0.240\textwidth}
   \centering 
   \hspace*{1mm} \includegraphics[width=0.92\textwidth, trim=400 50 400 450, clip=true]{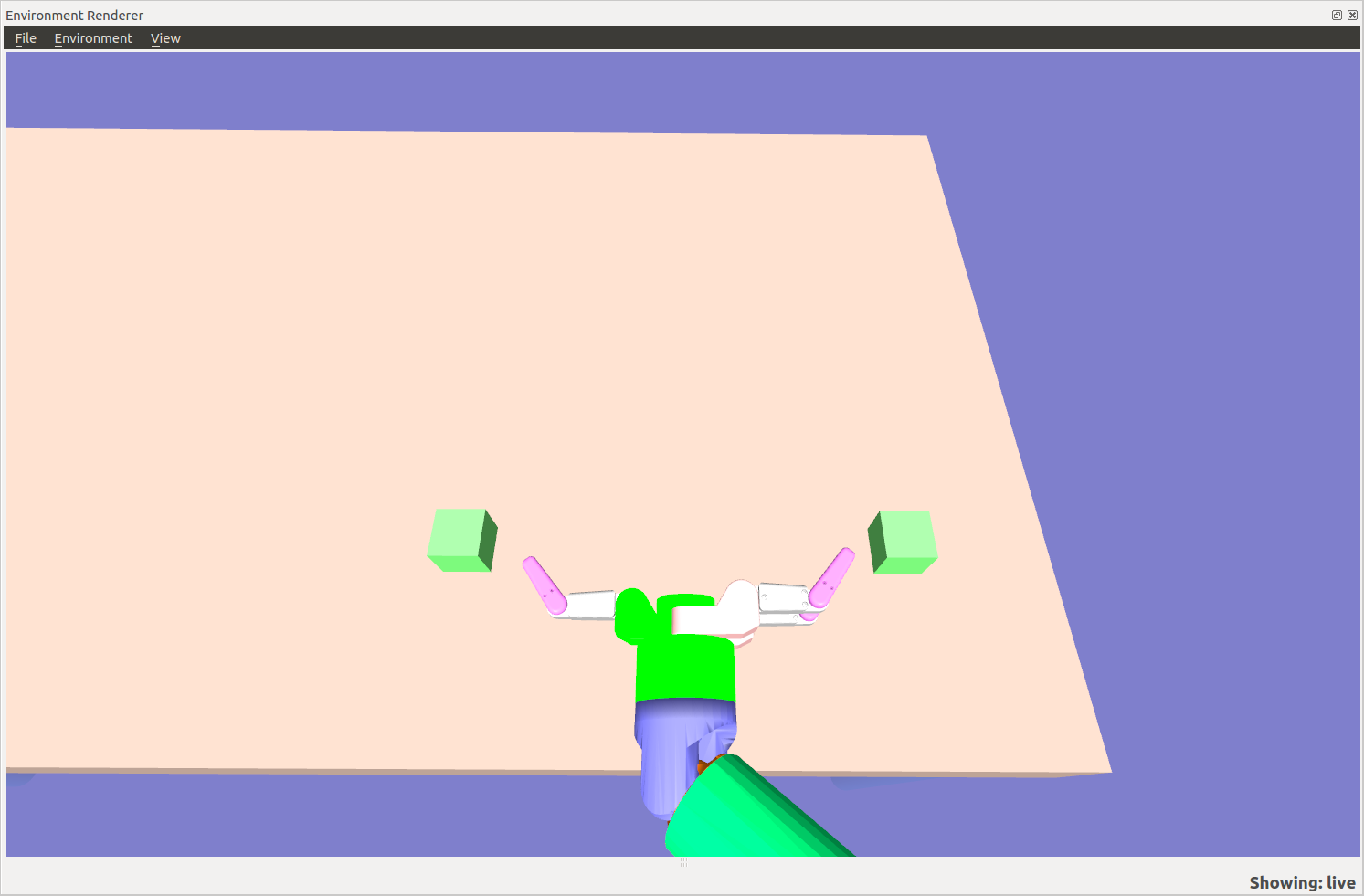}
   \hspace*{-2mm} \includegraphics[trim=1 3 0 -3, clip=true]{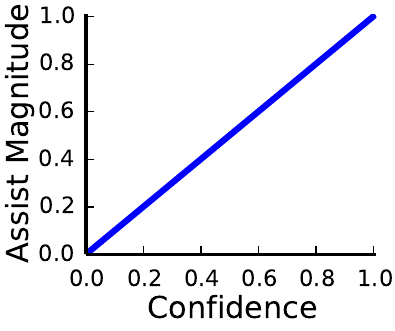} \hfill
   \caption{}
 \label{fig:teledata_cen}
 \end{subfigure}
 \hfill
 \begin{subfigure}{0.240\textwidth}
   \centering
   \hspace*{1mm} \includegraphics[width=0.92\textwidth, trim=400 50 400 450, clip=true]{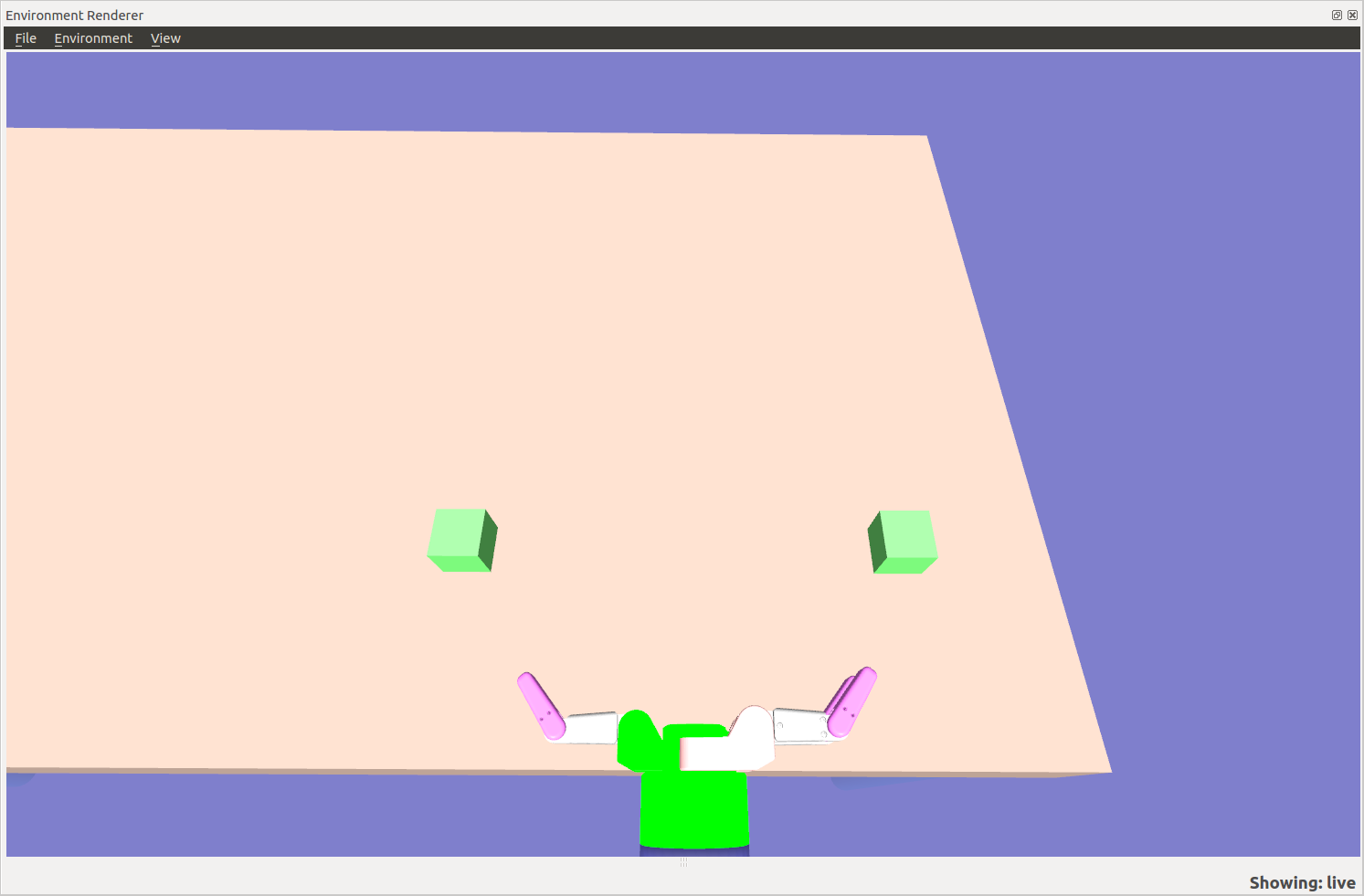}
   \hspace*{-1mm}\includegraphics[trim=1 3 0 -3, clip=true]{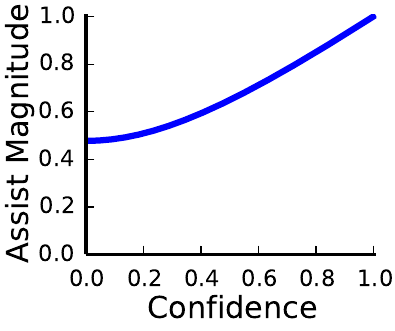} \hfill
  \caption{}
 \label{fig:teledata_back}
 \end{subfigure}
 \caption{Arbitration as a function of confidence with two goals. Confidence $=\max_g p(g) - \min_g p(g)$, which ranges from $0$ (equal probability) to $1$ (all probability on one goal). (\subref{fig:teledata_cen}) The hand is directly between the two goals, where no action assists for both goals. As confidence for one goal increases, assistance increases. (\subref{fig:teledata_back}) From here, going forward assists for both goals, enabling the assistance policy to make progress even with $0$ confidence.}
\label{fig:teledata}
\end{figure}

\section{Related Works}
\label{sec:related}

\subsection{Shared Control Teleoperation}
\label{sec:related_shared_teleop}

\graphicspath{{./}{./images/}{./images_hri_2016/}}

Shared control teleoperation has been used to assist disabled users using robotic arms~\citep{kim_2006_bci, kim_2012, mcmullen_2014, katyal_2014, schroeer_2015, muelling_2015} or wheelchairs~\citep{argall_2016,carlson_2012}, operate robots remotely~\citep{shen_2004, you_2011, leeper_2012}, decrease operator fatigue in surgical settings~\citep{park_2001, marayong_2003, kragic_2005, aarno_2005_virtualfixtures, li_2007}, and many other applications. As such, there are a great many methods catering to the specific needs of each domain.

One common paradigm launches a fully autonomous takeover when some trigger is activated, such as a user command~\citep{shen_2004, bien_2004, simpson_2005, kim_2012}, or when a goal predictor exceeds some confidence threshold~\citep{fagg_2004, kofman_2005, mcmullen_2014, katyal_2014}. Others have utilized similar triggers to initiate a subtask in a sequence~\citep{schroeer_2015, jain_argall_2015}. While these systems are effective at accomplishing the task, studies have shown that users often prefer having more control~\citep{kim_2012}.

Another line of work utilizes high level user commands, and relies on autonomy to generate robot motions. Systems have been developed to enable users to specify an end-effector path in 2D, which the robot follows with full configuration space plans~\citep{you_2011, hauser_2013}. Point-and-click interfaces have been used for object grasping with varying levels of autonomy~\citep{leeper_2012}. Eye gaze has been utilized to select a target object and grasp position~\citep{bien_2004}.

Another paradigm augments user inputs minimally to maintain some desired property, e.g. collision avoidance, without necessarily knowing exactly what goal the user wants to achieve. Sensing and complaint controllers have been used increase safety during teleoperation~\citep{kim_2006_bci, vogel_2014}. \emph{Potential field} methods have been employed to push users away from obstacles~\citep{crandall_2002} and towards goals~\citep{aigner_1997}. For assistive robotics using modal control, where users control subsets of the degrees-of-freedom of the robot in discrete modes (\cref{fig:control_modes}), \citet{herlant_2016} demonstrate a method for automatic time-optimal mode switching.

\begin{figure}
  \begin{subfigure}[b]{.32\linewidth}
    \includegraphics[width=\linewidth]{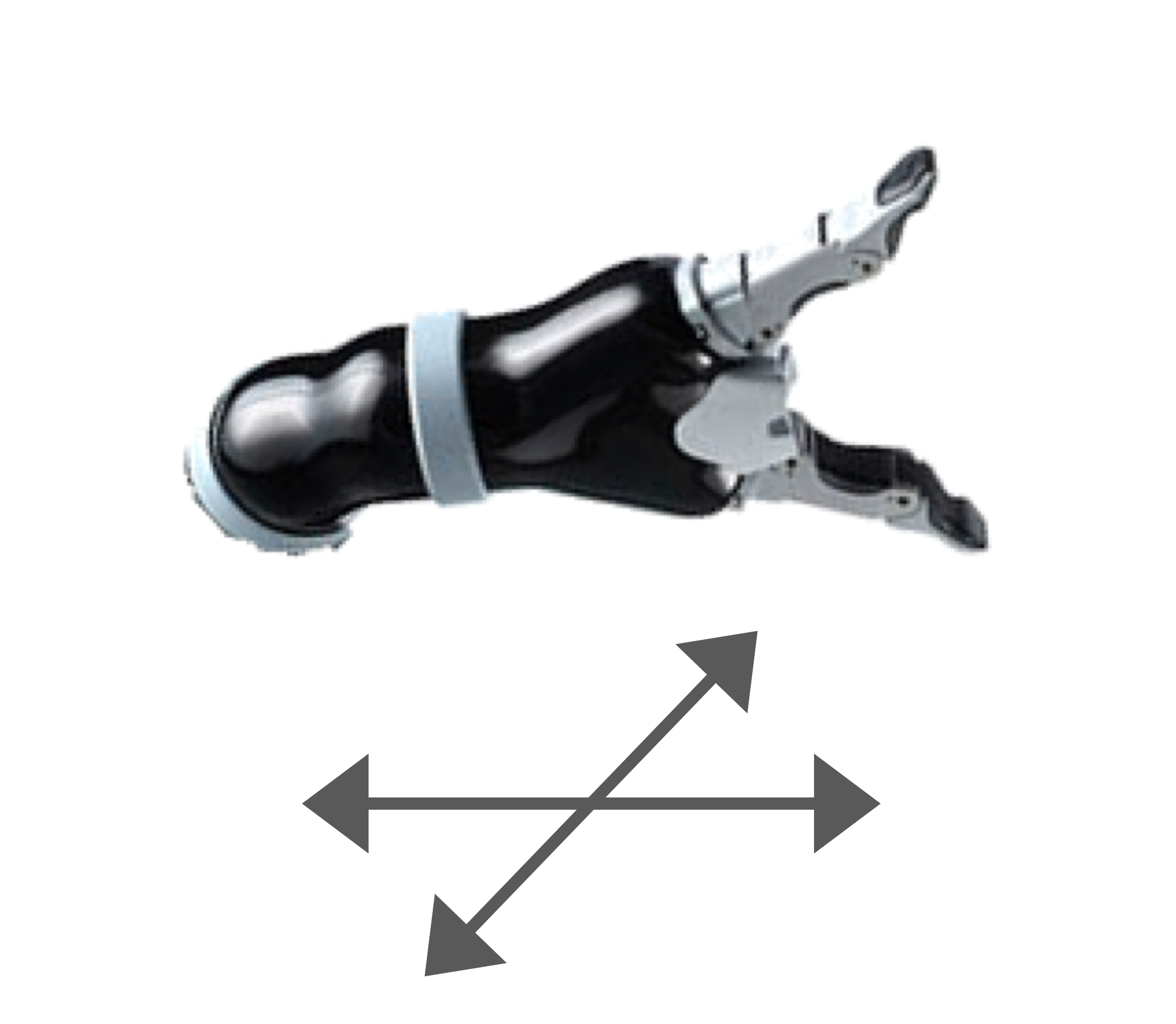}
    \caption{Mode 1}
  \end{subfigure}
  \begin{subfigure}[b]{.32\linewidth}
    \includegraphics[width=\linewidth, trim=0 4 0 0, clip=true]{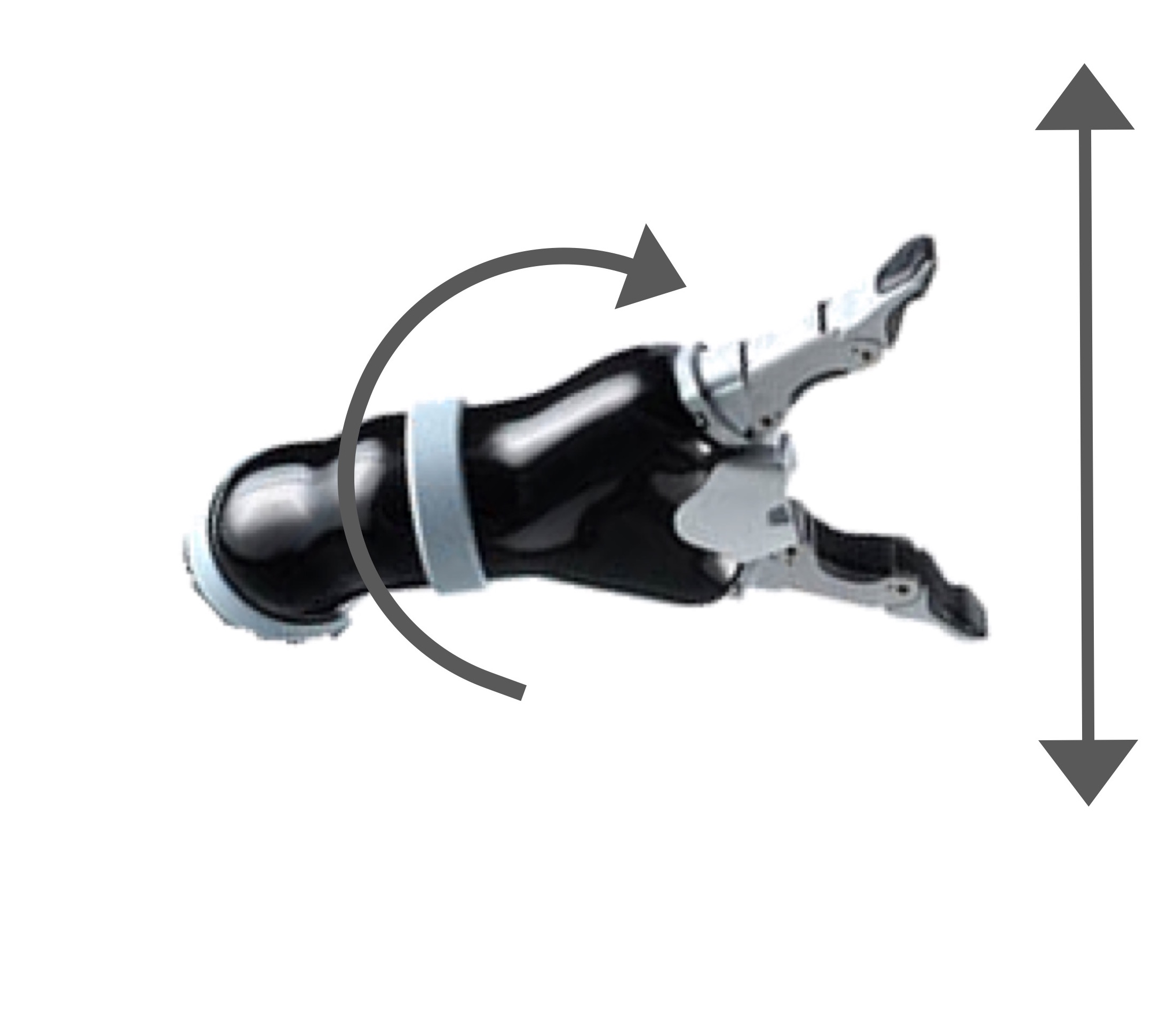}
    \caption{Mode 2}
  \end{subfigure}
  \begin{subfigure}[b]{.32\linewidth}
    \includegraphics[width=\linewidth, trim=0 6 0 4, clip=true]{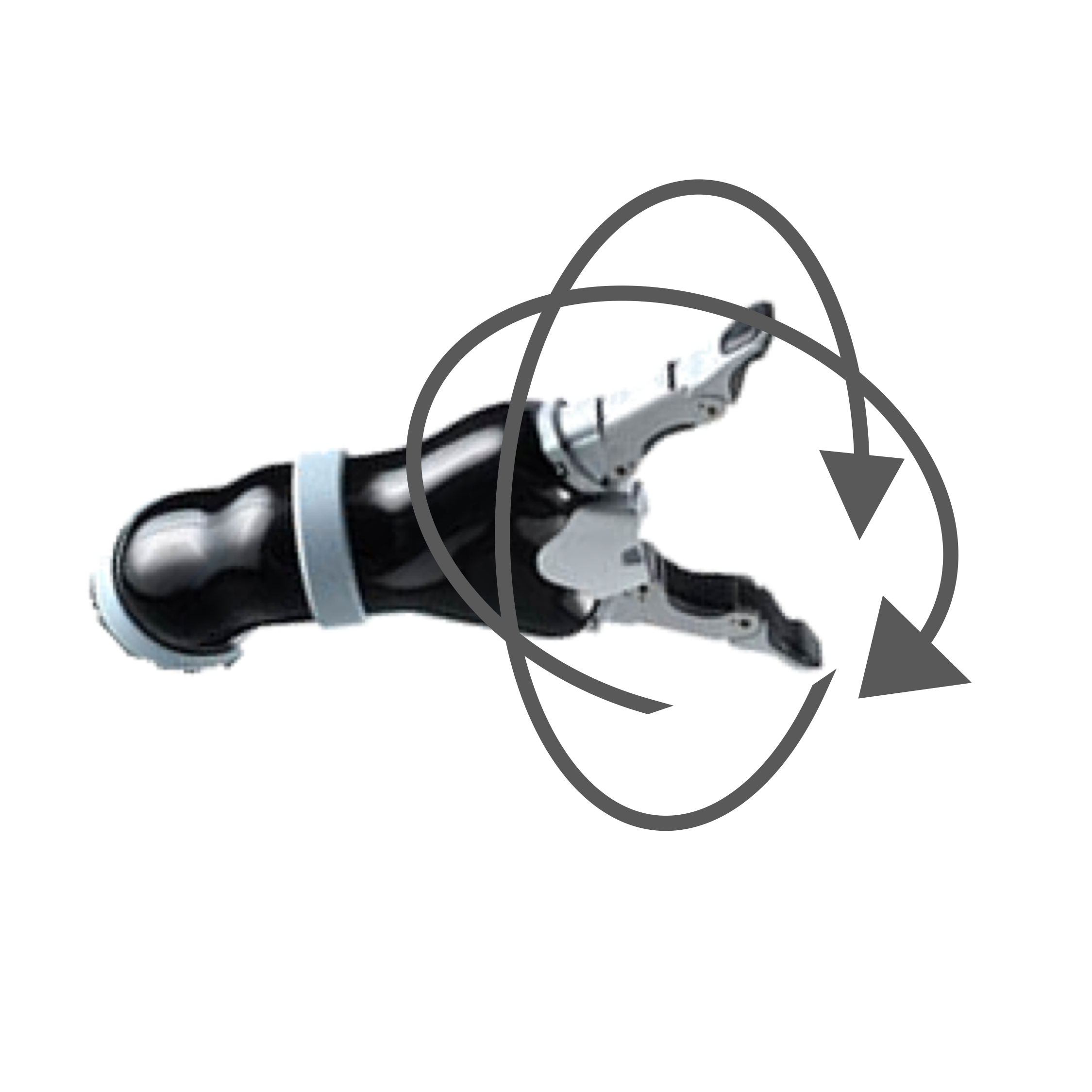}
    \caption{Mode 3}
  \end{subfigure}
  \caption{Modal control used in our feeding experiment on the Kinova MICO, with three control modes and a 2 degree-of-freedom input device. Fewer input DOFs means more modes are required to control the robot.}
  \label{fig:control_modes}
\end{figure}

Similarly, methods have been developed to augment user inputs to follow some constraint. \emph{Virtual fixtures}, commonly used in surgical robotics settings, are employed to project user commands onto path constraints (e.g. straight lines only)~\citep{park_2001,li_2003, marayong_2003, kragic_2005, aarno_2005_virtualfixtures, li_2007}. \citet{mehr_2016} learn constraints online during execution, and apply constraints softly by combining constraint satisfaction with user commands. While these methods benefit from not needing to predict the user's goal, they generally rely on a high degree-of-freedom input, making their use limited for assistive robotics, where disabled users can operate few DOF at a time and thus rely on modal control~\citep{herlant_2016}.

\emph{Blending} methods~\citep{dragan_2013_assistive} attempt to bridge the gap between highly assistive methods with little user control, and minimal assistance with higher user burden. User actions and full autonomy are treated as two independent sources, which are combined by some \emph{arbitration} function that determines the relative contribution of each (\cref{fig:blend_diagram}). \citet{dragan_2013_assistive} show that many methods of shared control teleoperation (e.g. autonomous takeover, potential field methods, virtual fixtures) can be generalized as blending with a particular arbitration function. 

Blending is one of the most used shared control teleopration paradigms due to computational efficiency, simplicity, and empirical effectiveness~\citep{li_2011, carlson_2012, dragan_2013_assistive, muelling_2015, gopinath_2016}. However, blending has two key drawbacks.
First, as two independent decisions are being combined without evaluating the action that will be executed, catastrophic failure can result even when each independent decision would succeed~\citep{trautman_2015}. Second, these systems rely on a \emph{predict-then-act} framework, predicting the single goal the user is trying to achieve before providing any assistance. Often, assistance will not be provided for large portions of execution while the system has low confidence in its prediction, as we found in our feeding experiment (\cref{sec:experiment_hri_2016}). 

Recently, \citet{hauser_2013} presented a system which provides assistance for a distribution over goals. Like our method, this policy-based method minimizes an expected cost-to-go while receiving user inputs (\cref{fig:policy_diagram}). The system iteratively plans trajectories given the current user goal distribution, executes the plan for some time, and updates the distribution given user inputs. In order to efficiently compute the trajectory, it is assumed that the cost function corresponds to squared distance, resulting in the calculation decomposing over goals. Our model generalizes these notions, enabling the use of any cost function for which a value function can be computed.

In this work, we assume the user does not change their goal or actions based on autonomous assistance, putting the burden of goal inference entirely on the system. \citet{nikolaidis_2017_shared} present a game-theoretic approach to shared control teleoperation, where the user adapts to the autonomous system. Each user has an \emph{adaptability}, modelling how likely the user is to change goals based on autonomous assistance. They use a POMDP to learn this adaptability during execution. While more general, this model is computationally intractable for continuous state and actions.


\begin{figure}
 \begin{tikzpicture} 

 \node[inner sep=-2pt] (user) at (0,4)
     {\includegraphics[width=.11\textwidth]{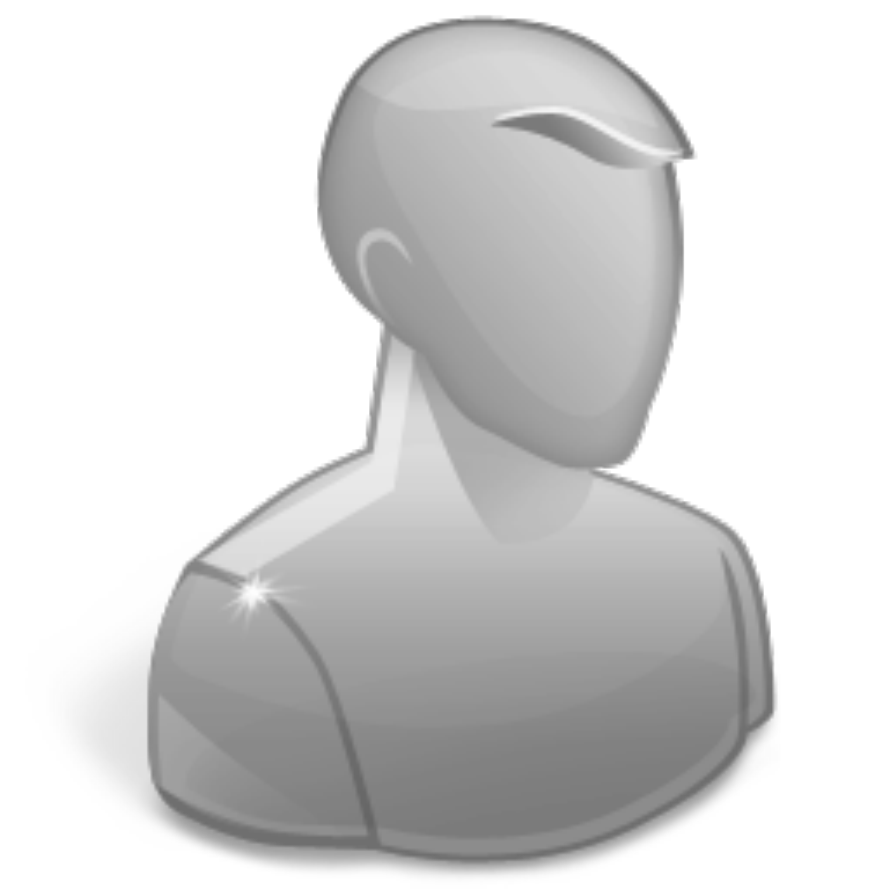}};
 \node[below] at (user.south) (user_label) {$\policyuser$};
 \node[inner sep=-3pt, below=0.75cm of user] (robot)
     {\includegraphics[width=.11\textwidth]{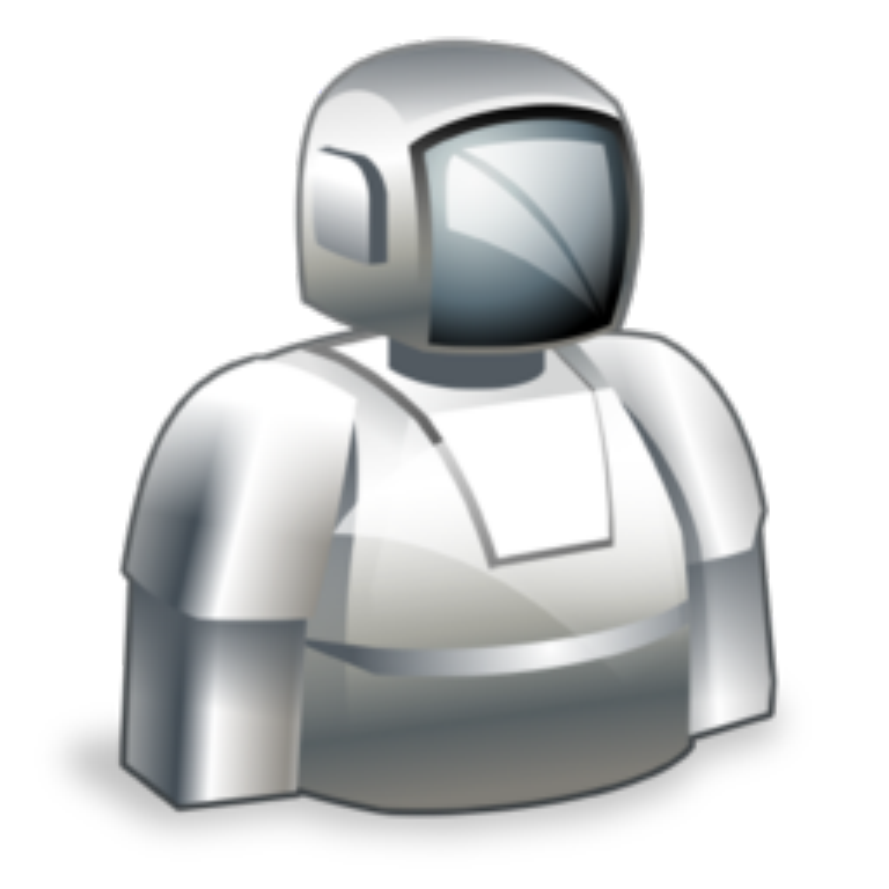}};
 \node[below] at (robot.south) (robot_label) {$\policyrobot$};

 \def \plotoriginx {2.0}
 \def \plotoriginy {2.1}
 \def \plotlength {1.5}
 \coordinate (plot_origin) at (\plotoriginx, \plotoriginy);
 \coordinate (plot_arb_1) at (\plotoriginx + \plotlength/2, \plotoriginy+\plotlength*2/3);
 \coordinate (plot_arb_2) at (\plotoriginx + \plotlength, \plotoriginy+\plotlength*2/3);
 \draw (plot_origin) -- node[below] {\small Confidence} ++ (\plotlength,0);
 \draw (plot_origin) -- node[above, rotate=90] (plot_vert) {\small $\arbitration$} ++ (0,\plotlength);
 \draw[color=blue, thick] (plot_origin) -- (plot_arb_1);
 \draw[color=blue, thick] (plot_arb_1) -- (plot_arb_2);

 \node [draw,rectangle,minimum width=\plotlength cm,minimum height=\plotlength cm,anchor=south west, opacity=0., label={[label distance=0.1cm]above: Arbitration}] (plot_frame) at (plot_origin) {};

 \node[inner sep=0pt,right=2.3cm of plot_frame] (robot_arm)
     {\includegraphics[trim=0 85 0 0, width=.10\textwidth, clip=true]{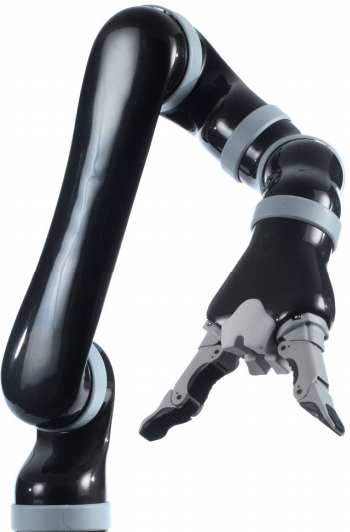}};

 \draw[->, thick] (user) -- node[above] {$\actionuser$} ([xshift=-0.2cm, yshift=-0.2cm]plot_frame.north west);
 \draw[->, thick] (robot) -- node[below] {$\actionrobot$}  ([xshift=-0.2cm, yshift=0.2cm]plot_frame.south west);
 \draw[->, thick] ([xshift=0.1cm]plot_frame.east) -- node[above] {\small $\arbitration a + (1-\arbitration)u$}  (robot_arm);

 \end{tikzpicture}
 \caption{Blend method for shared control teleoperation. The user and robot are both modelled as separate policies $\policyuser$ and $\policyrobot$, each independently providing actions $\actionuser$ and $\actionrobot$ for a single goal. These actions are combined through a specified arbitration function, which generally uses some confidence measure to augment the magnitude of assistance. This combined action is executed on the robot.}
 \label{fig:blend_diagram}
\end{figure}

\begin{figure}
  \begin{tikzpicture} 

 \def \distbetween {1.4}

  \node[inner sep=-2pt] (user) at (0,0)
      {\includegraphics[width=.11\textwidth]{user_icon.pdf}};
  \node[below] at (user.south) (user_label) {$\policyuser$};
  \node[inner sep=-3pt, right=\distbetween cm of user] (robot)
      {\includegraphics[width=.11\textwidth]{robot_icon.pdf}};
  \node[below] at (robot.south) (robot_label) {$\policyrobot$};

  \node[below] at (user_label.south) (user_cost_label) {$\costuser(\staterobgoal, \actionuser)$};
  \node[below] at (robot_label.south) (robot_cost_label) {$\costrobot(\staterobgoal, \actionuser, \actionrobot)$};

  \def \shiftarmim {0.2}
  \node[inner sep=0pt,right=\distbetween cm of robot, yshift=-\shiftarmim cm] (robot_arm)
      {\includegraphics[trim=0 85 0 00, width=.10\textwidth, clip=true]{mico_arm_editted.png}};

  \draw[->, thick] (user)  -- node[above] {$\actionuser$} (robot);
  \draw[->, thick] (robot)  -- node[above] {$\actionuser, \actionrobot$} ([yshift=\shiftarmim cm]robot_arm.west);

  \end{tikzpicture}
 \caption{Policy method for shared control teleoperation. The user is modelled as a policy $\policyuser$, which selects user input $\actionuser$ to minimizes the expected sum of user costs $\costuser(\stateenv, \actionuser)$. The user input $\actionuser$ is provided to the system policy $\policyrobot$, which then selects action $\actionrobot$ to minimize its expected sum of costs cost $\costrobot(\state, \actionuser, \actionrobot)$. Both actions are passed to the robot for execution. Unlike the blend method, the user and robot actions are not treated separately, which can lead to catastrophic failure~\citep{trautman_2015}. Instead, the robot action $\actionrobot$ is optimized given the user action $\actionuser$.}
  \label{fig:policy_diagram}
\end{figure}

\subsection{Human-Robot Teaming}
\label{sec:related_collaboration}


In human-robot teaming, robot action selection that models and optimizes for the human teammate leads to better collaboration. \citet{hoffman_2007} show that using predictions of a human collaborator during action selection led to more efficient task completion and more favorable perception of robot contribution to team success. \citet{lasota_2015} show that planning to avoid portions of the workspace the user will occupy led to faster task completion, less user and robot idling time, greater user satisfaction, and greater perceived safety and comfort. \citet{arai_2010} show that users feel high mental strain when a robot collaborator moves too close or too quickly.

Motion planners have been augmented to include user models and collaboration constraints. For static users, researchers have incorporated collaboration constraints such as safety and social acceptability~\citep{sisbot_2007}, and task constraints such as user visibility and reachability~\citep{sisbot_2010, pandey_2010, mainprice_2011}. For moving users, \citet{mainprice_2013} use a Gaussian mixture model to predict user motion, and select a robot goal that avoids the predicted user locations.



Similar ideas have been used to avoid moving pedestrians. \citet{ziebart_2009} learn a predictor of pedestrian motion, and use this to predict the probability a location will be occupied at each time step. They build a time-varying cost map, penalizing locations likely to be occupied, and optimize trajectories for this cost. \citet{chung_2011} use A* search to predict pedestrian motions, including a model of uncertainty, and plan paths using these predictions. \citet{bandy_2012} use fixed models for pedestrian motions, and focus on utilizing a POMDP framework with SARSOP~\citep{kurniawati_2008} for selecting good actions. Like our approach, this enables them to reason over the entire distribution of potential goals. They show this outperforms utilizing only the maximum likelihood estimate of goal prediction for avoidance. 

Others develop methods for how the human-robot team should be structured. \citet{gombolay_2014} study the effects of having the user and robot assign goals to each other. They find that users were willing to cede decision making to the robot if it resulted in greater team fluency~\citep{gombolay_2014}. However, \citet{gombolay_2017} later show that having the autonomous entity assign goals led to less situational awareness. Inspired by training schemes for human-human teams, \citet{stefanos_2013} present a human-robot cross training method, where the user and robot iteratively switch roles to learn a shared plan. Their model leads to greater team fluency, more concurrent motions, greater perceived robot performance, and greater user trust. \citet{koppula_2013} use conditional random fields to predict the user goal (e.g. grasp cup), and have a robot achieve a complementary goal (e.g. pour water into cup).


Others have studied how robot motions can influence the belief of users. \citet{sisbot_2010} fix the gaze of the robot on its goal to communicate intent. \citet{dragan_2013_legible} incorporate legibility into the motion planner for a robotic arm, causing the robot to exaggerate its motion to communicate intent. They show this leads to more quickly and accurately predicting the robot intent~\citep{dragan_2013_legible_hri}. \citet{rezvani_2016} study the effects of conveying a robot's state (e.g. confidence in action selection, anomaly in detection) directly on a user interface for autonomous driving. 

Recent works have gone one step further, selecting robot actions that not only change the perceptions of users, but also their actions. \citet{nikolaidis_2017_mutual} model how likely users are to adopt the robot's policy based on robot actions. They utilize a POMDP to simultaneously learn this user adaptability while steering users to more optimal goals to achieve greater reward. \citet{nikolaidis_2017_game} present a more general game theoretic approach where users change their actions based on robot actions, while not completely adopting the robot's policy. Similarly, \citet{sadigh_2016} generate motions for an autonomous car using predictions of how other drivers will respond, enabling them to change the behavior of other users, and infer the internal user state~\citep{sadigh_iros2016}.

Teaming with an autonomous agent has also been studied outside of robotics. \citet{fern_2010} have studied MDPs and POMDPs for interactive assistants that suggest actions to users, who then accept or reject each action. They show that optimal action selection even in this simplified model is PSPACE-complete. However, a simple greedy policy has bounded regret. \citet{nguyen_2011} and \citet{macindoe_2012} apply POMDPs to cooperative games, where autonomous agents simultaneously infer human intentions and take assistance actions. Like our approach, they model users as stochastically optimizing an MDP, and solve for assistance actions with a POMDP. In contrast to these works, our state and action spaces are continuous.

\subsection{User Prediction}
\label{sec:related_prediction}


A variety of models and methods have been used for intent prediction. Hidden markov model (HMM) based methods~\citep{li_2003,kragic_2005, aarno_2005_virtualfixtures, aarno_2008} predict subtasks or intent during execution, treating the intent as latent state. \citet{schrempf_2007} use a Bayesian network constructed with expert knowledge. \citet{koppula_2013} extend conditional random fields (CRFs) with object affordance to predict potential human motions. \citet{wang_2013_intentioninference} learn a generative predictor by extending Gaussian Process Dynamical Models (GPDMs) with a latent variable for intention. \citet{hauser_2013} utilizes a Gaussian mixture model over task types (e.g. reach, grasp), and predicts both the task type and continuous parameters for that type (e.g. movements) using Gaussian mixture autoregression.


Many successful works in shared autonomy utilize of maximum entropy inverse optimal control (MaxEnt IOC)~\citep{ziebart_2008} for user goal prediction. Briefly, the user is modelled as a stochastic policy approximately optimizing some cost function. By minimizing the worst-case predictive loss, \citet{ziebart_2008} derive a model where trajectory probability decreases exponentially with cost. They then derive a method for inferring a distribution over goals from user inputs, where probabilities correspond to how efficiently the inputs achieve each goal~\citep{ziebart_2009}. A key advantage of this framework for shared autonomy is that the we can directly optimize for the cost function used to model the user.

Exact, global inference over these distributions is computationally infeasible in continuous state and action spaces. Instead, \citet{levine_2012} provide a method that considers the expert demonstrations as only locally optimal, and utilize Laplace's method about the expert demonstration to estimate the log likelihood during learning. Similarly, \citet{dragan_2013_assistive} use Laplace's method about the optimal trajectory between any two points to approximate the distribution over goals during shared control teleoperation. \citet{finn_2016} simultaneously learn a cost function and policy consistent with user demonstrations using deep neural networks, utilizing importance sampling to approximate inference with few samples. Inspired by Generative Adversarial Nets~\citep{goodfellow_2014}, \citet{ho_2016} directly learn a policy to mimic the user through Generative Adversarial Imitation Learning.

We use the approximation of \citet{dragan_2013_assistive} in our framework due to empirical evidence of effectiveness in shared autonomy systems~\citep{dragan_2013_assistive, muelling_2015}.

\section{Framework}
\label{sec:framework}

\graphicspath{{./}{./images_rss_2015/}{./images_hri_2016/}}


We present our framework for minimizing a cost function for shared autonomy with an unknown user goal. We assume the user's goal is fixed, and they take actions to achieve that goal without considering autonomous assistance. These actions are used to predict the user's goal based on how optimal the action is for each goal (\cref{sec:framework_prediction}). Our system uses this distribution to minimize the expected cost-to-go  (\cref{sec:framework_unknown_goal}). As solving for the optimal action is infeasible, we use hindsight optimization to approximate a solution (\cref{sec:framework_hindsight}). For reference, see \cref{table:variable_definitions} in \cref{sec:variable_definitions} for variable definitions.

\subsection{Cost minimization with a known goal}
\label{sec:framework_known_goal}

We first formulate the problem for a known user goal, which we will use in our solution with an unknown goal. We model this problem as a Markov Decision Process (MDP). 

Formally, let $\stateenv \in \Stateenv$ be the environment state (e.g. human and robot pose). Let $\actionuser \in \Actionuser$ be the user actions, and $\actionrobot \in \Actionrobot$ the robot actions. Both agents can affect the environment state - if the user takes action $\actionuser$ and the robot takes action $\actionrobot$ while in state $\stateenv$, the environment stochastically transitions to a new state $\stateenv'$ through $\transitionallargs$. 

We assume the user has an intended goal $\goal \in \Goal$ which does not change during execution. We augment the environment state with this goal, defined by $\state = \left(\stateenv, \goal\right) \in \Stateenv \times \Goal$. We overload our transition function to model the transition in environment state without changing the goal, $\transition( (\stateenv', g) \given (\stateenv, g), \actionuser, \actionrobot) = \transitionallargs$.

We assume access to a user policy for each goal $\policyuser(\actionuser \given \state) = \policyusergoal(\actionuser \given \stateenv) = p(\actionuser \given \stateenv, \goal)$. We model this policy using the maximum entropy inverse optimal control (MaxEnt IOC) framework of~\citet{ziebart_2008}, where the policy corresponds to stochastically optimizing a cost function $\costuser(\state, \actionuser) = \costusergoal(\stateenv, \actionuser)$. We assume the user selects actions based only on $\state$, the current environment state and their intended goal, and does not model any actions that the robot might take. Details are in \cref{sec:framework_prediction}.

The robot selects actions to minimize a cost function dependent on the user goal and action $\costrobot(\state, \actionuser, \actionrobot) = \costrobotgoal(\stateenv, \actionuser, \actionrobot)$. At each time step, we assume the user first selects an action, which the robot observes before selecting $\actionrobot$. The robot selects actions based on the state and user inputs through a policy $\policyrobot(\actionrobot \given \state, \actionuser) = p(\actionrobot \given \state, \actionuser)$. We define the value function for a robot policy $\vrobot^{\policyrobot}$ as the expected cost-to-go from a particular state, assuming some user policy $\policyuser$:
\begin{align*}
  \vrobot^{\policyrobot}(\state) &= \expctarg{\sumtime \costrobot(\state_t, \actionuser_t, \actionrobot_t) \given \state_0 = \state}\\
  \actionuser_t &\sim \policyuser(\cdot \given \state_t)\\
  \actionrobot_t &\sim \policyrobot(\cdot \given \state_t, \actionuser_t)\\
  \state_{t+1} &\sim \transition(\cdot \given \state_t, \actionuser_t, \actionrobot_t)
\end{align*}

The optimal value function $\vopt$ is the cost-to-go for the best robot policy:
\begin{align*}
  \vopt(\state) &= \min_{\policyrobot} \vrobot^{\policyrobot}(\state)
\end{align*}

The action-value function $\qopt$ computes the immediate cost of taking action $\actionrobot$ after observing $\actionuser$, and following the optimal policy thereafter:
\begin{align*}
  \qopt(\stateactions) &= \costrobot(\stateactions) + \expctarg{\vopt(\state')}
\end{align*}
Where $\state' \sim \transition(\cdot \given \stateactions)$. The optimal robot action is given by $\argmin_\actionrobot \qopt(\stateactions)$.

In order to make explicit the dependence on the user goal, we often write these quantities as:
\begin{align*}
  \vgoal(\stateenv) &= \vopt(\state)\\
  \qgoal(\stateenvactions) &= \qopt(\stateactions)
\end{align*}

Computing the optimal policy and corresponding action-value function is a common objective in reinforcement learning. We assume access to this function in our framework, and describe our particular implementation in the experiments.

%

\subsection{Cost Minimization with an unknown goal}
\label{sec:framework_unknown_goal}

We formulate the problem of minimizing a cost function with an unknown user goal as a Partially Observable Markov Decision Process (POMDP). A POMDP maps a distribution over states, known as the \emph{belief} $\belief$, to actions. We assume that all uncertainty is over the user's goal, and the environment state is known. This subclass of POMDPs, where uncertainty is constant, has been studied as a Hidden Goal MDP~\citep{fern_2010}, and as a POMDP-lite~\citep{chen_2016}.

In this framework, we infer a distribution of the user's goal by observing the user actions $\actionuser$. Similar to the known-goal setting (\cref{sec:framework_known_goal}), we define the value function of a belief as:
\begin{align*}
  \vrobot^{\policyrobot}(\belief) &= \expctarg{\sumtime \costrobot(\state_t, \actionuser_t, \actionrobot_t)  \given \belief_0 = \belief} \\
  \state_t &\sim \belief_t\\
  \actionuser_t &\sim \policyuser(\cdot \given \state_t)\\
  \actionrobot_t &\sim \policyrobot(\cdot \given \state_t, \actionuser_t)\\
  \belief_{t+1} &\sim \transitionbelief(\cdot \given \belief_t, \actionuser_t, \actionrobot_t)
\end{align*}
Where the belief transition $\transitionbelief$ corresponds to transitioning the known environment state $\stateenv$ according to $\transition$, and updating our belief over the user's goal as described in $\cref{sec:framework_prediction}$. We can define quantities similar to above over beliefs:
\begin{align}
  \vopt(\belief) &= \min_{\policyrobot} \vrobot^{\policyrobot}(\belief) \label{eq:v_belief}\\
  \qopt(\beliefactions) &= \expctarg{\costrobot(\belief, \actionuser, \actionrobot) + \expctover{\belief'}{\vopt(\belief')}} \nonumber
\end{align}

\subsection{Hindsight Optimization}
\label{sec:framework_hindsight}

Computing the optimal solution for a POMDP with continuous states and actions is generally intractable. Instead, we approximate this quantity through \emph{Hindsight Optimization}~\citep{chong_2000,yoon_2008}, or QMDP~\citep{littman_1995}. This approximation estimates the value function by switching the order of the min and expectation in \cref{eq:v_belief}:
\begin{align*}
  \vhs(\belief) &= \expctover{\belief}{\min_{\policyrobot} \vrobot^{\policyrobot}(\state)}\\
  &= \expctover{\goal}{\vgoal(\stateenv)}\\
  \qhs(\beliefactions) &= \expctover{\belief}{\costrobot(\stateactions) + \expctover{\state'}{\vhs(\state')}}\\
  &= \expctover{\goal}{\qgoal(\stateenvactions)}
\end{align*}

Where we explicitly take the expectation over $\goal \in \Goal$, as we assume that is the only uncertain part of the state.

Conceptually, this approximation corresponds to assuming that all uncertainty will be resolved at the next timestep, and computing the optimal cost-to-go. As this is the best case scenario for our uncertainty, this is a lower bound of the cost-to-go, $\vhs(\belief) \leq \vopt(\belief)$. Hindsight optimization has demonstrated effectiveness in other domains~\citep{yoon_2007, yoon_2008}. However, as it assumes uncertainty will be resolved, it never explicitly gathers information~\citep{littman_1995}, and thus performs poorly when this is necessary.

We believe this method is suitable for shared autonomy for many reasons. Conceptually, we assume the user provides inputs at all times, and therefore we gain information without explicit information gathering. Works in other domains with similar properties have shown that this approximation performs comparably to methods that consider explicit information gathering~\citep{koval_2014}. Computationally, computing $\qhs$ can be done with continuous state and action spaces, enabling fast reaction to user inputs. 



Computing $\qgoal$ for shared autonomy requires utilizing the user policy $\policyusergoal$, which can make computation difficult. This can be alleviated with the following approximations:
\subsubsection*{Stochastic user with robot}
Estimate $\actionuser$ using $\policyusergoal$ at each time step, e.g. by sampling, and utilize the full cost function $\costrobotgoal(\stateenvactions)$ and transition function $\transitionallargs$ to compute $\qgoal$. This would be the standard QMDP approach for our POMDP.

\subsubsection*{Deterministic user with robot}
Estimate $\actionuser$ as the most likely $\actionuser$ from $\policyusergoal$ at each time step, and utilize the full cost function $\costrobotgoal(\stateenvactions)$ and transition function $\transitionallargs$ to compute $\qgoal$. This uses our policy predictor, as above, but does so deterministically, and is thus more computationally efficient.

\subsubsection*{Robot takes over}
Assume the user will stop supplying inputs, and the robot will complete the task. This enables us to use the cost function $\costrobotgoal(\stateenv, 0, \actionrobot)$ and transition function $\transition(\stateenv' \given \stateenv, 0, \actionrobot)$ to compute $\qgoal$. For many cost functions, we can analytically compute this value, e.g. cost of always moving towards the goal at some velocity. An additional benefit of this method is that it makes no assumptions about the user policy $\policyusergoal$, making it more robust to modelling errors. We use this method in our experiments.

Finally, as we often cannot calculate $\argmax_{\actionrobot} \qhs(\beliefactions)$ directly, we use a first-order approximation, which leads to us to following the gradient of $\qhs(\beliefactions)$.



\subsection{User Prediction}
\label{sec:framework_prediction}

In order to infer the user's goal, we rely on a model $\policyusergoal$ to provide the distribution of user actions at state $\stateenv$ for user goal $\goal$. In principle, we could use any generative predictor for this model, e.g.~\citep{koppula_2013, wang_2013_intentioninference}. We choose to use maximum entropy inverse optimal control (MaxEnt IOC)~\citep{ziebart_2008}, as it explicitly models a user cost function $\costusergoal$. We optimize this directly by defining $\costrobotgoal$ as a function of $\costusergoal$.

In this work, we assume the user does not model robot actions. We use this assumption to define an MDP with states $\stateenv \in \Stateenv$ and user actions $\actionuser \in \Actionuser$ as before, transition $\transitionuser(\stateenv' \given \stateenv, \actionuser) = \transition(\stateenv' \given \stateenv, \actionuser, 0)$, and cost $\costusergoal(\stateenv, \actionuser)$. MaxEnt IOC computes a stochastically optimal policy for this MDP.

The distribution of actions at a single state are computed based on how optimal that action is for minimizing cost over a horizon $T$. Define a sequence of environment states and user inputs as $\traj = \left\{ \stateenv_0, \actionuser_0, \cdots, \stateenv_T, \actionuser_T \right\}$. Note that sequences are not required to be trajectories, in that $\stateenv_{t+1}$ is not necessarily the result of applying $\actionuser_t$ in state $\stateenv_t$. Define the cost of a sequence as the sum of costs of all state-input pairs, $\costgoaluser(\traj) = \sum_{t} \costgoaluser(\stateenv_t, \actionuser_t)$. Let $\trajtot$ be a sequence from time $0$ to $t$, and $\trajat{\stateenv}$ a sequence of from time $t$ to $T$, starting at $\stateenv$.

\citet{ziebart_thesis} shows that minimizing the worst-case predictive loss results in a model where the probability of a sequence decreases exponentially with cost, $p(\traj \given \goal) \propto \exp(-\costgoaluser(\traj))$. Importantly, one can efficiently learn a cost function consistent with this model from demonstrations~\citep{ziebart_2008}.

Computationally, the difficulty in computing $p(\traj \given \goal)$ lies in the normalizing constant $\int_{\traj} \exp(-\costgoaluser(\traj))$, known as the partition function. Evaluating this explicitly would require enumerating all sequences and calculating their cost. However, as the cost of a sequence is the sum of costs of all state-action pairs, dynamic programming can be utilized to compute this through soft-minimum value iteration when the state is discrete~\citep{ziebart_2009,ziebart_2012}:
\begin{align*}
  \qgoalsoftt{t}(\stateenv, \actionuser) &= \costgoaluser(\stateenv, \actionuser) + \expctarg{\vgoalsoftt{t+1}(\stateenv')}\\
  \vgoalsoftt{t}(\stateenv) &= \softmin_{\actionuser} \qgoalsoftt{t}(\stateenv, \actionuser)
\end{align*}
Where $\softmin_{x} f(x) = - \log \int_{x} \exp(-f(x)) dx$ and $\stateenv' \sim \transitionuser(\cdot \given \stateenv, \actionuser)$.

The log partition function is given by the soft value function, $\vgoalsoftt{t}(\stateenv) = - \log \int_{\trajat{\stateenv}} \exp\left(-\costgoaluser(\trajat{\stateenv})\right)$, where the integral is over all sequences starting at $\stateenv$ and time $t$. Furthermore, the probability of a single input at a given environment state is given by $\policyuser_t(\actionuser \given \stateenv, \goal) = \exp(\vgoalsoftt{t}(\stateenv) -\qgoalsoftt{t}(\stateenv, \actionuser))$~\citep{ziebart_2009}.

Many works derive a simplification that enables them to only look at the start and current states, ignoring the inputs in between~\citep{ziebart_2012, dragan_2013_assistive}. Key to this assumption is that $\traj$ corresponds to a trajectory, where applying action $\actionuser_t$ at $\stateenv_t$ results in $\stateenv_{t+1}$. However, if the system is providing assistance, this may not be the case. In particular, if the assistance strategy believes the user's goal is $\goal$, the assistance strategy will select actions to minimize $\costusergoal$. Applying these simplifications will result positive feedback, where the robot makes itself more confident about goals it already believes are likely. In order to avoid this, we ensure that the prediction comes from user inputs only, and not robot actions:
\begin{align*}
  p(\traj \given \goal) &= \prod_t \policyuser_t(\actionuser_{t} \given \stateenv_t, \goal)
\end{align*}
To compute the probability of a goal given the partial sequence up to $t$, we apply Bayes' rule:
\begin{align*}
  p(\goal \given \trajtot) &= \frac{p(\trajtot \given \goal) p(\goal) }{\sum_{\goal'} p(\trajtot \given \goal') p(\goal')}
\end{align*}
This corresponds to our POMDP observation model, used to transition our belief over goals through $\transitionbelief$.

\subsubsection{Continuous state and action approximation}
Soft-minimum value iteration is able to find the exact partition function when states and actions are discrete. However, it is computationally intractable to apply in continuous state and action spaces. Instead, we follow \citet{dragan_2013_assistive} and use a second order approximation about the optimal trajectory. They show that, assuming a constant Hessian, we can replace the difficult to compute soft-min functions $\vgoalsoft$ and $\qgoalsoft$ with the min value and action-value functions $\vgoaluser$ and $\qgoaluser$:
\begin{align*}
  \policyuser_t(\actionuser \given \stateenv, \goal) &= \exp(\vgoaluser(\stateenv) -\qgoaluser(\stateenv, \actionuser))
\end{align*}
Recent works have explored extensions of the MaxEnt IOC model for continuous spaces~\citep{boularias_2011, levine_2012, finn_2016}. We leave experiments using these methods for learning and prediction as future work.

\subsection{Multi-Target MDP}
\label{sec:framework_multitarget}

There are often multiple ways to achieve a goal. We refer to each of these ways as a \emph{target}. For a single goal (e.g. object to grasp), let the set of targets (e.g. grasp poses) be $\target \in \Target$. We assume each target has a cost function $\costtarg$, from which we compute the corresponding value and action-value functions $\vtarg$ and $\qtarg$, and soft-value functions $\vtargsoft$ and $\qtargsoft$. We derive the quantities for goals, $\vgoal, \qgoal, \vgoalsoft, \qgoalsoft$, as functions of these target functions.

We state the theorems below, and provide proofs in the appendix (\cref{sec:mingoal_thms}).

\subsubsection{Multi-Target Assistance}
\label{sec:framework_multigarget_assistance}
We assign the cost of a state-action pair to be the cost for the target with the minimum cost-to-go after this state:
\begin{align}
  \costgoal(\stateenvactions) &= \costtargstar(\stateenvactions) \quad \target* = \argmin_\target \vtarg(\stateenv') \label{eq:goal_target_cost}
\end{align}
Where $\stateenv'$ is the environment state after actions $\actionuser$ and $\actionrobot$ are applied at state $\stateenv$. For the following theorem, we require that our user policy be deterministic, which we already assume in our approximations when computing robot actions in \cref{sec:framework_hindsight}.
\begin{restatable}{theorem}{valfundecompose}
\label{thm:mingoal_assist}
Let $\vtarg$ be the value function for target $\target$. Define the cost for the goal as in \cref{eq:goal_target_cost}. For an MDP with deterministic transitions, and a deterministic user policy $\policyuser$, the value and action-value functions $\vgoal$ and $\qgoal$ can be computed as:
\begin{align*}
  \qgoal(\stateenvactions) &= \qtargstar(\stateenvactions) \qquad \target^* = \argmin_\target \vtarg(\stateenv') \\
  \vgoal(\stateenv) &= \min_\target \vtarg(\stateenv)
\end{align*}
\end{restatable}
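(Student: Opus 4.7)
The plan is to exhibit closed-form candidate functions and show they satisfy the same Bellman recursion as $\vgoal$ and $\qgoal$, then conclude by uniqueness of the Bellman fixed point. Define
\[
\tilde V(\stateenv) := \min_\target \vtarg(\stateenv), \qquad \tilde Q(\stateenvactions) := \qtargstar(\stateenvactions) \text{ with } \target^* = \argmin_\target \vtarg(\stateenv'),
\]
and verify the two consistency conditions (i) $\tilde Q(\stateenvactions) = \costgoal(\stateenvactions) + \tilde V(\stateenv')$ and (ii) $\tilde V(\stateenv) = \min_{\actionrobot} \tilde Q(\stateenvactions)$.

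Condition (i) is essentially by inspection. Under deterministic transitions, the standard Bellman relation for any single target gives $\qtarg(\stateenvactions) = \costtarg(\stateenvactions) + \vtarg(\stateenv')$; specializing to $\target^*$ and using both $\costtargstar(\stateenvactions) = \costgoal(\stateenvactions)$ (the goal-cost definition in \cref{eq:goal_target_cost}) and $\vtargstar(\stateenv') = \min_\target \vtarg(\stateenv') = \tilde V(\stateenv')$ (from the argmin choice of $\target^*$) yields the required identity.

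Condition (ii) is the heart of the proof. One direction is immediate: for every $\actionrobot$, $\qtargstar(\stateenvactions) \geq \min_\target \qtarg(\stateenvactions)$ since $\target^*$ is just one valid target, so $\min_\actionrobot \tilde Q(\stateenvactions) \geq \min_{\target,\actionrobot} \qtarg(\stateenvactions) = \min_\target \vtarg(\stateenv) = \tilde V(\stateenv)$. For the reverse inequality I will take a joint minimizer $(\target_0,\actionrobot_0)$ of $\qtarg(\stateenvactions)$ — so that $\qtarg_0(\stateenvactions) = \vtarg_0(\stateenv) = \tilde V(\stateenv)$ — and argue that $\target_0$ also attains $\argmin_\target \vtarg(\stateenv'(\actionrobot_0))$, so that plugging $\actionrobot_0$ into $\tilde Q$ produces exactly $\qtarg_0(\stateenvactions) = \tilde V(\stateenv)$. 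Establishing this target-monotonicity property — that the target attaining $\min_\target \vtarg(\stateenv)$ still attains $\min_\target \vtarg(\stateenv')$ when $\stateenv'$ is reached via that target's optimal robot action — is the main obstacle, and is where the hypotheses of deterministic transitions and a deterministic $\policyuser$ are essential: they make $\stateenv'(\actionrobot_0)$ a single well-defined state rather than a distribution over which the argmin could fluctuate across realizations.

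Given (i) and (ii), a routine uniqueness-of-fixed-point argument (or equivalently a backward induction on a finite-horizon truncation) shows that $(\tilde V, \tilde Q)$ coincide with $(\vgoal, \qgoal)$, proving both statements of the theorem.
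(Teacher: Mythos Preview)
Your overall architecture (verify the Bellman recursion for the candidates, then invoke uniqueness) is sound, and the final uniqueness/induction step is essentially what the paper does as well. Condition (i) is indeed immediate. The gap is in condition (ii), reverse direction.

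You correctly flag the ``target-monotonicity'' step as the main obstacle, but you do not prove it, and it can fail under the stated hypotheses. Determinism of $\transition$ and of $\policyuser$ only guarantees that $\stateenv'(\actionrobot_0)$ is a single well-defined state; it does not force $\argmin_\target \vtarg$ to be preserved across that step. Concretely: take two targets with $V_{\target_0}(\stateenv) < V_{\target_1}(\stateenv)$, and let the optimal action $\actionrobot_0$ for $\target_0$ lead to a state $\stateenv'$ that happens to lie very close to $\target_1$, so that $V_{\target_1}(\stateenv') < V_{\target_0}(\stateenv')$. Then $\target_0$ no longer attains the argmin at $\stateenv'$, and plugging $\actionrobot_0$ into $\tilde Q$ returns $Q_{\target_1}(\stateenv,\actionuser,\actionrobot_0)$, which can exceed $\tilde V(\stateenv)$ whenever $C_{\target_1}(\stateenv,\actionuser,\actionrobot_0)$ is large. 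Nothing in the hypotheses rules this out once the one-step costs $\costtarg$ genuinely differ across targets.

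The paper does not attempt to establish $\min_{\actionrobot}\tilde Q = \tilde V$ directly. Its backward induction only yields the easy direction, $\vgoalt{t-1}(\stateenv) \geq \min_\target \vtargt{t-1}(\stateenv)$. For the reverse inequality it abandons the recursion and argues globally: for any fixed target $\target$, following that target's optimal policy is a feasible policy in the goal MDP, so $\vgoal(\stateenv) \leq \vtarg(\stateenv)$ for every $\target$, hence $\vgoal \leq \min_\target \vtarg$. These two bounds sandwich $\vgoal$, and $\qgoal = \qtargstar$ then follows from a single Bellman backup. You should replace the monotonicity claim with this feasibility upper bound, or else supply an actual proof of monotonicity under whatever additional structure on the $\costtarg$ you are willing to assume.
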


\subsubsection{Multi-Target Prediction}
\label{sec:framework_multigarget_prediction}
Here, we don't assign the goal cost to be the cost of a single target $\costtarg$, but instead use a distribution over targets.
\begin{restatable}{theorem}{softvalfundecompose}
  \label{thm:mingoal_pred}
  Define the probability of a trajectory and target as $p(\traj, \target) \propto \exp(-\costtarg(\traj))$. Let $\vtargsoft$ and $\qtargsoft$ be the soft-value functions for target $\target$. For an MDP with deterministic transitions, the soft value functions for goal $\goal$, $\vgoalsoft$ and $\qgoalsoft$, can be computed as:
\begin{align*}
  \vgoalsoft(\stateenv) &= \softmin_\target \vtargsoft(\stateenv)\\
  \qgoalsoft(\stateenv, \actionuser) &= \softmin_\target \qtargsoft(\stateenv, \actionuser)
\end{align*}
\end{restatable}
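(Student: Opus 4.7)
The plan is to establish both claimed identities through a single partition-function computation, using the fact that the joint specification $p(\traj, \target) \propto \exp(-\costtarg(\traj))$ makes the trajectory distribution under the goal a finite mixture of the per-target distributions. Marginalizing out $\target$ gives
$$\exp(-\costgoal(\traj)) \;\propto\; \sum_{\target} \exp(-\costtarg(\traj)),$$
so the goal partition function over any fixed set of trajectories decomposes additively as a sum of target partition functions. Everything else is a matter of carrying this identity through the soft value-iteration recursion.

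First I would establish the partition-function interpretation of the soft value functions. Under deterministic transitions, a backward induction on the remaining horizon shows that
$$\exp(-\vtargsoftt{t}(\stateenv)) \;=\; \int_{\trajat{\stateenv}} \exp\!\bigl(-\costtarg(\trajat{\stateenv})\bigr)\, d\trajat{\stateenv},$$
and analogously
$$\exp(-\qtargsoftt{t}(\stateenv,\actionuser)) \;=\; \exp(-\costtarg(\stateenv,\actionuser))\int_{\trajatp{\stateenv'}} \exp\!\bigl(-\costtarg(\trajatp{\stateenv'})\bigr)\, d\trajatp{\stateenv'},$$
where $\stateenv'$ is the deterministic successor of $(\stateenv,\actionuser)$. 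The base case at $t=T$ is immediate, and the inductive step turns $\softmin_{\actionuser} = -\log\int \exp(-\cdot)$ at the level of actions into an integral over the next action, while the determinism of $\transitionuser$ lets the expectation over $\stateenv'$ collapse to substitution so the integrals concatenate into a single integral over $\trajat{\stateenv}$.

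Second, I apply this identity to the goal. Using $\exp(-\costgoal(\traj)) = \sum_\target \exp(-\costtarg(\traj))$ and exchanging the finite sum over $\target$ with the integral over trajectories gives
$$\exp(-\vgoalsoft(\stateenv)) \;=\; \int_{\trajat{\stateenv}} \sum_{\target} \exp\!\bigl(-\costtarg(\trajat{\stateenv})\bigr)\, d\trajat{\stateenv} \;=\; \sum_{\target} \exp(-\vtargsoft(\stateenv)),$$
and taking $-\log$ yields $\vgoalsoft(\stateenv) = \softmin_{\target} \vtargsoft(\stateenv)$. The same swap restricted to trajectories beginning with $(\stateenv,\actionuser)$ delivers the $\qgoalsoft$ identity, since the factor $\exp(-\costtarg(\stateenv,\actionuser))$ sits inside the sum over $\target$ and likewise marginalizes to the goal's state-action factor.

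The main obstacle I expect is the first step: cleanly justifying that the recursive soft-min iteration really equals the trajectory partition function. This is precisely where the deterministic-transition hypothesis is used. With stochastic dynamics the soft-Bellman backup interleaves a hard expectation over $\stateenv'$ inside a $\softmin$ over $\actionuser$, which is not the same operation as marginalizing trajectory probabilities; the partition-function identity fails and, with it, the clean additivity over targets. Once this lemma is in hand, exchanging the finite sum with the integral is routine (nonnegativity of the integrand makes Tonelli automatic), and both claims of the theorem fall out simultaneously.
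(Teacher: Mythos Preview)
Your proposal is correct and rests on the same core identity the paper uses: the soft value function is the negative log partition function over trajectories, so summing the unnormalized trajectory weights over targets and then taking $-\log$ produces $\softmin_\target$. The organization differs slightly. The paper routes the computation through the conditional policy: it writes $\policyuser(\actionuser_t,\target \mid \stateenv_t)$ using the partition-function identity (which it quotes from the prediction section rather than re-proving), marginalizes out $\target$ to obtain $\policyuser(\actionuser_t \mid \stateenv_t)$ in the form $\exp(\softmin_\target \vtargsoft - \softmin_\target \qtargsoft)$, and then identifies $\vgoalsoft$ and $\qgoalsoft$ from the defining relation $\policyuser(\actionuser \mid \stateenv,\goal)=\exp(\vgoalsoft-\qgoalsoft)$. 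You instead work directly at the level of partition functions, which is a bit cleaner: it avoids the final identification step (which, strictly speaking, only pins down $\vgoalsoft$ and $\qgoalsoft$ up to a common additive constant) and makes the role of the deterministic-transition hypothesis explicit in the backward-induction lemma. Either route is fine; yours is marginally more self-contained.
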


%
%
%

\begin{figure}[t]
\centering
 \begin{subfigure}{0.24\textwidth}
   \centering 
   \includegraphics[width=0.97\textwidth, trim=440 250 500 210, clip=true]{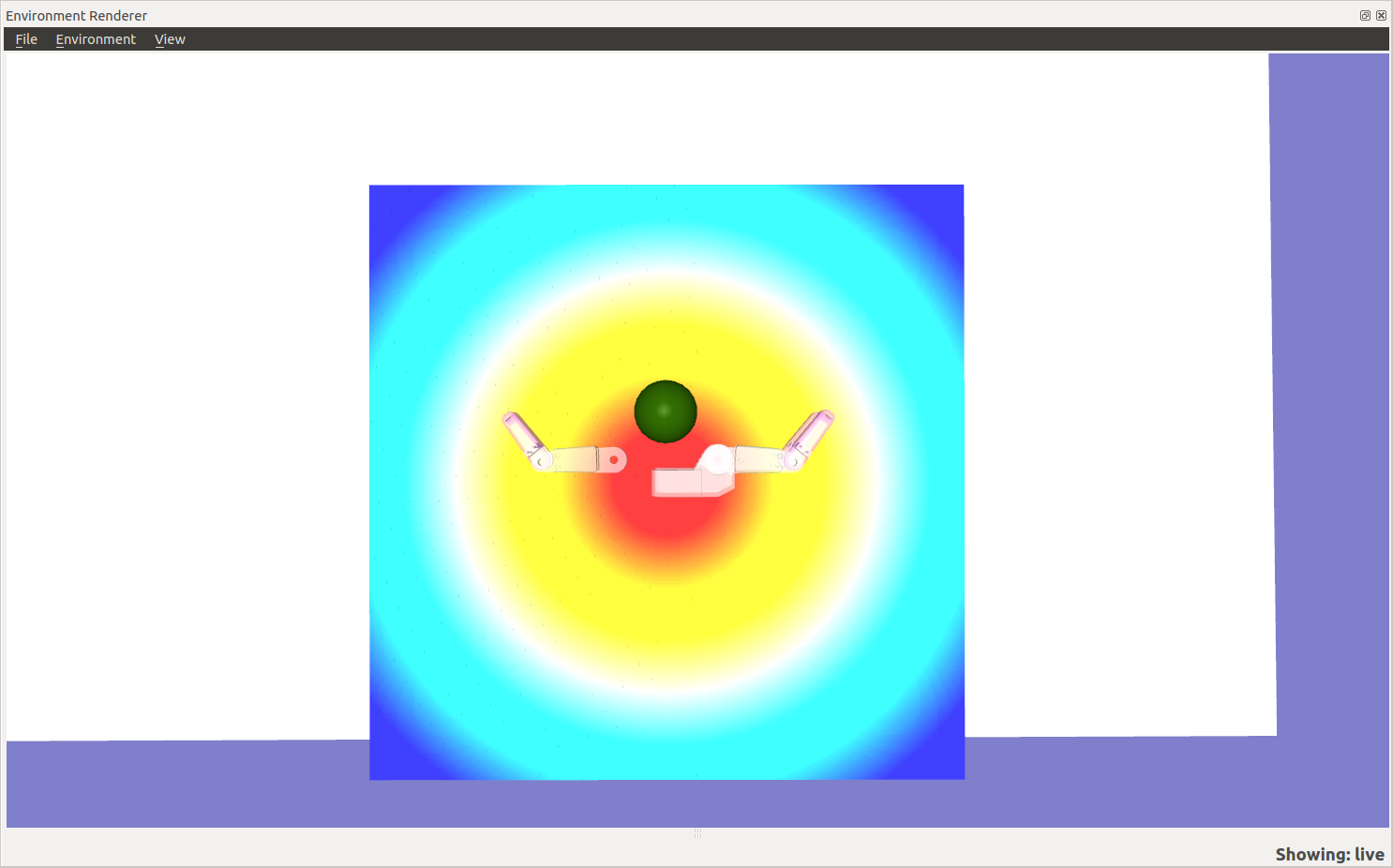}
  \caption{}
 \label{fig:multigoal_1}
 \end{subfigure}
 \begin{subfigure}{0.24\textwidth}
   \centering 
   \includegraphics[width=0.97\textwidth, trim=440 250 500 210, clip=true]{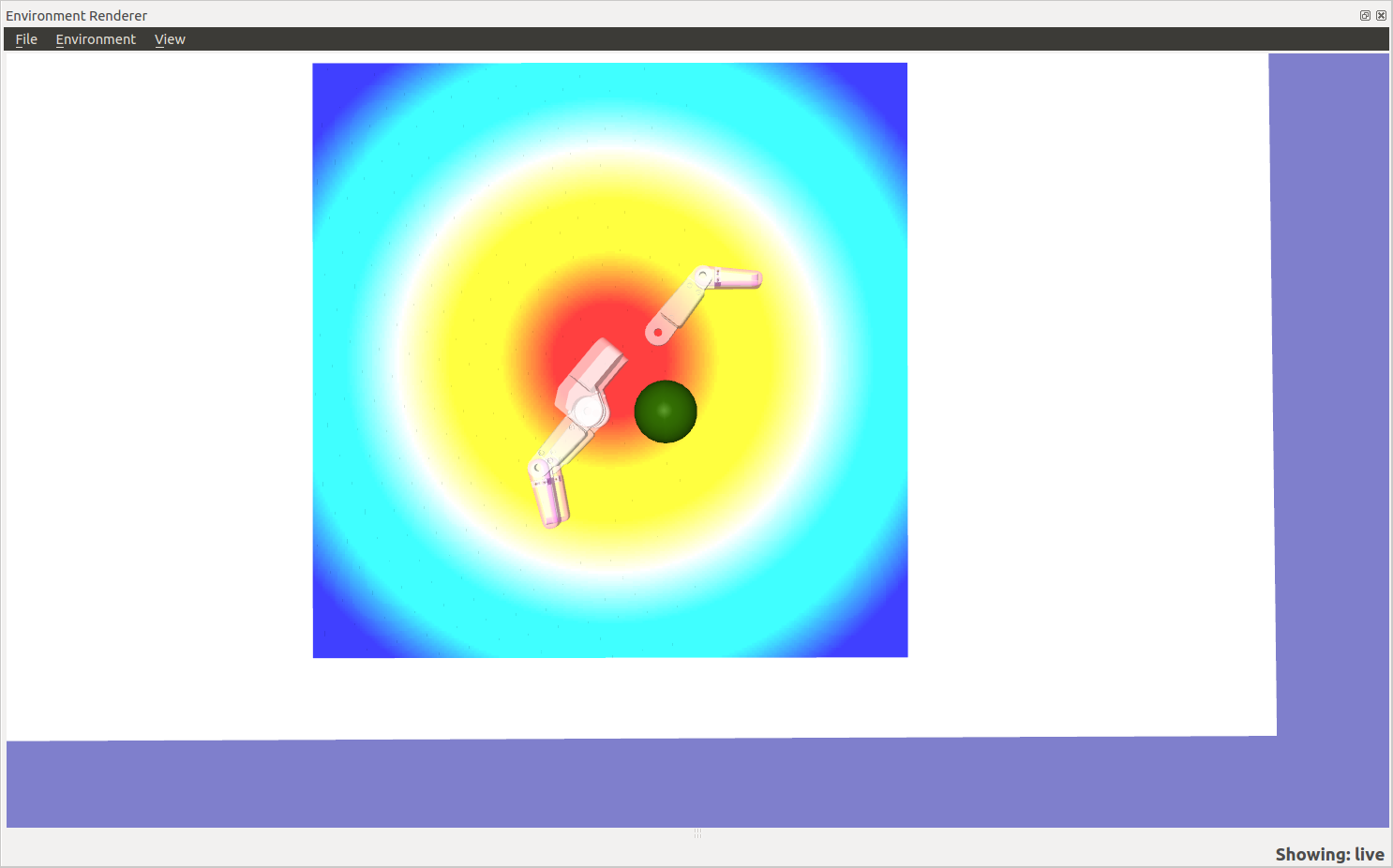}
  \caption{}
 \label{fig:multigoal_2}
 \end{subfigure}
 \begin{subfigure}{0.24\textwidth}
   \centering 
   \includegraphics[width=0.97\textwidth, trim=440 250 500 210, clip=true]{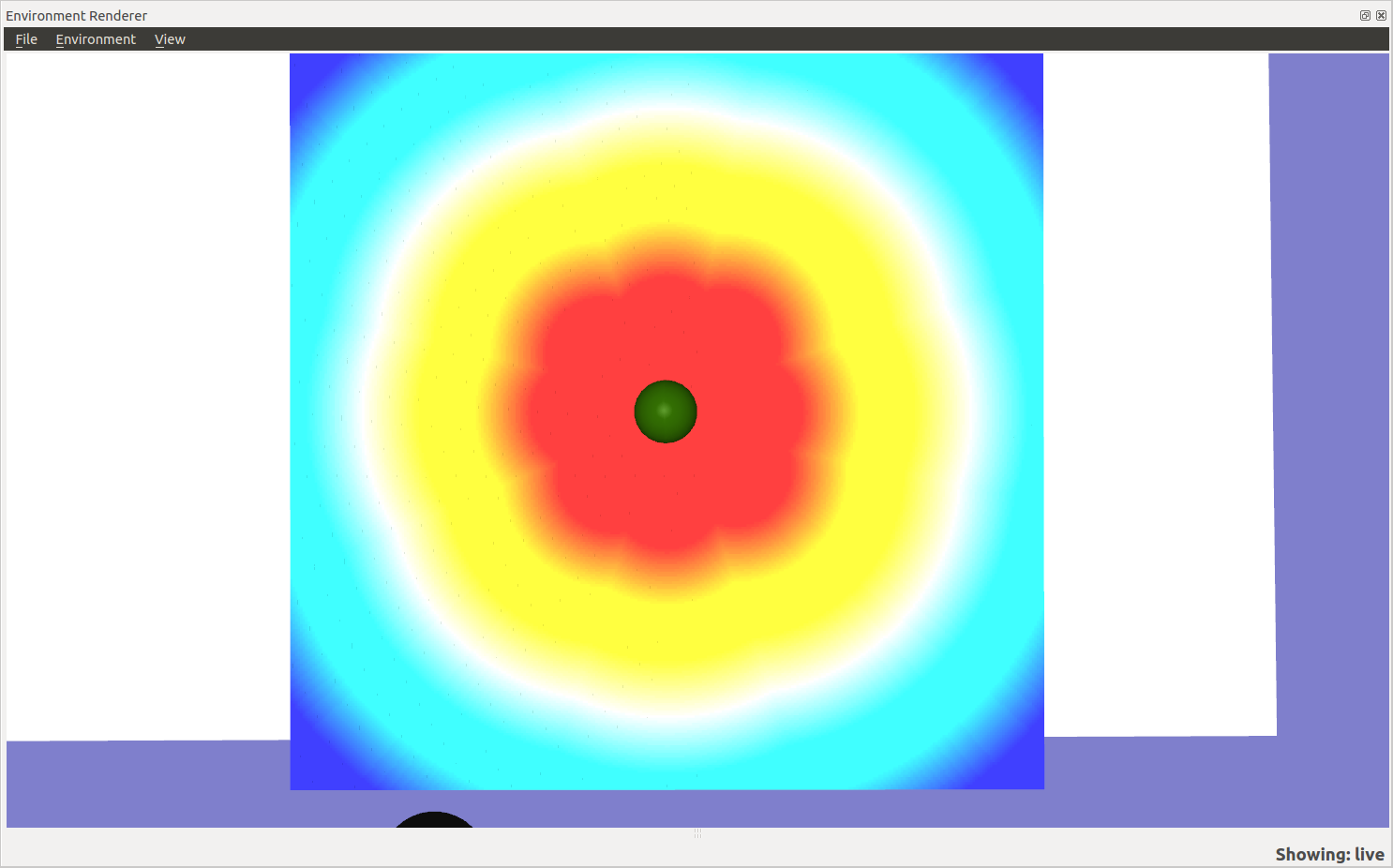}
  \caption{}
 \label{fig:multigoal_3_arb}
 \end{subfigure}
 \begin{subfigure}{0.24\textwidth}
   \centering 
   \includegraphics[width=0.97\textwidth, trim=440 250 500 210, clip=true]{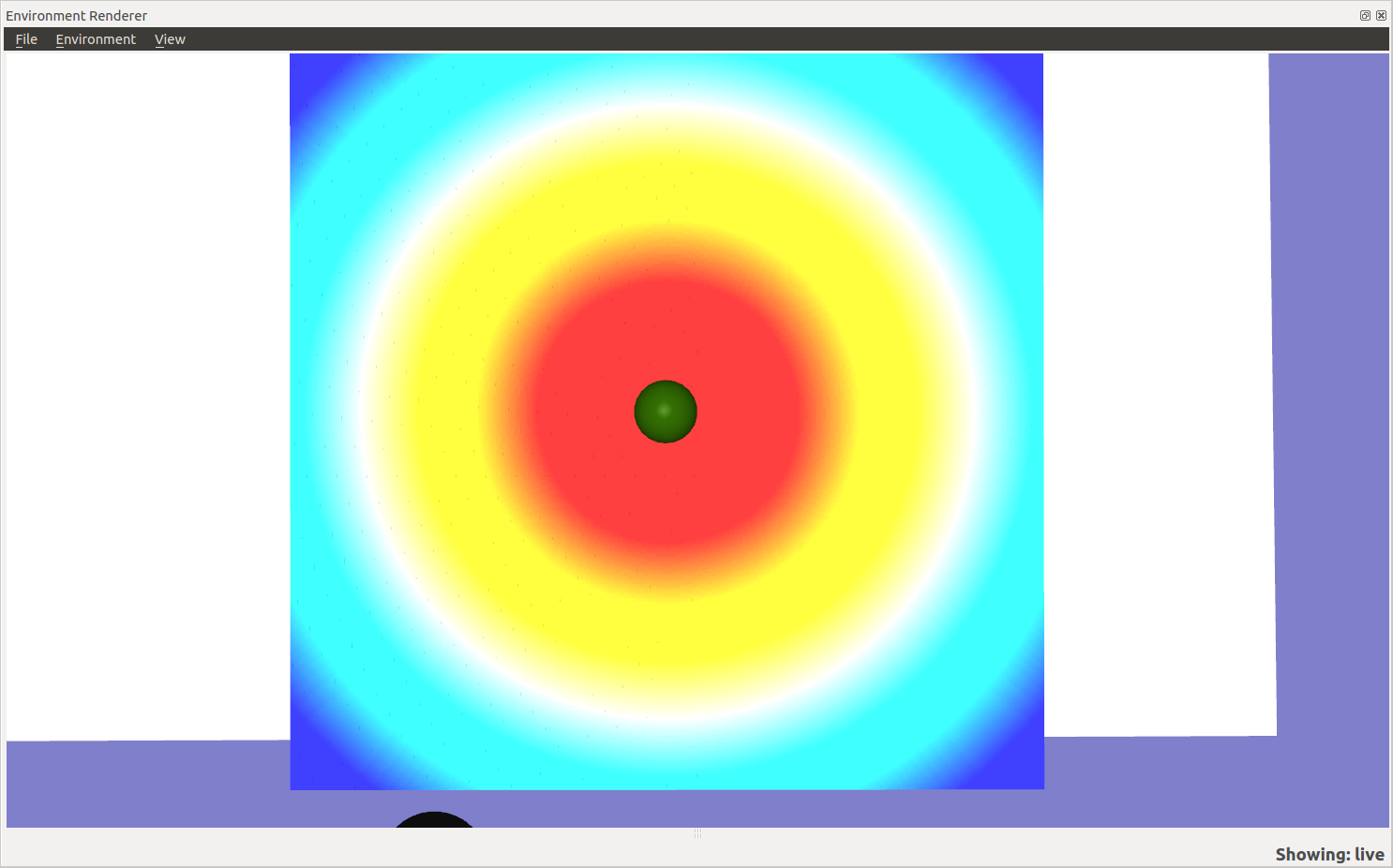}
  \caption{}
 \label{fig:multigoal_3_pred}
 \end{subfigure}
 \caption{Value function for a goal (grasp the ball) decomposed into value functions of targets (grasp poses). (\subref{fig:multigoal_1}, \subref{fig:multigoal_2}) Two targets and their corresponding value function $\vtarg$. In this example, there are 16 targets for the goal. (\subref{fig:multigoal_3_arb}) The value function of a goal $\vgoal$ used for assistance, corresponding to the minimum of all 16 target value functions (\subref{fig:multigoal_3_pred}) The soft-min value function $\vgoalsoft$ used for prediction, corresponding to the soft-min of all 16 target value functions.}
 \label{fig:multigoal}
\end{figure}

\section{Shared Control Teleoperation}
\label{sec:shared_teleop}

\graphicspath{{./}{./images_rss_2015/}}

\begin{figure*}[t]
\centering
  \begin{subfigure}{0.32\textwidth}
    \centering 
    \begin{tikzpicture}[every node/.style={anchor=south west,inner sep=0pt}, x=1mm, y=1mm,]    
      \node {\includegraphics[width=1.0\textwidth, trim=250 150 200 190, clip=true]{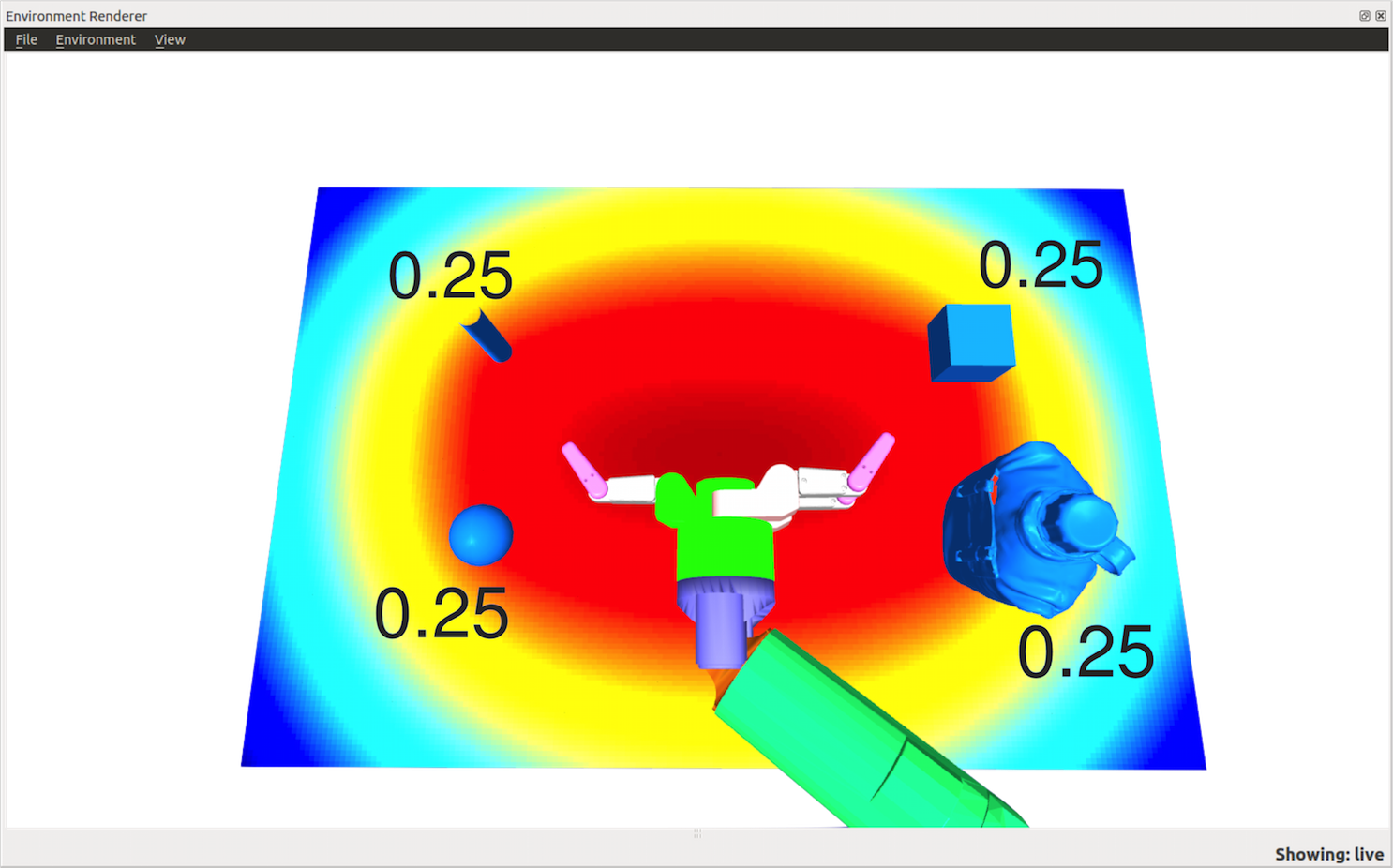}} ;
      \node [opacity=0.6]{\includegraphics[width=1.0\textwidth, trim=250 150 200 190, clip=true]{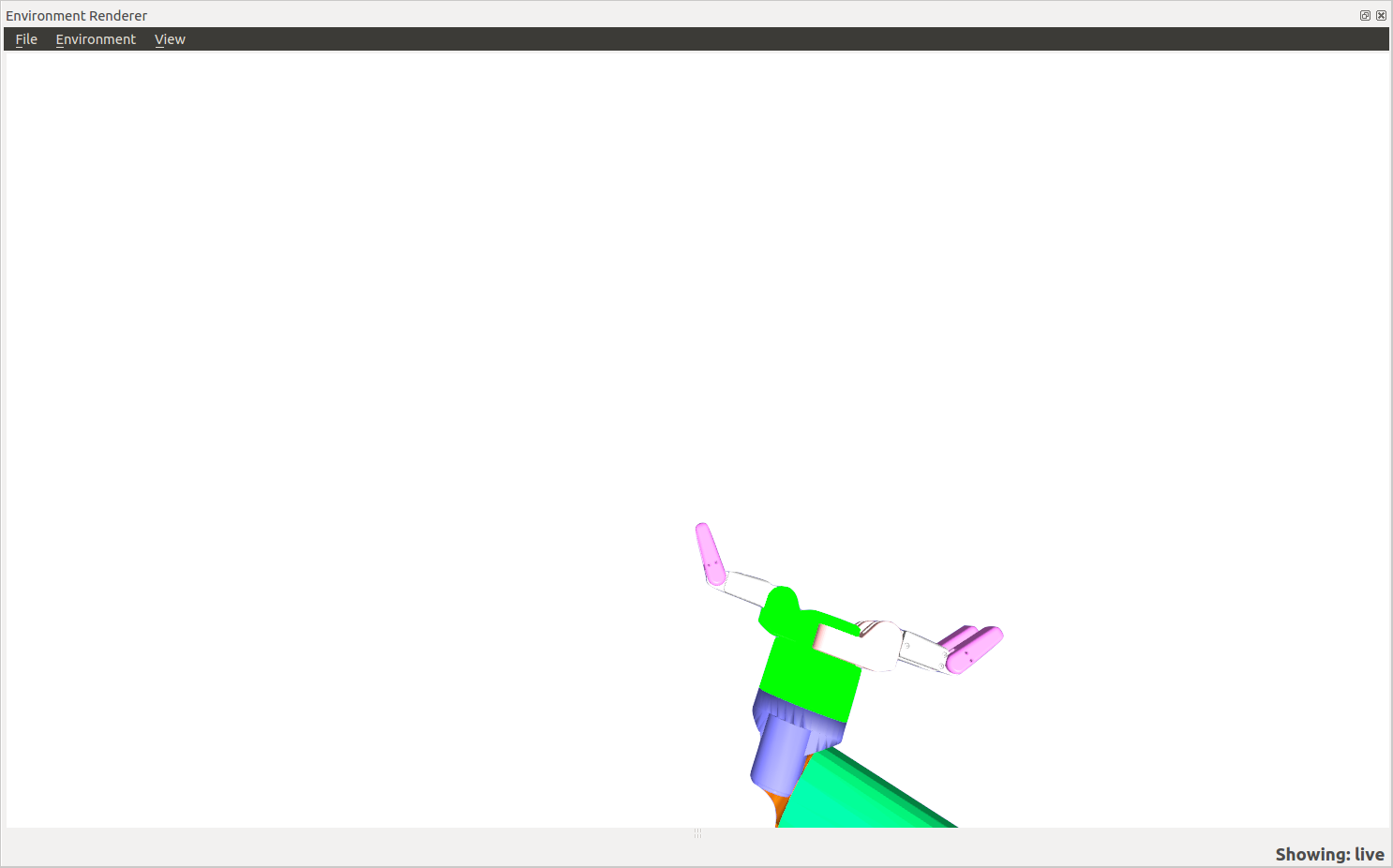} };
    \end{tikzpicture}
    \includegraphics[width=0.7\textwidth, trim=150 515 640 150, clip=true]{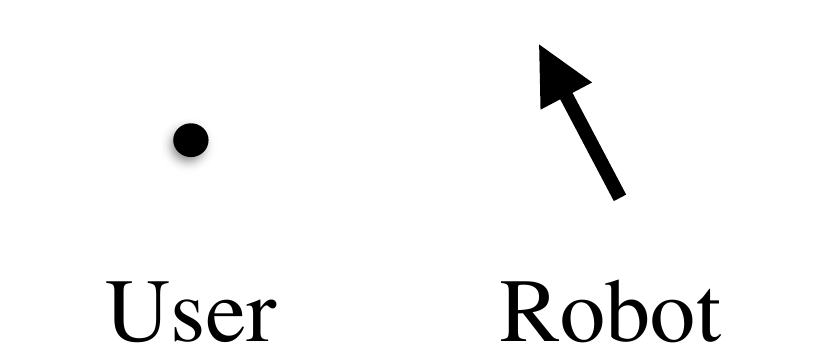}
    \caption{ }
  \label{fig:valfunc_2}
 \end{subfigure}
 \begin{subfigure}{0.32\textwidth}
   \centering 
   \begin{tikzpicture}[every node/.style={anchor=south west,inner sep=0pt}, x=1mm, y=1mm,]    
     \node {\includegraphics[width=1.0\textwidth, trim=250 150 200 190, clip=true]{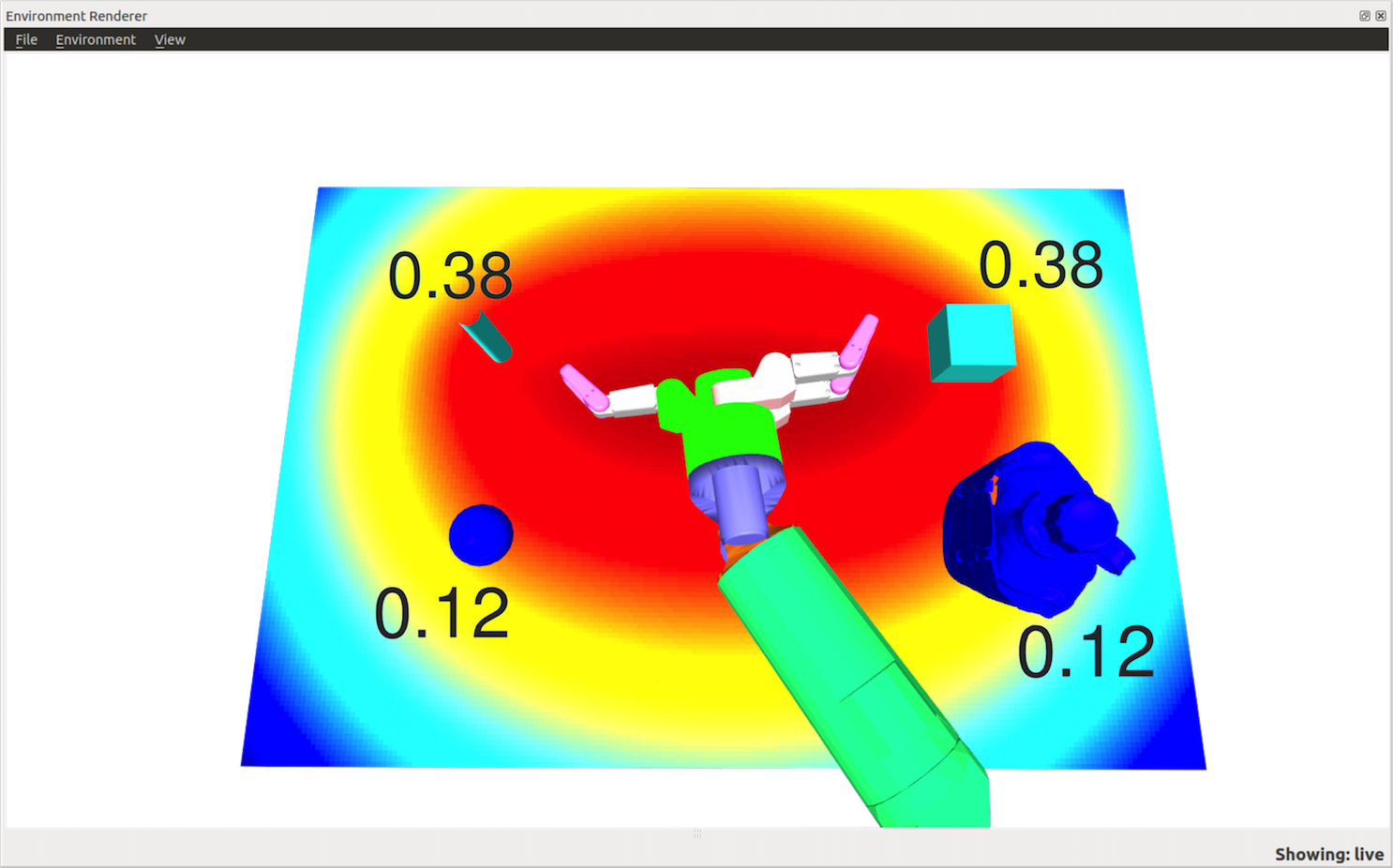}} ;
     \node [opacity=0.6]{\includegraphics[width=1.0\textwidth, trim=250 150 200 190, clip=true]{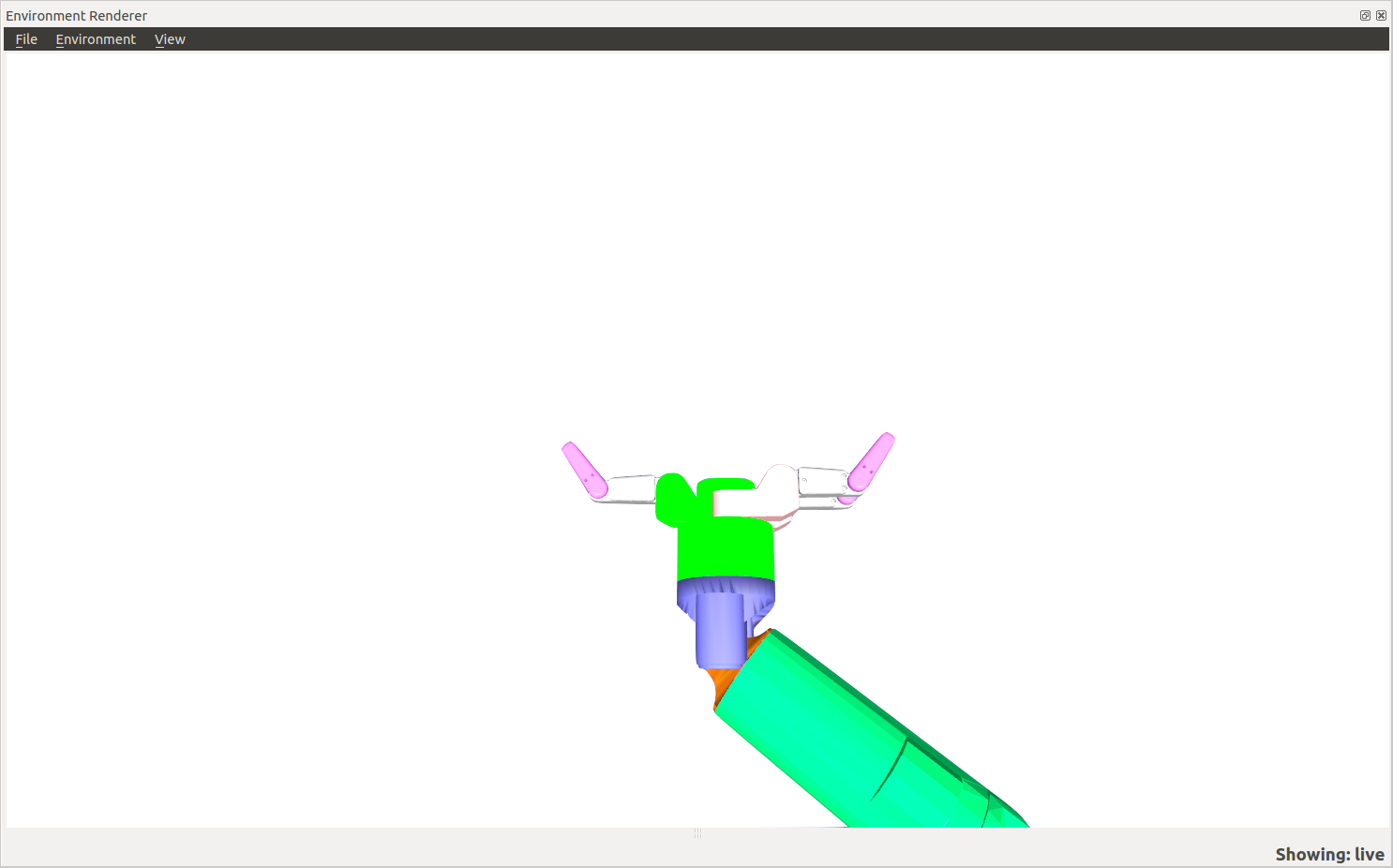} };
    \end{tikzpicture}
    \includegraphics[width=0.7\textwidth, trim=150 515 640 150, clip=true]{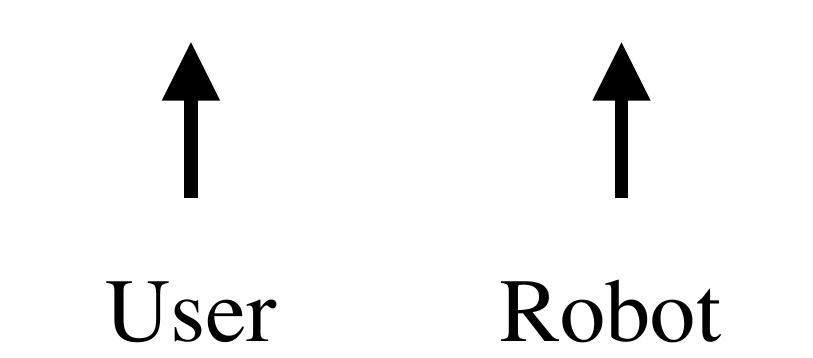}
 \caption{ }
 \label{fig:valfunc_3}
 \end{subfigure}
 \begin{subfigure}{0.32\textwidth}
   \centering 
   \begin{tikzpicture}[every node/.style={anchor=south west,inner sep=0pt}, x=1mm, y=1mm,]    
     \node {\includegraphics[width=1.0\textwidth, trim=250 150 200 190, clip=true]{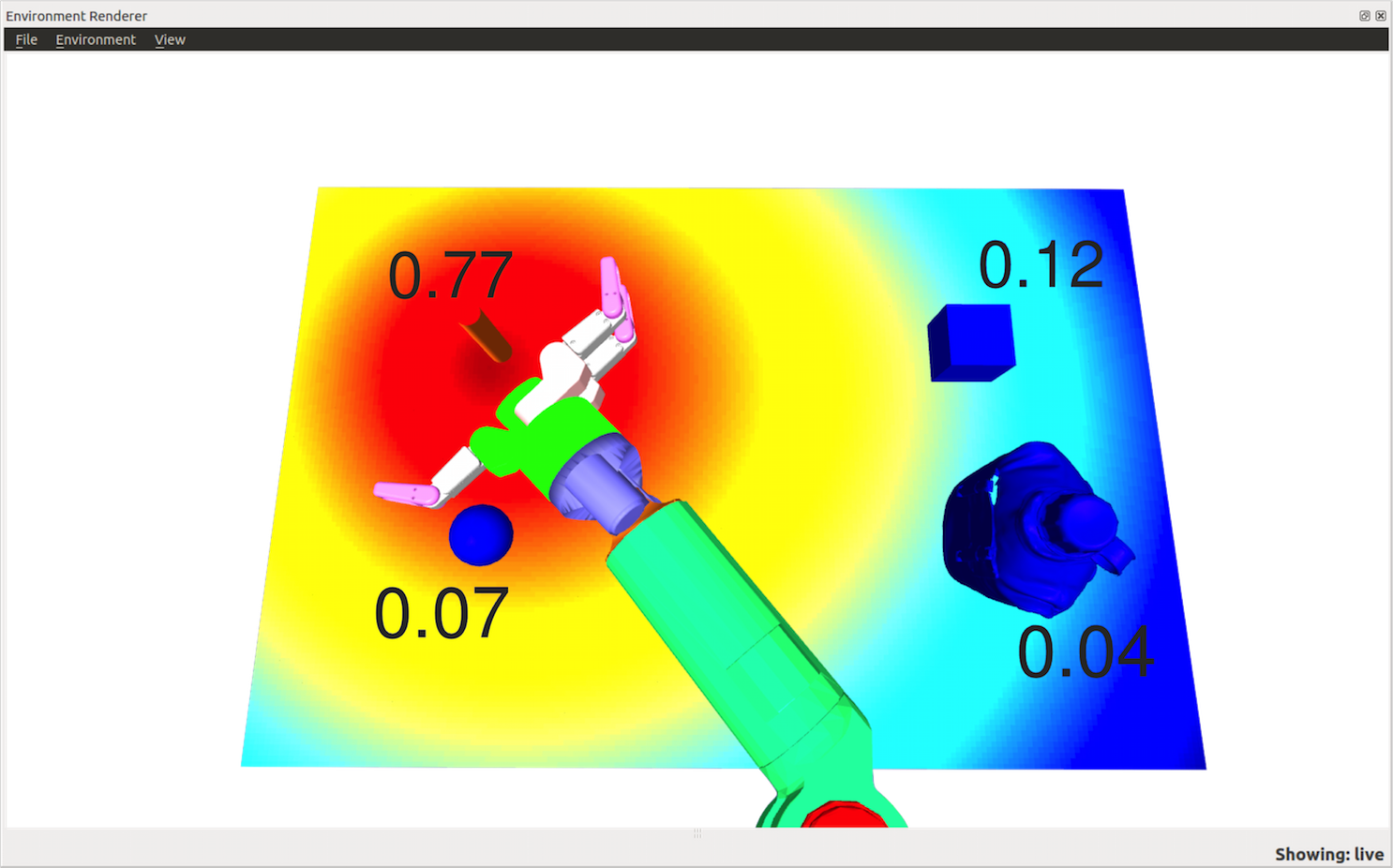}} ;
     \node [opacity=0.6]{\includegraphics[width=1.0\textwidth, trim=250 150 200 190, clip=true]{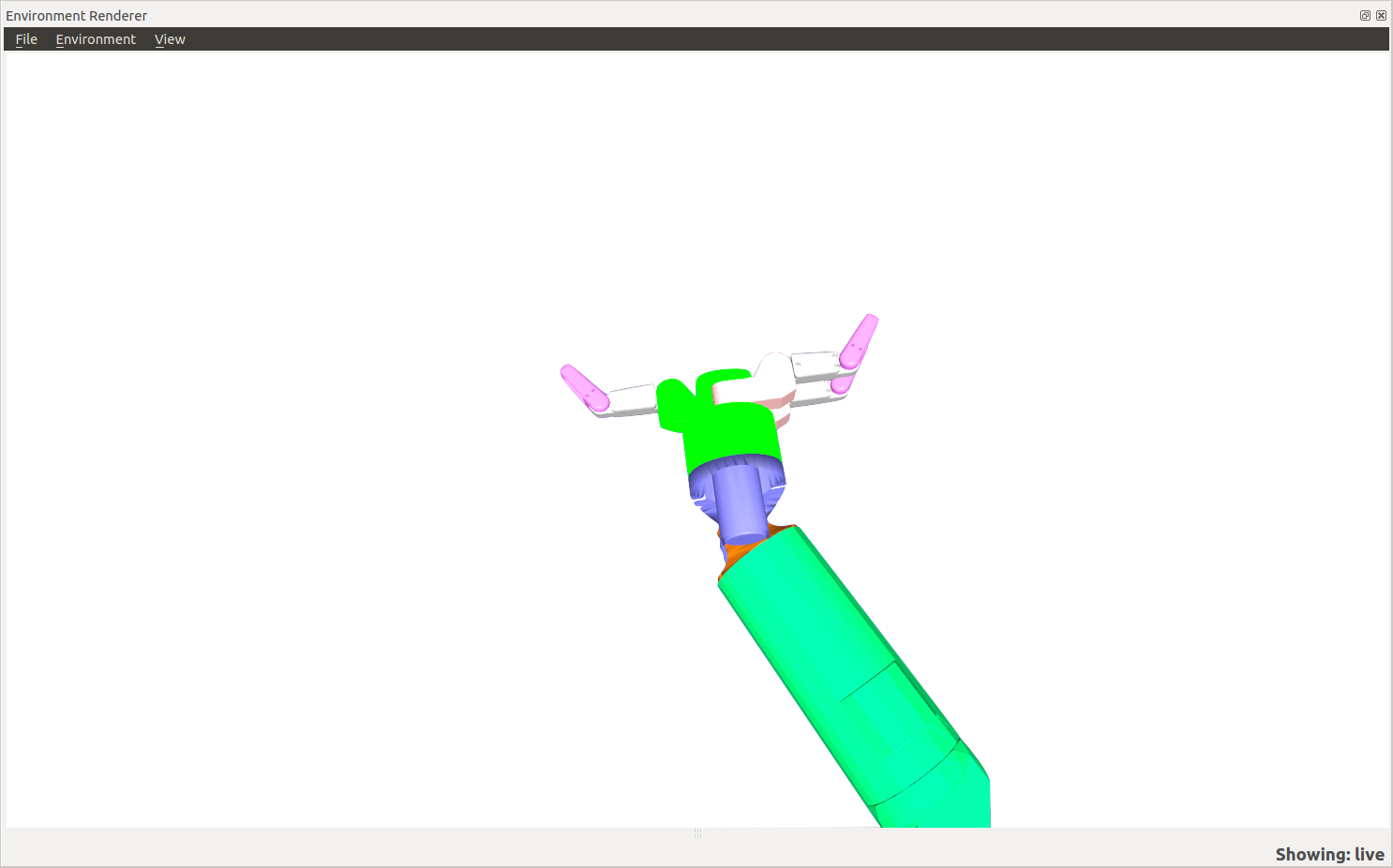} };
 \end{tikzpicture}
  \includegraphics[width=0.7\textwidth, trim=150 515 640 150, clip=true]{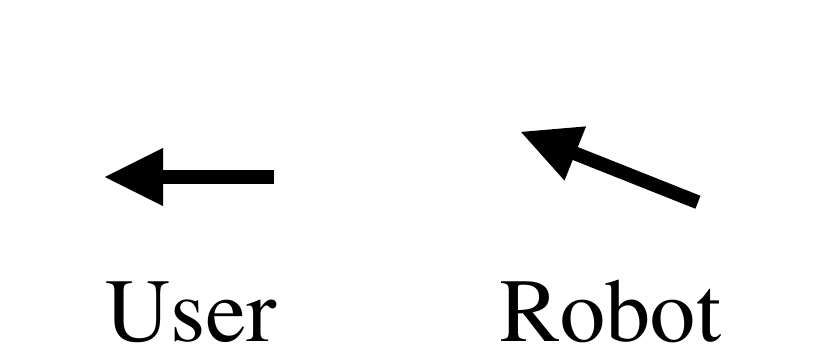}
 \caption{ }
 \label{fig:valfunc_4}
 \end{subfigure}
  \label{fig:valfunc}
  \caption{Estimated goal probabilities and value function for object grasping. Top row: the probability of each goal object and a 2-dimensional slice of the estimated value function. The transparent end-effector corresponds to the initial state, and the opaque end-effector to the next state. Bottom row: the user input and robot control vectors which caused this motion. (\subref{fig:valfunc_2}) Without user input, the robot automatically goes to the position with lowest value, while estimated probabilities and value function are unchanged. (\subref{fig:valfunc_3}) As the user inputs ``forward'', the end-effector moves forward, the probability of goals in that direction increase, and the estimated value function shifts in that direction. (\subref{fig:valfunc_4}) As the user inputs ``left'', the goal probabilities and value function shift in that direction. Note that as the probability of one object dominates the others, the system automatically rotates the end-effector for grasping that object.}
\end{figure*}

We apply our shared autonomy framework to two shared control teleoperation tasks: a simpler task of object grasping (\cref{sec:experiment_rss_2015}) and a more complicated task of feeding (\cref{sec:experiment_hri_2016}). Formally, the state $\stateenv$ corresponds to the end-effector pose of the robot, each goal $\goal$ an object in the world, and each target $\target$ a pose for achieving that goal (e.g. pre-grasp pose). The transition function $\transitionallargs$ deterministically transitions the state by applying both $\actionuser$ and $\actionrobot$ as end-effector velocities. We map user joystick inputs to $\actionuser$ as if the user were controlling the robot through direct teleoperation.

For both tasks, we hand-specify a simple user cost function, $\costtarguser$, from which everything is derived. Let $d$ be the distance between the robot state $\stateenv' = \transitionuser(\stateenv, \actionuser)$ and target $\target$:
 \begin{align*}
   \costtarguser(\stateenv, \actionuser) &= \left\{ \begin{array}{cc} \alpha & d > \delta \\ \frac{\alpha}{\delta} d & d\leq \delta \end{array} \right.
 \end{align*}

That is, a linear cost near a target $(d \leq \delta)$, and a constant cost otherwise. This is based on our observation that users make fast, constant progress towards their goal when far away, and slow down for alignment when near their goal. This is by no means the best cost function, but it does provide a baseline for performance. We might expect, for example, that incorporating collision avoidance into our cost function may enable better performance~\citep{you_2011}. We use this cost function, as it enables closed-form value function computation, enabling inference and execution at 50Hz.

For prediction, when the distance is far away from any target $(d > \delta)$, our algorithm shifts probability towards goals relative to how much progress the user action makes towards the target. If the user stays close to a particular target $(d \leq \delta)$, probability mass automatically shifts to that goal, as the cost for that goal is less than all others.

We set $\costtargrobot(\stateenv, \actionrobot, \actionuser) = \costtarguser(\stateenv, \actionrobot)$, causing the robot to optimize for the user cost function directly\footnote{In our prior work~\citep{javdani_2015_rss}, we used $\costtargrobot(\stateenv, \actionrobot, \actionuser) = \costtarguser(\stateenv, \actionrobot) + (\actionrobot - \actionuser)^2$ in a different framework where only the robot action transitions the state. Both formulations are identical after linearization. Let $\actionrobot^*$ be the optimal optimal robot action in this framework. The additional term $(\actionrobot - \actionuser)^2$ leads to executing the action $\actionuser + \actionrobot^*$, equivalent to first executing the user action $\actionuser$, then $\actionrobot^*$, as in this framework.}, and behave similar to how we observe users behaved. When far away from goals $(d > \delta)$, it makes progress towards all goals in proportion to their probability of being the user's goal. When near a target $(d \leq \delta)$ that has high probability, our system reduces assistance as it approaches the final target pose, letting users adjust the final pose if they wish.


We believe hindsight optimization is a suitable POMDP approximation for shared control teleoperation. A key requirement for shared control teleoperation is efficient computation, in order to make the system feel responsive. With hindsight optimization, we can provide assistance at 50Hz, even with continuous state and action spaces.

The primary drawback of hindsight optimization is the lack of explicit information gathering~\citep{littman_1995}: it assumes all information is revealed at the next timestep, negating any benefit to information gathering. As we assume the user provides inputs at all times, we gain information automatically when it matters. When the optimal action is the same for multiple goals, we take that action. When the optimal action differs, our model gains information proportional to how suboptimal the user action is for each goal, shifting probability mass towards the user goal, and providing more assistance to that goal.

For shared control teleoperation, explicit information gathering would move the user to a location where their actions between goals were maximally different. Prior works suggest that treating users as an oracle is frustrating~\citep{guillory_2011_noise, amershi_2014}, and this method naturally avoids it.

We evaluated this system in two experiments, comparing our POMDP based method, referred to as \emph{policy}, to a conventional predict-then-act approach based on~\citet{dragan_2013_assistive}, referred to as \emph{blend} (\cref{fig:blend_diagram}). In our feeding experiment, we additionally compare to direct teleoperation, referred to as \emph{direct}, and full autonomy, referred to as \emph{autonomy}.

The \emph{blend} baseline of \citet{dragan_2013_assistive} requires estimating the predictor's confidence of the most probable goals, which controls how user action and autonomous assistance are arbitrated (\cref{fig:blend_diagram}). We use the distance-based measure used in the experiments of~\citet{dragan_2013_assistive}, $\text{conf} = \max\left(0, 1-\frac{d}{D}\right)$, where $d$ is the distance to the nearest target, and $D$ is some threshold past which confidence is zero.

\subsection{Grasping Experiment}
\label{sec:experiment_rss_2015}

Our first shared-control teleoperation user study evaluates two methods, our POMDP framework and a predict-then-act blending method~\citep{dragan_2013_assistive}, on the task of object grasping. This task appears broadly in teleoperation systems, appearing in nearly all applications of teleoperated robotic arms. Additionally, we chose this task for its simplicity, evaluating these methods on tasks where direct teleoperation is relatively easy.

\subsubsection{Metrics}
\label{sec:experiment_rss_2015_metrics}

Our experiment aims to evaluate the efficiency and user satisfaction of each method.

\textbf{Objective measures.} We measure the objective efficiency of the system in two ways. \emph{Total execution time} measures how long it took the participant to grasp an object, measuring the effectiveness in achieving the user's goal. \emph{Total joystick input} measures the magnitude of joystick movement during each trial, measuring the user's effort to achieve their goal.

\textbf{Subjective measures.} We also evaluated user satisfaction with the system through through a seven-point Likert scale survey. After using each control method, we asked users to rate if they would \emph{like to use} the method. After using both methods, we asked users which they \emph{preferred}.

\subsubsection{Hypotheses}
\label{sec:hypoths_rss_2015}

Prior work suggests that more autonomy leads to greater efficiency for teleoperated robots~\citep{you_2011, leeper_2012, dragan_2013_assistive, hauser_2013, javdani_2015_rss}. Additionally, prior work indicates that users subjectively prefer more assistance when it leads to more efficient task completion~\citep{you_2011, dragan_2013_assistive}. Based on this, we formulate the following hypotheses:
\newhypothset

\hypothcounter{Participants using the policy method will grasp objects significantly faster than the blend method}{hypoth:grasp_1}

\hypothcounter{Participants using the policy method will grasp objects with significantly less control input than the blend method}{hypoth:grasp_2}

\hypothcounter{Participants will agree more strongly on their preferences for the policy method compared to the blend method}{hypoth:grasp_3}

\subsubsection{Experiment Design}
\label{sec:experiment_rss_2015_design}

\begin{figure}[t]
\centering
\includegraphics[width=0.49\textwidth, trim=150 0 200 0, clip=true]{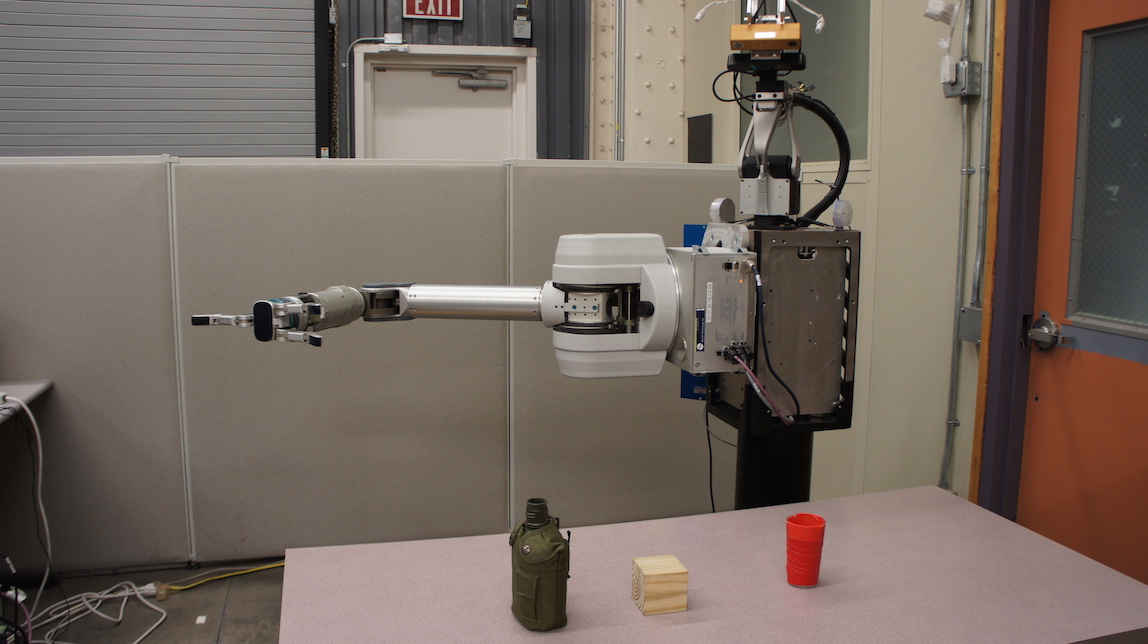}
\caption{Our experimental setup for object grasping. Three objects - a canteen, block, and glass - were placed on the table in front of the robot in a random order. Prior to each trial, the robot moved to the configuration shown. Users picked up each object using each teleoperation system.}
 \label{fig:exper_setup_rss_2015}
\end{figure}

We set up our experiments with three objects on a table: a canteen, a block, and a cup (\cref{fig:exper_setup_rss_2015}). Users teleoperated a robot arm using two joysticks on a Razer Hydra system. The right joystick mapped to the horizontal plane, and the left joystick mapped to the height. A button on the right joystick closed the hand. Each trial consisted of moving from the fixed start pose, shown in \cref{fig:exper_setup_rss_2015}, to the target object, and ended once the hand was closed.

\subsubsection{Procedure}

We conducted a within-subjects study with one independent variable (control method) that had two conditions (policy, blend). We counteract the effects of novelty and practice by counterbalancing the order of conditions. Each participant grasped each object one time for each condition for a total of 6 trials.

We recruited 10 participants (9 male, 1 female), all with experience in robotics, but none with prior exposure to our system. To counterbalance individual differences of users, we chose a within-subjects design, where each user used both systems.

Users were told they would be using two different teleoperation systems, referred to as ``method1'' and ``method2''. Users were not provided any information about the methods. Prior to the recorded trials, users went through a training procedure: First, they teleoperated the robot directly, without any assistance or objects in the scene. Second, they grasped each object one time with each system, repeating if they failed the grasp. Users were then given the option of additional training trials for either system if they wished.

Users then proceeded to the recorded trials. For each system, users picked up each object one time in a random order. Users were told they would complete all trials for one system before the system switched, but were not told the order. However, it was obvious immediately after the first trail started, as the policy method assists from the start pose and blend does not. Upon completing all trials for one system, they were told the system would be switching, and then proceeded to complete all trials for the other system. If users failed at grasping (e.g. they knocked the object over), the data was discarded and they repeated that trial. Execution time and total user input were measured for each trial.

Upon completing all trials, users were given a short survey. For each system, they were asked for their agreement on a 1-7 Likert scale for the following statements:
\begin{enumerate}
  \item ``I felt in \emph{control}''
  \item ``The robot did what I \emph{wanted}''
  \item ``I was able to accomplish the tasks \emph{quickly}''
  \item ``If I was going to teleoperate a robotic arm, I would \emph{like} to use the system''
\end{enumerate}

They were also asked ``which system do you \emph{prefer}'', where $1$ corresponded to blend, $7$ to policy, and $4$ to neutral. Finally, they were asked to explain their choices and provide any general comments. 

\subsubsection{Results}

\begin{figure}[t]
   \includegraphics{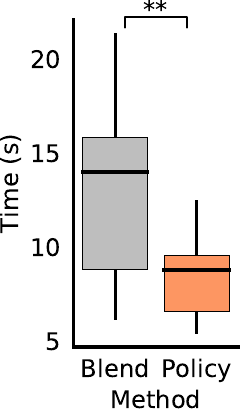}
 \hfill
   \includegraphics[trim=0 0 0 0, clip=true]{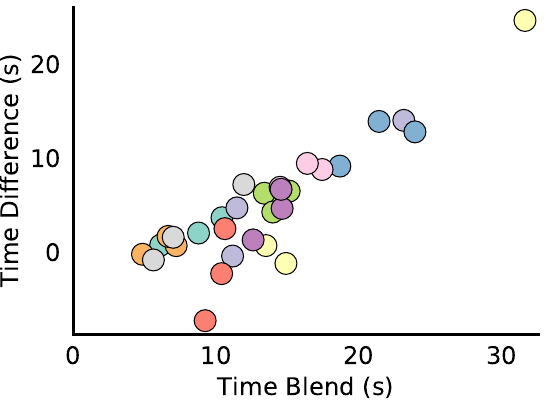}

   \includegraphics{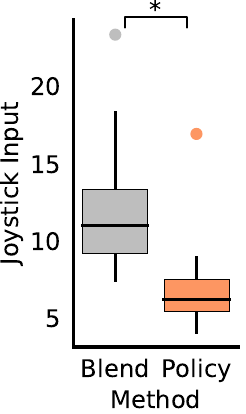}
   \hfill
   \includegraphics[trim=0 0 0 0, clip=true]{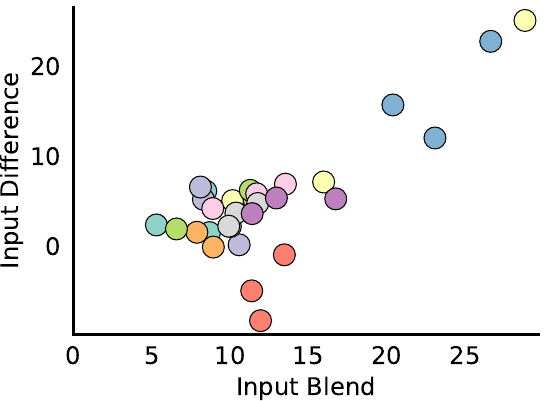}
   \caption{Task completion times and total input for all trials. On the left, box plots for each system. On the right, the time and input of blend minus policy, as a function of the time and total input of blend. Each point corresponds to one trial, and colors correspond to different users. We see that policy was faster $(p<0.01)$ and resulted in less input $(p<0.05)$. Additionally, the difference between systems increases with the time/input of blend.}
 \label{fig:time_control_plots}
 \end{figure}

\begin{figure}[t]
   \centering 
   \begin{subfigure}[b]{0.29\textwidth}
     \centering 
     \includegraphics[trim=0 0 0 0, clip=true]{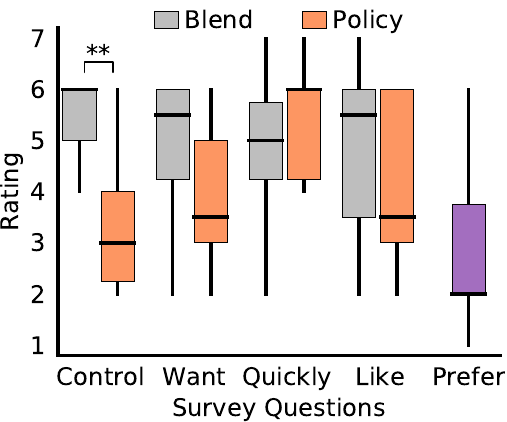}
      \caption{ }
     \label{subfig:survey_means}
   \end{subfigure}
   \hfill
   \begin{subfigure}[b]{0.19\textwidth}
     \centering 
     \includegraphics[trim=0 0 0 0, clip=true]{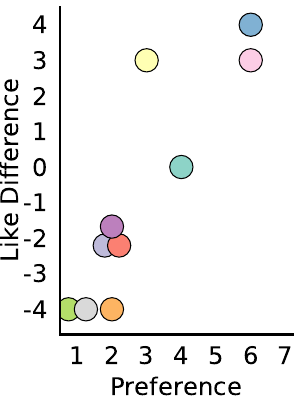}
      \caption{ }
      \label{subfig:prev_vs_like}
   \end{subfigure}
   \caption{(a) Means and standard errors from survey results from our user study. For each system, users were asked if they felt in \emph{control}, if the robot did what they \emph{wanted}, if they were able to accomplish tasks \emph{quickly}, and if they would \emph{like} to use the system. Additionally, they were asked which system they \emph{prefer}, where a rating of 1 corresponds to blend, and 7 corresponds to policy. We found that users agreed with feeling in control more when using the blend method compared to the policy method $(p<0.01)$. (b) The \emph{like} rating of policy minus blend, plotted against the \emph{prefer} rating. When multiple users mapped to the same coordinate, we plot multiple dots around that coordinate. Colors correspond to different users, where the same user has the same color in \cref{fig:time_control_plots}. }
 \label{fig:survey_means}
 \end{figure}

Users were able to successfully use both systems. There were a total of two failures while using each system - once each because the user attempted to grasp too early, and once each because the user knocked the object over. These experiments were reset and repeated.

We assess our hypotheses using a significance level of $\alpha=0.05$. For data that violated the assumption of sphericity, we used a Greenhouse-Geisser correction. If a significant main effect was found, a post-hoc analysis was used to identify which conditions were statistically different from each other, with Holm-Bonferroni corrections for multiple comparisons.

\textbf{Trial times} and \textbf{total control input} were assessed using a two-factor repeated measures ANOVA, using the assistance method and object grasped as factors. Both trial times and total control input had a significant main effect. We found that our policy method resulted in users accomplishing tasks more quickly, supporting \cref{hypoth:grasp_1} $(F(1,9)=12.98, p=0.006)$. Similarly, our policy method resulted in users grasping objects with less input, supporting \cref{hypoth:grasp_2} $(F(1,9)=7.76, p = 0.021)$. See \cref{fig:time_control_plots} for more detailed results.


To assess \textbf{user preference}, we performed a Wilcoxon paired signed-rank test on our survey question asking if they would \emph{like} to use each system, and a Wilcoxon rank-sum test on the survey question of which system they \emph{prefer} against the null hypothesis of no preference (value of 4). There was no evidence to support \cref{hypoth:grasp_3}.

In fact, our data suggests a trend towards the opposite: that users prefer blend over policy. When asked if they would \emph{like} to use the system, there was a small difference between methods (blend: $M=4.90, SD=1.58$, policy: $M=4.10, SD=1.64)$. However, when asked which system they \emph{preferred}, users expressed a stronger preference for blend ($M=2.90, SD=1.76$). While these results are not statistically significant according to our Wilcoxon tests and $\alpha=0.05$, it does suggest a trend towards preferring blend. See \cref{fig:survey_means} for results for all survey questions.


We found this surprising, as prior work indicates a strong correlation between task completion time and user satisfaction, even at the cost of control authority, in both shared autonomy~\citep{dragan_2013_assistive, hauser_2013} and human-robot teaming~\citep{gombolay_2014} settings.\footnote{In prior works where users preferred greater control authority, task completion times were indistinguishable~\citep{kim_2012}.} Not only were users faster, but they recognized they could accomplish tasks more quickly (see \emph{quickly} in \cref{fig:survey_means}). One user specifically commented that  ``[Policy] took more practice to learn\ldots but once I learned I was able to do things a little faster. However, I still don't like feeling it has a mind of its own''.


Users agreed more strongly that they felt in \emph{control} during blend ($Z = -2.687$, $p = 0.007$). Interestingly, when asked if the robot did what they \emph{wanted}, the difference between methods was less drastic. This suggests that for some users, the robot's autonomous actions were in-line with their desired motions, even though the user did not feel that they were in control.

Users also commented that they had to compensate for policy in their inputs. For example, one user stated that ``[policy] did things that I was not expecting and resulted in unplanned motion''. This can perhaps be alleviated with user-specific policies, matching the behavior of particular users.


Some users suggested their preferences may change with better understanding. For example, one user stated they ``disliked (policy) at first, but began to prefer it slightly after learning its behavior. Perhaps I would prefer it more strongly with more experience''. It is possible that with more training, or an explanation of how policy works, users would have preferred the policy method. We leave this for future work.

\subsubsection{Examining trajectories}


\begin{figure}[t]
  \centering
  \begin{subfigure}{0.233\textwidth}
    \centering 
    \includegraphics[trim=0 0 0 0, clip=true]{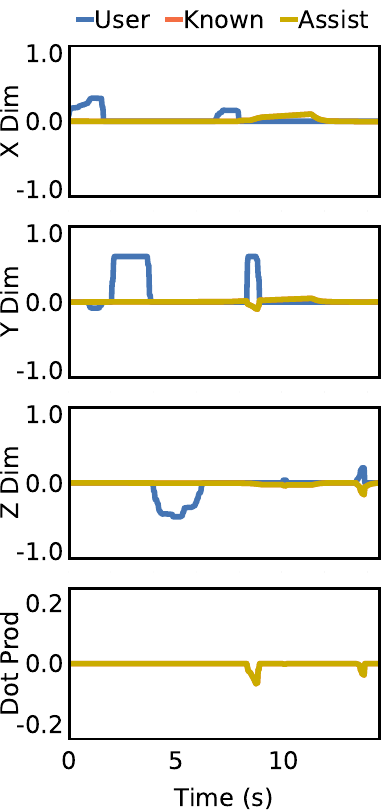}
    \caption{Blend}
    \label{fig:user7_blend}
  \end{subfigure}
  \hfill
  \begin{subfigure}{0.233\textwidth}
    \centering 
    \includegraphics[trim=0 0 0 0, clip=true]{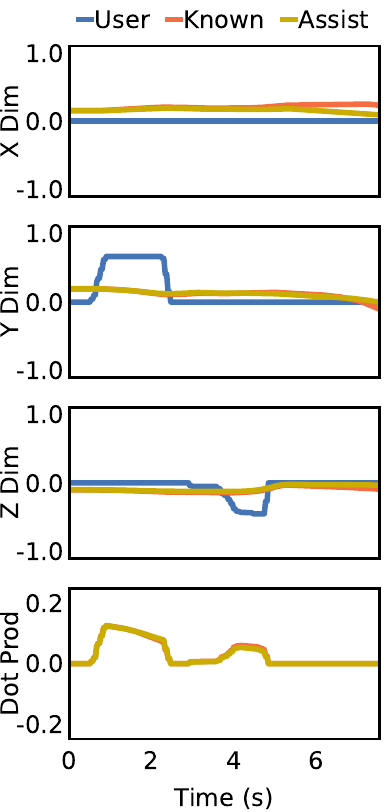}
    \caption{Policy}
    \label{fig:user7_policy}
  \end{subfigure}
  \hfill
  \caption{User input and autonomous actions for a user who preferred policy assistance, using (\subref{fig:user7_blend})~blending and (\subref{fig:user7_policy})~policy for grasping the same object. We plot the user input, autonomous assistance with the estimated distribution, and what the autonomous assistance would have been had the predictor known the true goal. We subtract the user input from the assistance when plotting, to show the autonomous action as compared to direct teleoperation. The top 3 figures show each dimension separately. The bottom shows the dot product between the user input and assistance action. This user changed their strategy during policy assistance, letting the robot do the bulk of the work, and only applying enough input to correct the robot for their goal. Note that this user never applied input in the `X' dimension in this or any of their three policy trials, as the assistance always went towards all objects in that dimension.}
  \label{fig:user7}
\end{figure}

\begin{figure}[t]
  \centering
  \hfill
  \begin{subfigure}{0.233\textwidth}
    \centering 
    \includegraphics[trim=0 0 0 0, clip=true]{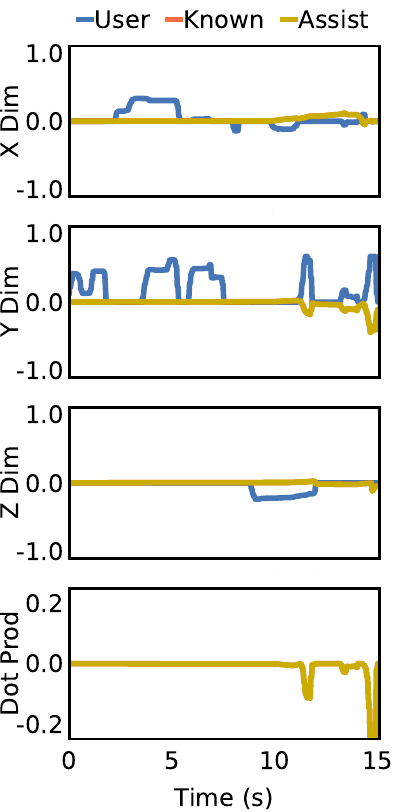}
    \caption{Blend}
    \label{fig:user6_blend}
  \end{subfigure}
  \hfill
  \begin{subfigure}{0.233\textwidth}
    \centering 
    \includegraphics[trim=0 0 0 0, clip=true]{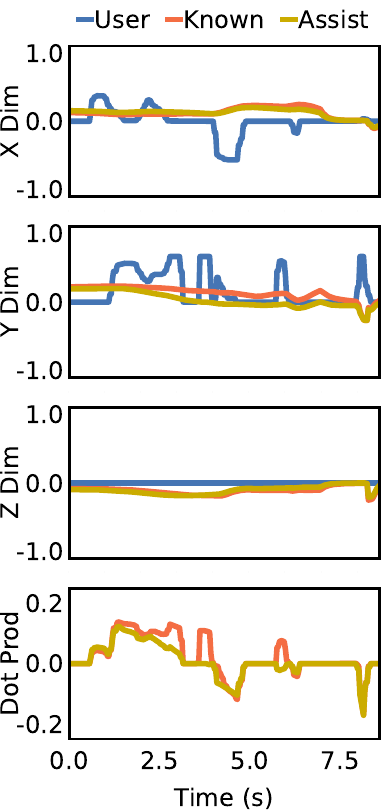}
    \caption{Policy}
    \label{fig:user6_policy}
  \end{subfigure}
  \caption{User input and autonomous assistance for a user who preferred blending, with plots as in \cref{fig:user7}. The user inputs sometimes opposed the autonomous assistance (such as in the `X' dimension) for both the estimated distribution and known goal, suggesting the cost function didn't accomplish the task in the way the user wanted. Even still, the user was able to accomplish the task faster with the autonomous assistance then blending. }
  \label{fig:user6}
\end{figure}

Users with different preferences had very different strategies for using each system. Some users who preferred the assistance policy changed their strategy to take advantage of the constant assistance towards all goals, applying minimal input to guide the robot to the correct goal (\cref{fig:user7}). In contrast, users who preferred blending were often opposing the actions of the autonomous policy (\cref{fig:user6}). This suggests the robot was following a strategy different from their own.

%

\subsection{Feeding Experiment}
\label{sec:experiment_hri_2016}

\graphicspath{{./}{./images_hri_2016/}}


Building from the results of the grasping study (\cref{sec:experiment_rss_2015}), we designed a broader evaluation of our system. In this evaluation, we test our system in an eating task using a Kinova Mico robot manipulator. We chose the Mico robot because it is a commercially available assistive device, and thus provides a realistic testbed for assistive applications. We selected the task of eating for two reasons. First, eating independently is a real need; it has been identified as one of the most important tasks for assistive robotic arms~\citep{chung_2013}. Second, eating independently is hard; interviews with current users of assistive arms have found that people generally do not attempt to use their robot arm for eating, as it requires too much effort~\citep{herlant_2016}. By evaluating our systems on the desirable but difficult task of eating, we show how shared autonomy can improve over traditional methods for controlling an assistive robot in a real-world domain that has implications for people's quality of life.

We also extended our evaluation by considering two additional control methods: direct teleoperation and full robot autonomy. Direct teleoperation is how  assistive robot manipulators like the Mico are currently operated by users. Full autonomy represents a condition in which the robot is behaving ``optimally'' for its own goal, but does not take the user's goal into account. 

Thus, in this evaluation, we conducted a user study to evaluate four methods of robot control---our POMDP framework, a predict-then-act blending method~\citep{dragan_2013_assistive}, direct teleoperation, and full autonomy---in an assistive eating task. 

\subsubsection{Metrics}
\label{sec:experiment_hri_2016_metrics}
Our experiments aim to evaluate the effectiveness and user satisfaction of each method. 

\textbf{Objective measures.} We measure the objective efficiency of the system in four ways. \emph{Success rate} identifies the proportion of successfully completed trials, where success is determined by whether the user was able to pick up their intended piece of food. \emph{Total execution time} measures how long it took the participant to retrieve the food in each trial. \emph{Number of mode switches} identifies how many times participants had to switch control modes during the trial (\cref{fig:control_modes}). \emph{Total joystick input} measures the magnitude of joystick movement during each trial. The first two measures evaluate how effectively the participant could reach their goal, while the last two measures evaluate how much effort it took them to do so.

\textbf{Subjective measures.} We also evaluated user satisfaction with the system through subjective measures. After five trials with each control method, we asked users to respond to questions about each system using a seven point Likert scale. These questions, specified in \cref{sec:experiment_hri_2016_procedure}, assessed user preferences, their perceived ability to achieve their goal, and feeling they were in control. Additionally, after they saw all of the methods, we asked users to \emph{rank order} the methods according to their preference. 

\subsubsection{Hypotheses}
\label{sec:hypoths_feeding}


As in the previous evaluation, we are motivated by prior work that suggests that more autonomy leads to greater efficiency and accuracy for teleoperated robots~\citep{you_2011, leeper_2012, dragan_2013_assistive, hauser_2013, javdani_2015_rss}. We formulate the following hypotheses regarding the efficiency of our control methods, measured through objective metrics.

\newhypothset

\hypothcounter{Using methods with more autonomous assistance will lead to more successful task completions}{hypoth:feeding_1}

\hypothcounter{Using methods with more autonomous assistance will result in faster task completion}{hypoth:feeding_2}

\hypothcounter{Using methods with more autonomous assistance will lead to fewer mode switches}{hypoth:feeding_3}

\hypothcounter{Using methods with more autonomous assistance will lead to less joystick input}{hypoth:feeding_4}

Feeding with an assistive arm is difficult~\citep{herlant_2016}, and prior work indicates that users subjectively prefer more assistance when the task is difficult even though they have less control~\citep{you_2011, dragan_2013_assistive}. Based on this, we formulate the following hypotheses regarding user preferences, measured through our subjective metrics:

\hypothcounter{Participants will more strongly agree on feeling in control for methods with less autonomous assistance}{hypoth:feeding_5}

\hypothcounter{Participants will more strongly agree preference and usability subjective measures for methods with more autonomous assistance}{hypoth:feeding_6}

\hypothcounter{Participants will rank methods with more autonomous assistance above methods with less autonomous assistance}{hypoth:feeding_7}

Our hypotheses depend on an ordering of ``more'' or ``less'' autonomous assistance. The four control methods in this study naturally fall into the following ordering (from least to most assistance): direct teleoperation, blending, policy, and full autonomy. Between the two shared autonomy methods, policy provides more assistance because it creates assistive robot behavior over the entire duration of the trajectory, whereas blend must wait until the intent prediction confidence exceeds some threshold before it produces an assistive robot motion. 

\subsubsection{Experimental Design}
\label{sec:experiment_hri_2016_design}

\begin{figure*}[t]
\centering
  \begin{subfigure}{0.240\textwidth}
    \includegraphics[width=1.0\textwidth, trim=200 0 700 0, clip=true]{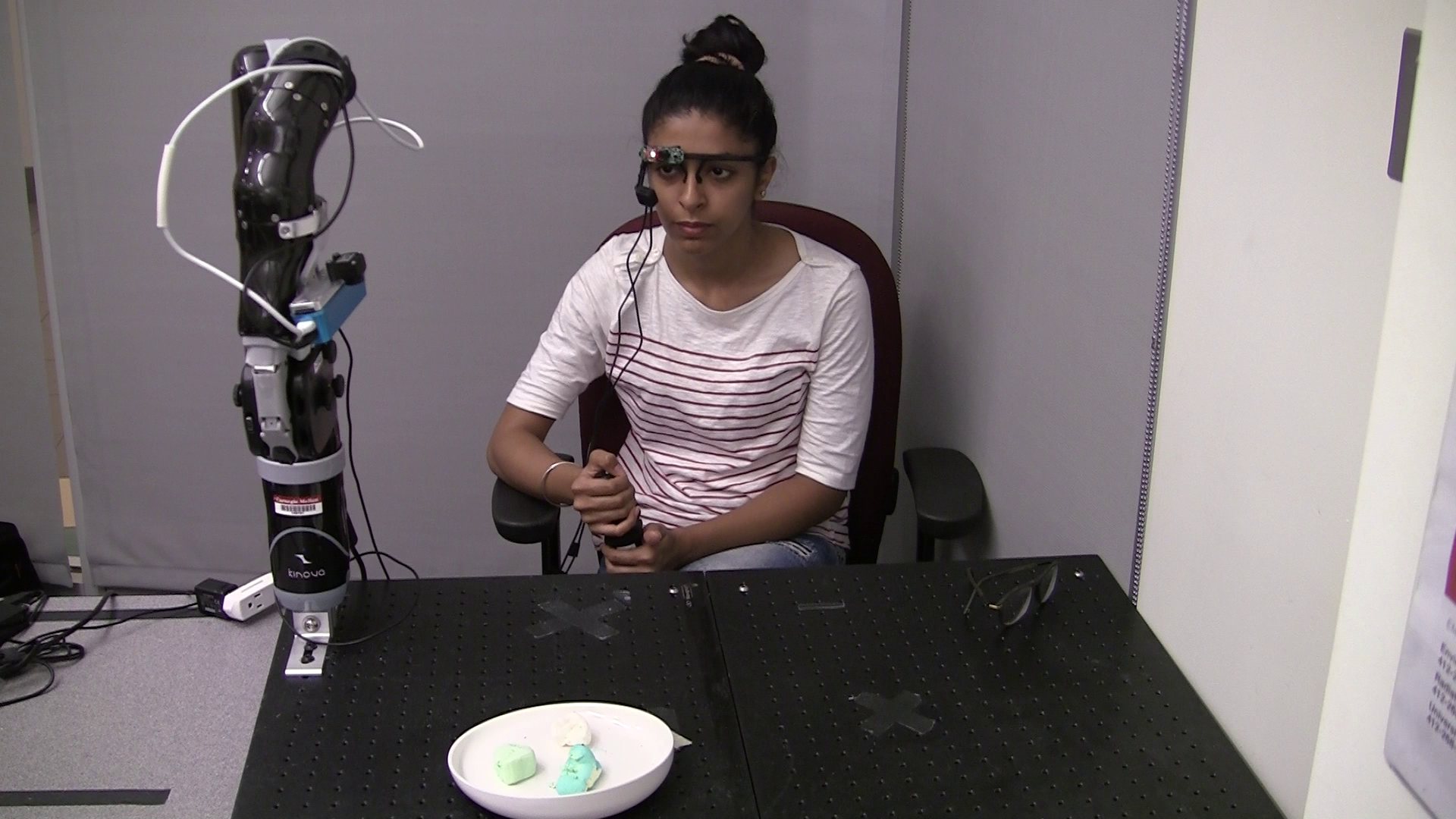}
    \caption{Detect}
    \label{subfig:study_1}
  \end{subfigure}
  \hfill
  \begin{subfigure}{0.240\textwidth}
    \includegraphics[width=1.0\textwidth, trim=200 0 700 0, clip=true]{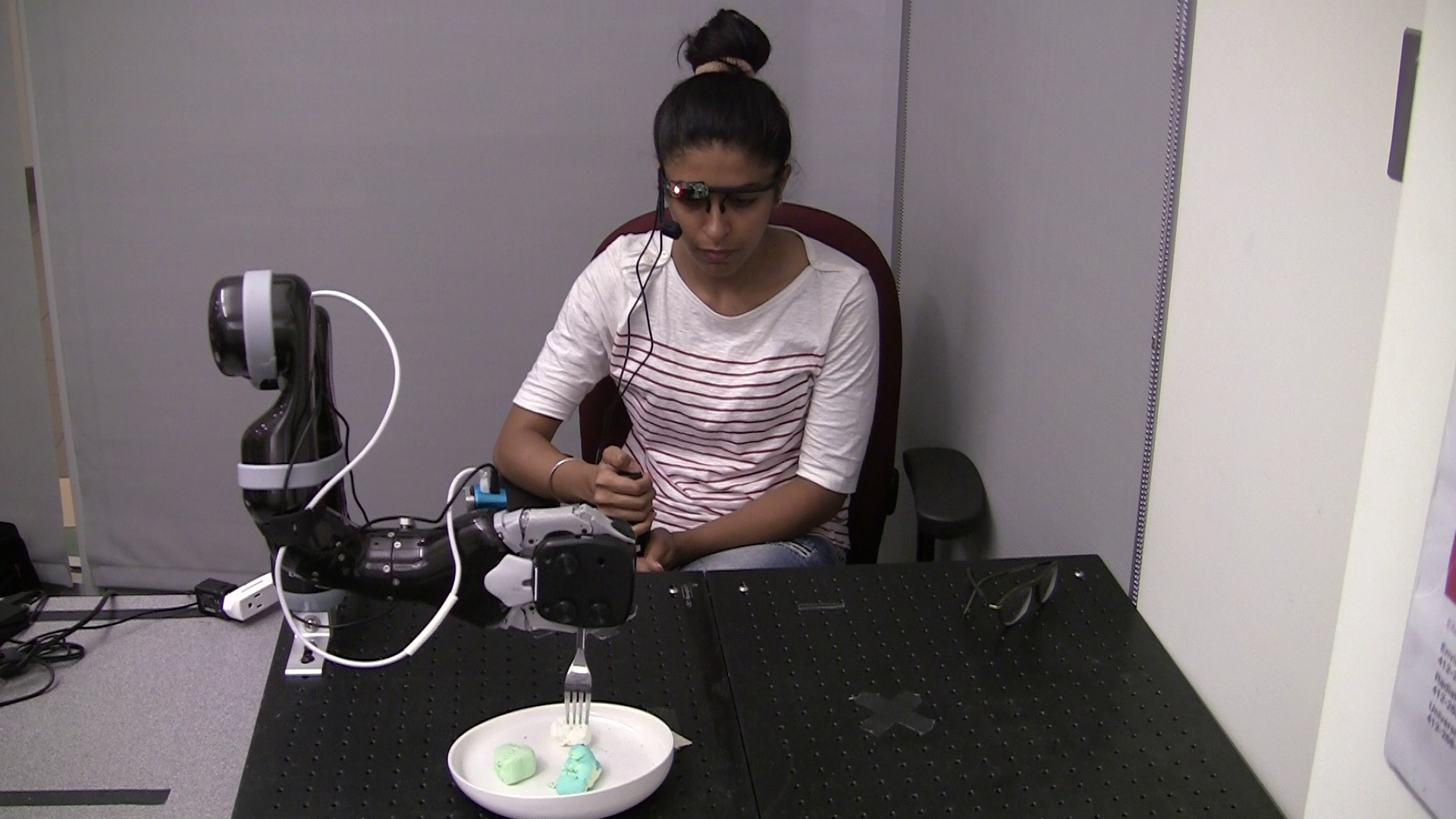}
    \caption{Align}
    \label{subfig:study_2}
  \end{subfigure}
  \hfill
  \begin{subfigure}{0.240\textwidth}
    \includegraphics[width=1.0\textwidth, trim=200 0 700 0, clip=true]{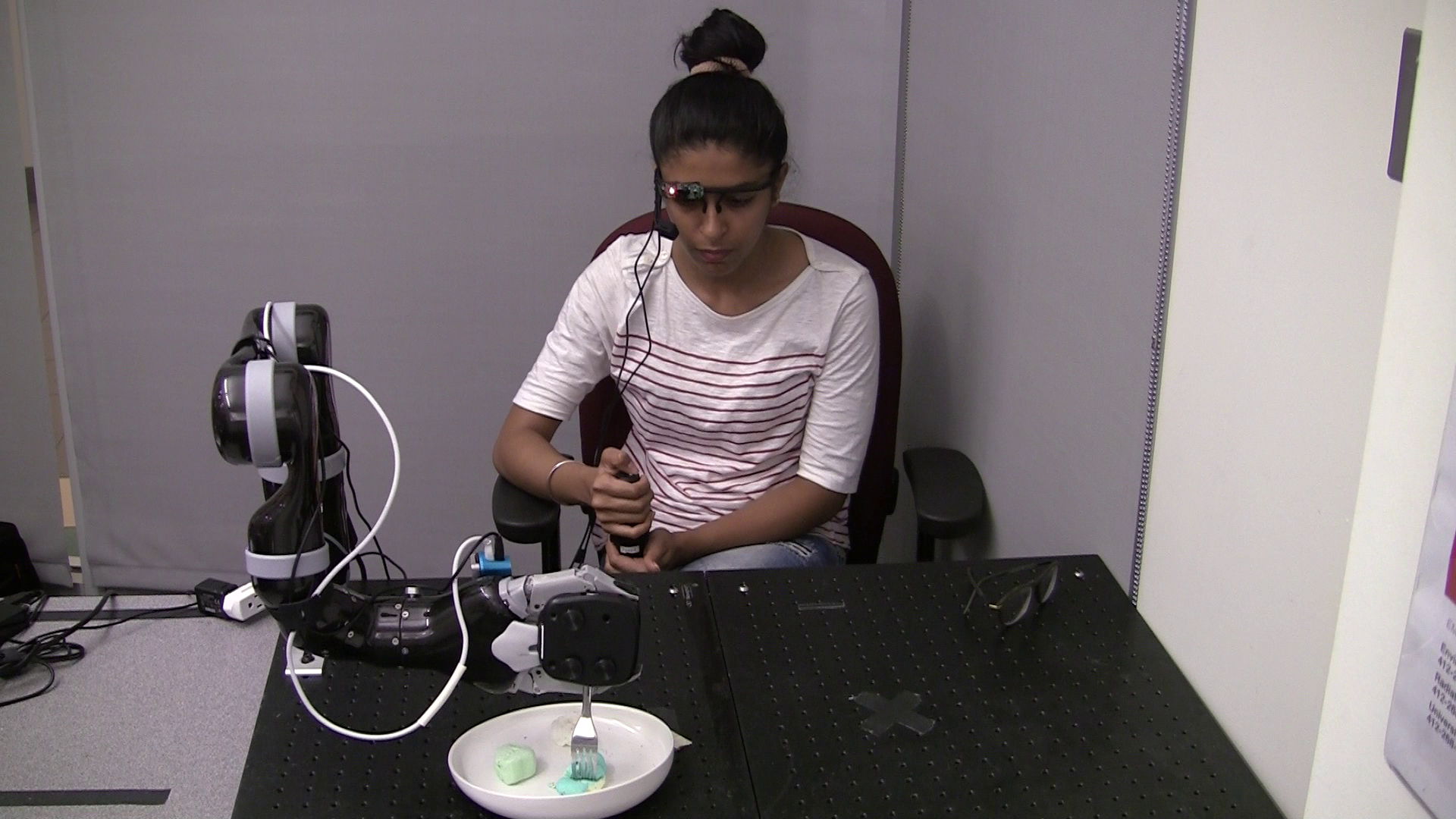}
    \caption{Acquire}
    \label{subfig:study_3}
  \end{subfigure}
  \hfill
  \begin{subfigure}{0.240\textwidth}
    \includegraphics[width=1.0\textwidth, trim=200 0 700 0, clip=true]{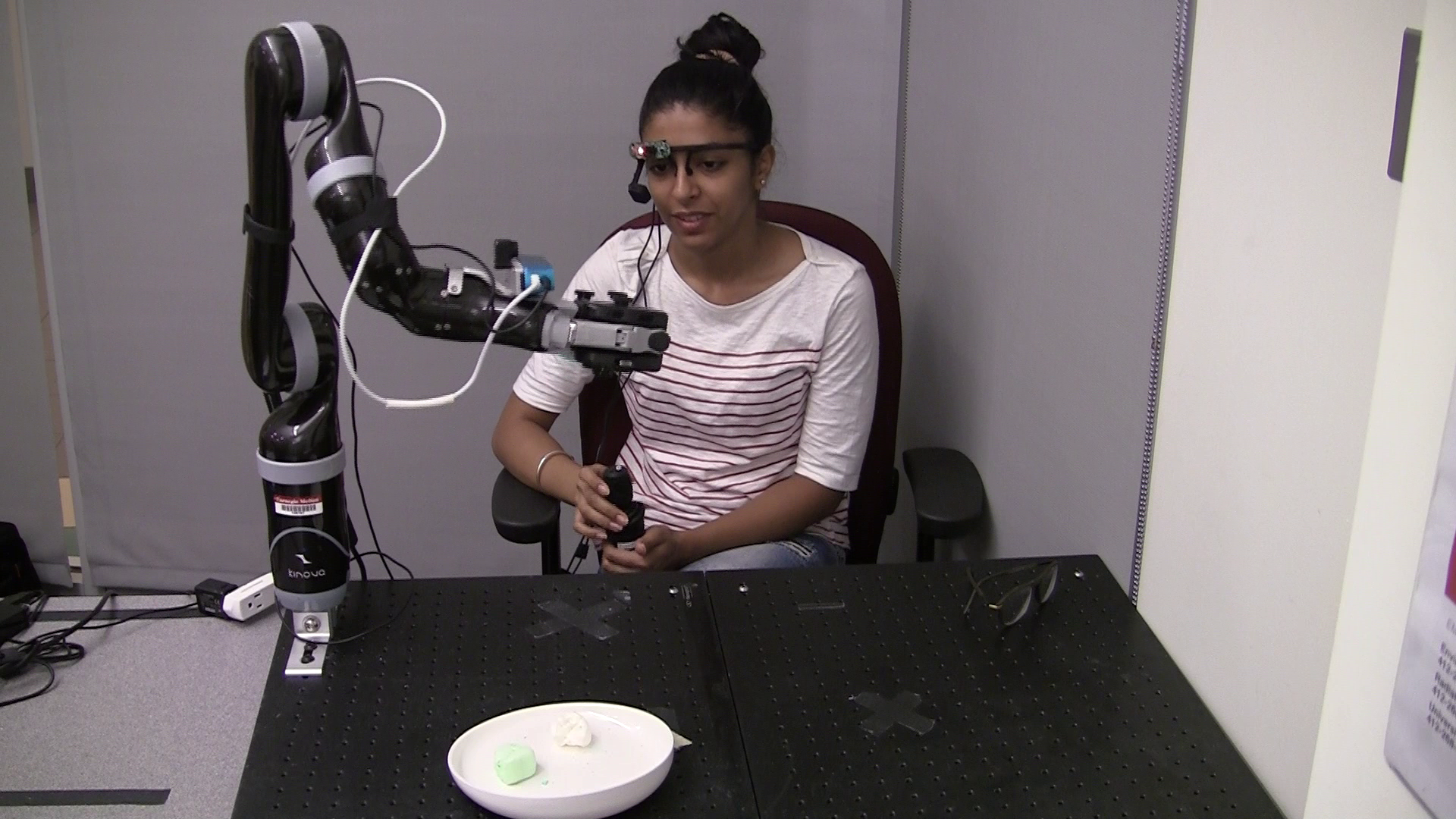}
    \caption{Serve}
    \label{subfig:study_4}
  \end{subfigure}
  \caption{Our bite grasping study. A plate with three bites of food was placed in front of users. (\subref{subfig:study_1}) The robot start by detecting the pose of all bites of food. (\subref{subfig:study_2}) The user then uses one of the four methods to align the fork with their desired bite. When the user indicates they are aligned, the robot automatically (\subref{subfig:study_3}) acquires and (\subref{subfig:study_4}) serves the bite.}
  \label{fig:study_ex}
\end{figure*}

To evaluate each robot control algorithm on a realistic assistive task, participants tried to spear bites of food from a plate onto a fork held in the robot's end effector (\cref{fig:study_ex}). For each trial, participants controlled the robot through a joystick and attempted to retrieve one of three bites of food on a plate.

Each trial followed a fixed bite retrieval sequence. First, the robot would move to a pose where its wrist-mounted camera could detect bites of food on the plate. This step ensured that the system was robust to bite locations and could operate no matter where on the plate the bites were located. While the camera captured and processed the scene to identify bite locations, we asked users to verbally specify which bite they wanted to retrieve\footnote{Users verbally specified which bite they wanted for all methods except autonomous, in which the algorithm selects the bite}, which allowed us to identify whether people were able to successfully retrieve their target bite. 

Next, participants used the joystick to position the robot's end effector so that the fork was directly above their target bite. Six DOF control was available in three modes of 2 DOF each (\cref{fig:control_modes}), and participants could switch between modes by pressing a button on the joystick. 

Once they had the fork positioned above their target bite, the participant prompted the robot to retrieve the bite by pressing and holding the mode switch button. The robot would then automatically move straight down to the height of the table, spearing the bite on the fork. Finally, the robot automatically served the bite.

\subsubsection{Procedure}
\label{sec:experiment_hri_2016_procedure}

We conducted a within-subjects study with one independent variable (control method) that had four conditions (full teleoperation, blend, policy, and full autonomy). Because each participant saw all control methods, we counteract the effects of novelty and practice by fully counterbalancing the order of conditions. Each participant completed five trials for each condition for a total of 20 trials. The bite retrieval sequence described in \cref{sec:experiment_hri_2016_design} was the same in each trial across the four control conditions. The only difference between trials was the control method used for the alignment step, where the fork is positioned above the bite. We measure the metrics discussed in \cref{sec:hypoths_feeding} only during this step.

We recruited 23 able-bodied participants from the local community (11 male, 12 female, ages 19 to 59). After obtaining written consent, participants were given a brief overview of the feeding task, and told the robot may provide help or take over completely. Users then received instruction for teleoperating the system with modal control, and were given five minutes to practice using the robot under direct teleoperation. An eye tracking system was then placed on users for future data analysis, but participant gaze had no effect on the assistance provided by the robot.

As described in \cref{sec:experiment_hri_2016_design}, participants used a joystick to spear a piece of food from a plate on a fork held in the robot's end effector. The different control methods were never explained or identified to users, and were simply referred to by their order of presentation (e.g., ``method 1,'' ``method 2,'' etc.). After using each method, users were given a short questionnaire pertaining to that specific method. The questions were:
\begin{enumerate}
  \item ``I felt in \emph{control}''
  \item ``The robot did what I \emph{wanted}''
  \item ``I was able to accomplish the tasks \emph{quickly}''
  \item ``My \emph{goals} were perceived accurately''\label{question:accurately}
  \item ``If I were going to teleoperate a robotic arm, I would \emph{like} to use the system''
\end{enumerate}
These questions are identical to those asked in the previous evaluation (\cref{sec:experiment_rss_2015}), with the addition of question \ref{question:accurately}, which focuses specifically on the user's goals. Participants were also provided space to write additional comments. After completing all 20 trials, participants were asked to \emph{rank} all four methods in order of preference and provide final comments.

\subsubsection{Results}
\label{sec:results_feeding}
One participant was unable to complete the tasks due to lack of comprehension of instructions, and was excluded from the analysis. One participant did not use the blend method because the robot's finger broke during a previous trial. This user's blend condition and final ranking data were excluded from the analysis, but all other data (which were completed before the finger breakage) were used. Two other participants missed one trial each due to technical issues.

Our metrics are detailed in \cref{sec:experiment_hri_2016_metrics}. For each participant, we computed the task success rate for each method. For metrics measured per trial (execution time, number of mode switches, and total joystick input), we averaged the data across all five trials in each condition, enabling us to treat each user as one independent datapoint in our analyses. Differences in our metrics across conditions were analyzed using a repeated measures ANOVA with a significance threshold of $\alpha=0.05$. For data that violated the assumption of sphericity, we used a Greenhouse-Geisser correction. If a significant main effect was found, a post-hoc analysis was used to identify which conditions were statistically different from each other, with Holm-Bonferroni corrections for multiple comparisons. 

\sloppy
\textbf{Success Rate} differed significantly between control methods $(F(2.33, 49.00)= 4.57, p = 0.011)$. Post-hoc analysis revealed that more autonomy resulted in significant differences of task completion between policy and direct $(p = 0.021)$, and a significant difference between policy and blend $(p=0.0498)$. All other comparisons were not significant. Surprisingly, we found that policy actually had a higher average task completion ratio than autonomy, though not significantly so. Thus, we found support \cref{hypoth:feeding_1} (\cref{subfig:trial_success}).

\textbf{Total execution time} differed significantly between methods $(F(1.89, 39.73) = 43.55, p < 0.001)$. Post-hoc analysis revealed that more autonomy resulted in faster task completion: autonomy condition completion times were faster than policy $(p=0.001)$, blend $(p < 0.001)$, and direct $(p < 0.001)$. There were also significant differences between policy and blend $(p < 0.001)$, and policy and direct $(p < 0.001)$. The only pair of methods which did not have a significant difference was blend and direct. Thus, we found support for \cref{hypoth:feeding_2} (\cref{subfig:time}).

\textbf{Number of mode switches} differed significantly between methods $(F(2.30, 48.39) = 65.16, p < 0.001)$. Post-hoc analysis revealed that more autonomy resulted fewer mode switches between autonomy and blend $(p < 0.001)$, autonomy and direct $(p < 0.001)$, policy and blend $(p < 0.001)$, and policy and direct $(p < 0.001)$. Interestingly, there was not a significant difference in the number of mode switches between full autonomy and policy, even though users cannot mode switch when using full autonomy at all. Thus, we found support for \cref{hypoth:feeding_3} (\cref{subfig:num_mode_switches}).

\textbf{Total joystick input} differed significantly between methods $(F(1.67, 35.14) = 65.35, p < 0.001)$. Post-hoc analysis revealed that more autonomy resulted in less total joystick input between all pairs of methods: autonomy and policy $(p < 0.001)$, autonomy and blend $(p < 0.001)$, autonomy and direct $(p < 0.001)$, policy and blend $(p < 0.001)$, policy and direct $(p < 0.001)$, and blend and direct $(p = 0.026)$. Thus, we found support for \cref{hypoth:feeding_4} (\cref{subfig:user_input}).

%
%

\begin{figure}[t]
\centering
\captionsetup[subfigure]{margin={0.7cm, 0pt}, aboveskip=0.5pt,belowskip=+5.pt}
\begin{subfigure}{0.238\textwidth}
  \includegraphics[clip=true]{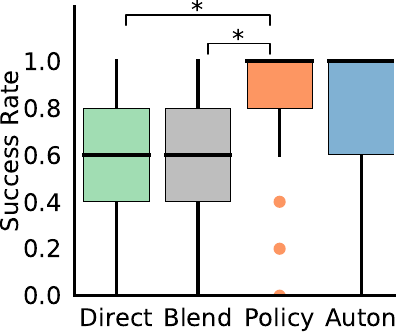}
  \caption{Trial Success}
  \label{subfig:trial_success}
\end{subfigure}
\hfill
\begin{subfigure}{0.242\textwidth}
  \includegraphics[clip=true]{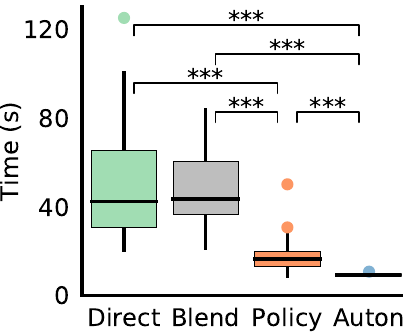}
  \caption{Time}
  \label{subfig:time}
\end{subfigure}

\begin{subfigure}{0.238\textwidth}
  \includegraphics[clip=true]{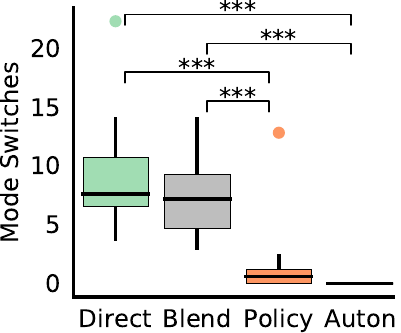}
  \caption{Mode Switches}
  \label{subfig:num_mode_switches}
\end{subfigure}
\hfill
\begin{subfigure}{0.242\textwidth}
  \includegraphics[clip=true]{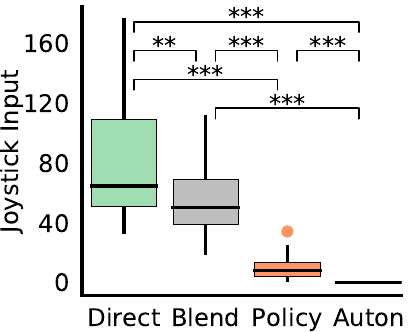}
  \caption{User Input}
  \label{subfig:user_input}
\end{subfigure}

\begin{subfigure}{0.238\textwidth}
  \includegraphics[clip=true]{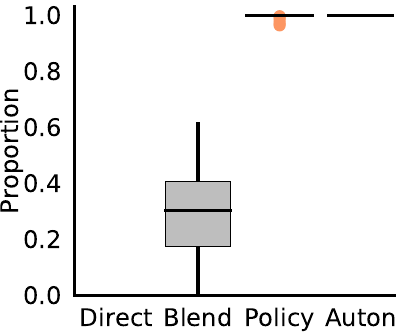}
  \caption{Assistance Ratio}
  \label{subfig:assist_ratio}
\end{subfigure}
\hfill
\begin{subfigure}{0.242\textwidth}
  \includegraphics[clip=true]{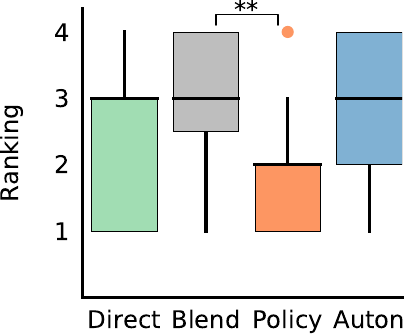}
  \caption{Method Rank}
  \label{subfig:rank}
\end{subfigure}
\caption{Boxplots for each algorithm across all users of the (\subref{subfig:trial_success}) task completion ratio, (\subref{subfig:time}) total execution time, (\subref{subfig:num_mode_switches}) number of mode switches, (\subref{subfig:user_input}) total joystick input, (\subref{subfig:assist_ratio}) the ratio of time that robotic assistance was provided, and (\subref{subfig:rank}) the ranking as provided by each user, where 1 corresponds to the most preferred algorithm. Pairs that were found significant during post-analysis are plotted, where ${*}$ indicates $p<0.05$, ${*}{*}$ that $p<0.01$, and ${*}{*}{*}$ that $p<0.001$.}
 \label{fig:box_results}
\end{figure}

\begin{figure}[t]
  \centering
  \captionsetup[subfigure]{aboveskip=0.5pt,belowskip=+5.pt}
  \begin{subfigure}{0.47\textwidth}
    \includegraphics{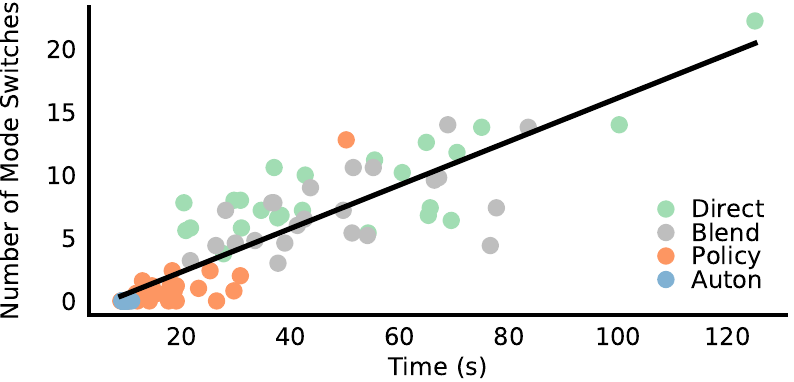}
    \caption{Time vs. Mode Switches}
    \label{subfig:time_vs_mode_switch}
  \end{subfigure}
  \begin{subfigure}{0.47\textwidth}
    \includegraphics{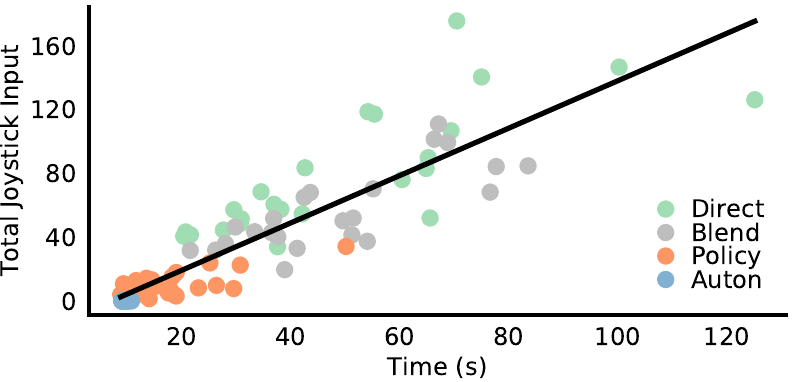}
    \caption{Time vs. Total Joystick Input}
    \label{subfig:time_vs_joystick}
  \end{subfigure}
  \caption{Time vs.\ user input in both the number of mode switches (\subref{subfig:time_vs_mode_switch}) and joystick input (\subref{subfig:time_vs_joystick}). Each point corresponds to the average for one user for each method. We see a general trend that trials with more time corresponded to more user input. We also fit a line so all points for all methods. Note that the direct teleoperation methods are generally above the line, indicating that shared and full autonomy usually results in less user input even for similar task completion time.}
  \label{fig:time_vs_user_inputs}
\end{figure}

\begin{figure}[t]
  \includegraphics{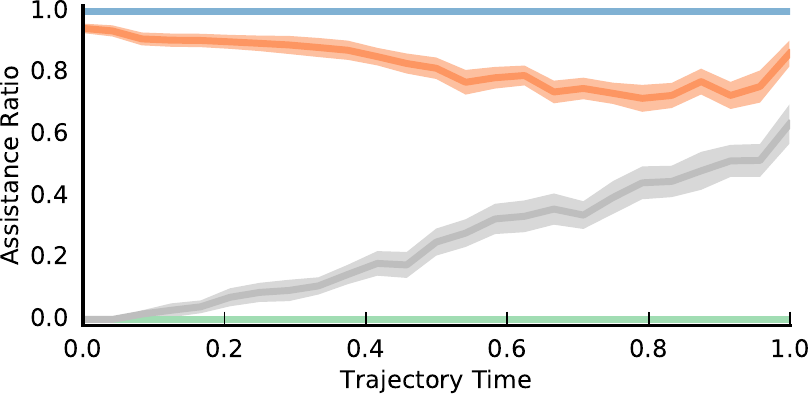}
  \caption{Ratio of the magnitude of the assistance to user input as a function of time. Line shows mean of the assistance ratio as a function of the proportion of the trajectory. Shaded array plots the standard error over users. We see that blend initially provides no assistance, as the predictor is not confident in the user goal. In contrast, policy provides assistance throughout the trajectory. We also see that policy decreases in assistance ratio over time, as many users provided little input until the system moved and oriented the fork near all objects, at which time they provided input to express their preference and align the fork.}
  \label{fig:assist_ratio_over_time}
\end{figure}

User reported subjective measures for the survey questions are assessed using a Friedman's test and a significance threshold of $p=0.05$. If significance was found, a post-hoc analysis was performed, comparing all pairs with Holm-Bonferroni corrections.

User agreement on \textbf{control} differed significantly between methods, $\xi^2(3) = 15.44, p<0.001$, with more autonomy leading to less feeling of control. Post-hoc analysis revealed that all pairs were significant, where autonomy resulting in less feeling of control compared to policy $(p<0.001)$, blend $(p=0.001)$, and direct $(p<0.001)$. Policy resulted in less feeling of control compared to blend $(p<0.001)$ and direct $(p=0.008)$. Blend resulted in less feeling of control compared to direct $(p=0.002)$. Thus, we found suppoert for \cref{hypoth:feeding_5}.

User agreement on preference and usability subjective measures sometimeses differed significantly between methods. User agreement on \textbf{liking} differed significantly between methods, $\xi^2(3) = 8.74, p=0.033$. Post-hoc analysis revealed that between the two shared autonomy methods (policy and blend), users liked the more autonomous method more $(p=0.012)$. User ability for achieving goals \textbf{quickly} also differed significantly between methods, $\xi^2{3} = 11.90, p=0.008$. Post-hoc analysis revelead that users felt they could achieve their goals more quickly with policy than with blend $(p=0.010)$ and direct $(p=0.043)$. We found no significant differences for our other measures. Thus, we find partial support for \cref{hypoth:feeding_6} (\cref{fig:survey_questions}).

\textbf{Ranking} differed significantly between methods, $\xi^2(3) = 10.31, p=0.016$. Again, post-hoc analysis revealed that between the two shared autonomy methods (policy and blend), users ranked the more autonomous one higher $(p=0.006)$. Thus, we find support for \cref{hypoth:feeding_7}. As for the like rating, we also found that on average, users ranked direct teleopration higher than both blend and full autonomy, though not significantly so (\cref{subfig:rank}).


\begin{figure*}[t]
  \includegraphics[clip=true]{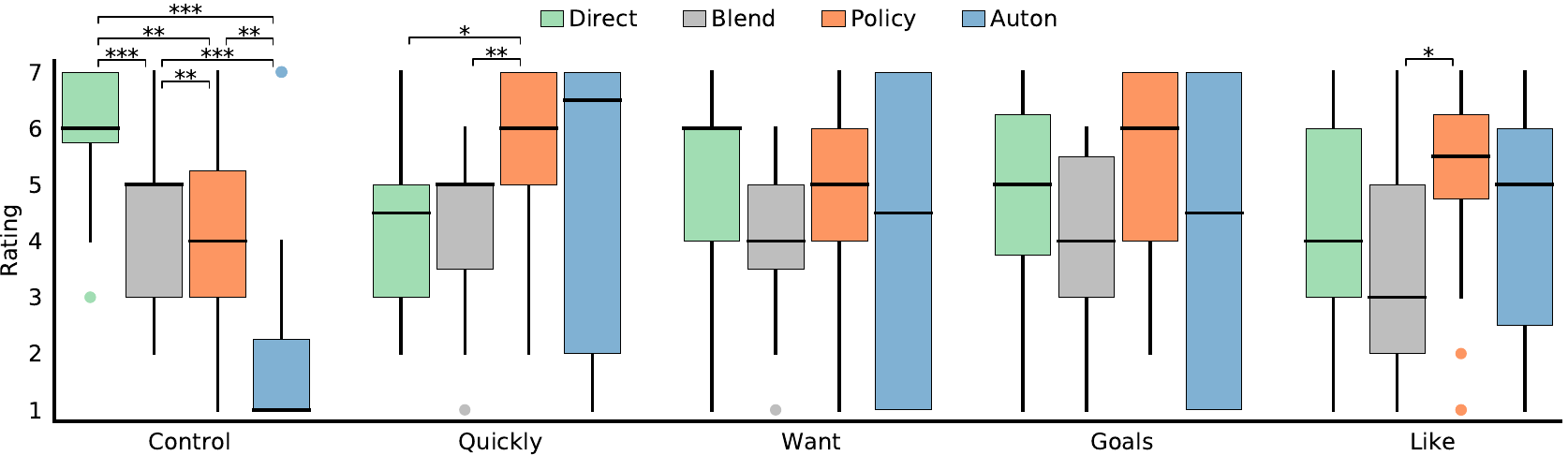}
  \caption{Boxplots for user responses to all survey question. See \cref{sec:experiment_hri_2016_procedure} for specific questions. Pairs that were found significant during post-analysis are plotted, where ${*}$ indicates $p<0.05$, ${*}{*}$ that $p<0.01$, and ${*}{*}{*}$ that $p<0.001$. We note that policy was perceived as quick, even though autonomy actually had lower task completion (\cref{subfig:time}). Additionally, autonomy had a very high variance in user responses for many questions, with users very mixed on if it did what they wanted, and achieved their goal. On average, we see that policy did better then other methods for most user responses.}
  \label{fig:survey_questions}
\end{figure*}

\subsubsection{Discussion}
The robot in this study was controlled through a 2 DOF joystick and a single button, which is comparable to the assistive robot arms in use today.

As expected, we saw a general trend in which more autonomy resulted in better performance across all objective measures (task completion ratio, execution time, number of mode switches, and total joystick input), supporting \cref{hypoth:feeding_1}--\cref{hypoth:feeding_4}. We also saw evidence that autonomy decreased feelings of control, supporting \cref{hypoth:feeding_5}. However, it improved people's subjective evaluations of usability and preference, particularly between the shared autonomy methods (policy and blend), supporting \cref{hypoth:feeding_6} and \cref{hypoth:feeding_7}. Most objective measures (particularly total execution time, number of mode switches, and total joystick input) showed significant differences between all or nearly all pairs of methods, while the subjective results were less certain, with significant differences between fewer pairs of methods.

We can draw several insights from these findings. First, autonomy improves peoples' performance on a realistic assistive task by requiring less physical effort to control the robot. People use fewer mode switches (which require button presses) and move the joystick less in the more autonomous conditions, but still perform the task more quickly and effectively. For example, in the policy method, $8$ of our $22$ users did not use any mode switches for any trial, but this method yielded the highest completion ratio and low execution times. Clearly, some robot autonomy can benefit people's experience by reducing the amount of work they have to do.

Interestingly, full autonomy is not always as effective as allowing the user to retain some control. For example, the policy method had a slightly (though not significantly) higher average completion ratio than the full autonomy method. This appears to be the result of users fine-tuning the robot's end effector position to compensate for small visual or motor inaccuracies in the automatic bite localization process. Because the task of spearing relatively small bites of food requires precise end effector localization, users' ability to fine-tune the final fork alignment seems to benefit the overall success rate. Though some users were able to achieve it, our policy method isn't designed to allow this kind of fine-tuning, and will continually move the robot's end effector back to the erroneous location against the user's control. Detecting when this may be occurring and decreasing assistance would likely enhance people's ability to fine-tune alignment, and improve their task completion rate even further.

Given the success of blending in previous studies~\citep{li_2011, carlson_2012, dragan_2013_assistive, muelling_2015, gopinath_2016}, we were surprised by the poor performance of blend in our study. We found no significant difference for blending over direct teleopration for success rate, task completion time, or number of mode switches. We also saw that it performed the worst among all methods for both user liking and ranking. 
One possible explanation is that blend spent relatively little time assisting users (\cref{subfig:assist_ratio}). For this task, the goal predictor was unable to confidently predict the user's goal for $69\%$ of execution time, limiting the amount of assistance (\cref{fig:assist_ratio_over_time}). Furthermore, the difficult portion of the task---rotating the fork tip to face downward---occurred at the beginning of execution. Thus, as one user put it ``While the robot would eventually line up the arm over the plate, most of the hard work was done by me.'' In contrast, user comments for shared autonomy indicated that ``having help earlier with fork orientation was best.'' 
This suggests that the \emph{magnitude} of assistance was less important then assisting at a time that would have been helpful. And in fact, assisting only during the portion where the user could do well themselves resulted in additional frustration.

Although worse by all objective metrics, participants tended to prefer direct teleoperation over autonomy. This is not entirely surprising, given prior work where users expressed preference for more control~\cite{kim_2012}. However, for difficult tasks like this one, users in prior works tend to favor more assistance~\citep{you_2011, dragan_2013_assistive}. Many users commented that they disliked autonomy due to the lack of item selection, for example, ``While [autonomy] was fastest and easiest, it did not account for the marshmallow I wanted.'' Another user mentioned that autonomy ``made me feel inadequate.''

We also found that users responded to failures by blaming the system, even when using direct teleoperation. Of the eight users who failed to successfully spear a bite during an autonomous trial, five users commented on the failure of the algorithm. In contrast, of the 19 users who had one or more failure during teleoperation, only two commented on their own performance. Instead, users made comments about the system itself, such as how the system ``seemed off for some reason'' or ``did not do what I intended.'' One user blamed their viewpoint for causing difficulty for the alignment, and another the joystick. This suggests that people are more likely to penalize autonomy for its shortcomings than their own control. Interestingly, this was not the case for the shared autonomy methods. We find that when users had some control over the robot's movement, they did not blame the algorithm's failures (for example, mistaken alignments) on the system.

\begin{table*}
  \newcommand{\algoauton}{Auton}
  \newcommand{\algopolicy}{Policy}
  \newcommand{\algoblend}{Blend}
  \newcommand{\algodirect}{Direct}
  \newcommand{\sigresult}[1]{\bm{#1}}
  \newcommand{\spssnoval}{\sigresult{<\!0.001}}
  \newcommand{\nonsigresult}{\text{NS}}
  \centering
  \begin{tabular}{|c|c|c|c|c|c|c|}
    \hline
    Metric & \algoauton-\algopolicy & \algoauton-\algoblend & \algoauton-\algodirect & \algopolicy-\algoblend & \algopolicy-\algodirect & \algoblend-\algodirect \\
    \hline
    Success Rate & $\nonsigresult$ & $\nonsigresult$ & $\nonsigresult$ & $\sigresult{0.050}$ & $\sigresult{0.021}$ & $\nonsigresult$ \\
    Completion Time & $\spssnoval$ & $\spssnoval$ & $\spssnoval$ & $\spssnoval$ & $\spssnoval$ & $\nonsigresult$ \\
    Mode Switches & $\nonsigresult$ & $\spssnoval$ & $\spssnoval$ & $\spssnoval$ & $\spssnoval$ & $\nonsigresult$ \\
    Control Input & $\spssnoval$ & $\spssnoval$ & $\spssnoval$ & $\spssnoval$ & $\spssnoval$ & $\sigresult{0.004}$ \\
    Ranking & $\nonsigresult$ & $\nonsigresult$ & $\nonsigresult$ & $\sigresult{0.006}$ & $\nonsigresult$ & $\nonsigresult$ \\
    Like Rating& $\nonsigresult$ & $\nonsigresult$ & $\nonsigresult$ & $\sigresult{0.012}$ & $\nonsigresult$ & $\nonsigresult$ \\
    Control Rating& $\spssnoval$ & $\sigresult{.001}$ & $\spssnoval$ & $\spssnoval$ & $\sigresult{0.008}$ & $\sigresult{.002}$ \\
    Quickly Rating& $\nonsigresult$ & $\nonsigresult$ & $\nonsigresult$ & $\sigresult{0.010}$ & $\sigresult{0.043}$ & $\nonsigresult$ \\
    \hline
    \end{tabular} 
    \caption{Post-Hoc p-value for every pair of algorithms for each hypothesis. For Success rate, completion time, mode switches, and total joystick input, results are from a repeated measures ANOVA. For like rating and ranking, results are from a Wilcoxon signed-rank test. All values reported with Holm-Bonferroni corrections.}
    \label{tab:eating_p_val_results}
\end{table*}

\section{Human-Robot Teaming}
\label{sec:human_robot_teaming}


In human-robot teaming, the user and robot want to achieve a set of related goals. Formally, we assume a set of user goals $\usergoal \in \userGoal$ and robot goals $\robotgoal \in \robotGoal$, where both want to achieve all goals. However, there may be constraints on how these goals can be achieved (e.g. user and robot cannot simultaneously use the same object~\citep{hoffman_2007}). We apply a conservative model for these constraints through a \emph{goal restriction set} $\goalrestrictionset = \left\{ (\usergoal, \robotgoal) : \text{Cannot achieve $\usergoal$ and $\robotgoal$ simultaneously}\right\}$. In order to efficiently collaborate with the user, our objective is to simultaneously predict the human's intended goal, and achieve a robot goal not in the restricted set. We remove the achieved goals from their corresponding goal sets, and repeat this process until all robot goals are achieved.

The state $\stateenv$ corresponds to the state of both the user and robot, where $\actionuser$ affects the user portion of state, and $\actionrobot$ affects the robot portion. The transition function $\transitionallargs$ deterministically transitions the state by applying $\actionuser$ and $\actionrobot$ sequentially.

For prediction, we used the same cost function for $\costtarguser$ as in our shared teleoperation experiments (\cref{sec:shared_teleop}). Let $d$ be the distance between the robot state $\stateenv' = \transitionuser(\stateenv, \actionuser)$\footnote{We sometimes instead observe $\stateenv'$ directly (e.g. sensing the pose of the user hand)} and target $\target$:
\begin{align*}
  \costtarguser(\stateenv, \actionuser) &= \left\{ \begin{array}{cc} \alpha & d > \delta \\ \frac{\alpha}{\delta} d & d\leq \delta \end{array} \right.
\end{align*}
Which behaves identically to our shared control teleoperation setting: when the distance is far away from any target $(d > \delta)$, probability shifts towards goals relative to how much progress the user makes towards them. When the user stays close to a particular target $(d \leq \delta)$, probability mass shifts to that goal, as the cost for that goal is less than all others.

Unlike our shared control teleoperation setting, our robot cost function does not aim to achieve the same goal as the user, but rather any goal not in the restricted set. As in our shared autonomy framework, let $\goal$ be the user's goal. The cost function for a particular user goal is:
\begin{align*}
  \costgoalrobot(\stateenv, \actionuser, \actionrobot) &= \min_{\robotgoal \text{\;s.t.\;} (\goal, \robotgoal) \not\in \goalrestrictionset} \costuser_{\robotgoal}(\stateenv, \actionrobot)
\end{align*}
Where $\costgoaluser$ uses the cost for each target $\costtarguser$ to compute the cost function as described in \cref{sec:framework_multitarget}. Additionally, note that the $\min$ over cost functions looks identical to the $\min$ over targets to compute the cost for a goal. Thus, for deterministic transition functions, we can use the same proof for computing the value function of a goal given the value function for all targets (\cref{sec:framework_multigarget_assistance}) to compute the value function for a robot goal given the value function for all user goals:
\begin{align*}
  \vgoalrobot(\stateenv) &= \min_{\robotgoal \text{\;s.t.\;} (\goal, \robotgoal) \not\in \goalrestrictionset} \vuser_{\robotgoal}(\stateenv)
\end{align*}

This simple cost function provides us a baseline for performance. We might expect better collaboration performance by incorporating costs for collision avoidance with the user~\citep{mainprice_2013, lasota_2015}, social acceptability of actions~\citep{sisbot_2007}, and user visibility and reachability~\citep{sisbot_2010, pandey_2010, mainprice_2011}. We use this cost function to test the viability of our framework as it enables closed-form computation of the value function.

This cost and value function causes the robot to go to any goal currently in it's goal set $\robotgoal \in \robotGoal$ which is not in the restriction set of the user goal $\goal$. Under this model, the robot makes progress towards goals that are unlikely to be in the restricted set and have low cost-to-go. As the form of the cost function is identical to that which we used in shared control teleoperation, the robot behaves similarly: making constant progress when far away $(d > \delta)$, and slowing down for alignment when near $(d \leq \delta)$. The robot terminates and completes the task once some condition is met (e.g. $d \leq \epsilon$).

\subsubsection*{Hindsight Optimization for Human-Robot Teaming}

Similar to shared control teleoperation, we believe hindsight optimization is a suitable POMDP approximation for human-robot teaming. The efficient computation enables us to respond quickly to changing user goals, even with continuous state and action spaces. For our formulation of human-robot teaming, explicit information gathering is not possible: As we assume the user and robot affect different parts of state space, robot actions are unable to explicitly gather information about the user's goal. Instead, we gain information freely from user actions.


\subsection{Human-Robot Teaming Experiment}
\label{sec:experiment_iros_2016}

\graphicspath{{./}{./images_iros_2016/}}

We apply our shared autonomy framework to a human-robot teaming task of gift-wrapping, where the user and robot must both perform a task on each box to be gift wrapped. Our goal restriction set enforces that they cannot perform a task on the same box at the same time. 

In a user study, we compare three methods: our shared autonomy framework, referred to as \emph{policy}, a standard predict-then-act system, referred to as \emph{plan}, and a non-adaptive system where the robot executes a fixed sequence of motions, referred to as \emph{fixed}.

%

\subsubsection{Metrics}

\emph{Task fluency} involves seamless coordination of action. One measure for task fluency is the minimum distance between the human and robot end effectors during a trial. This was measured automatically by a Kinect mounted on the robot's head, operating at 30Hz. Our second fluency measure is the proportion of trial time spent in collision. Collisions occur when the distance between the robot's end effector and the human's hand goes below a certain threshold. We determined that 8cm was a reasonable collision threshold based on observations before beginning the study. 

\emph{Task efficiency} relates to the speed with which the task is completed. Objective measures for task efficiency were total task duration for robot and for human, the amount of human idle time during the trial, and the proportion of trial time spent idling. Idling is defined as time a participant spends with their hands still (i.e., not completing the task). For example, idling occurs when the human has to wait for the robot to stamp a box before they can tie the ribbon on it. We only considered idling time while the robot was executing its tasks, so idle behaviors that occurred after the robot was finished stamping the boxes---which could not have been caused by the robot's behavior---were not taken into account.

We also measured subjective \emph{human satisfaction} with each method through a seven-point Likert scale survey evaluating perceived safety (four questions) and sense of collaboration (four questions). The questions were:
\begin{enumerate}
  \item ``HERB was a good partner''
  \item ``I think HERB and I worked well as a team''
  \item ``I'm dissatisfied with how HERB and I worked together''
  \item ``I trust HERB''
  \item ``I felt that HERB kept a safe distance from me''
  \item ``HERB got in my way''
  \item ``HERB moved too fast''
  \item ``I felt uncomfortable working so close to HERB''
\end{enumerate}

\subsubsection{Hypotheses}

We hypothesize that:

\newhypothset

\hypothcounter{Task fluency will be improved with our policy method compared with the plan and fixed methods}{hypoth:teaming_1}

\hypothcounter{Task efficiency will be improved with our policy method compared with the plan and fixed methods}{hypoth:teaming_2}

\hypothcounter{People will subjectively prefer the policy method to the plan or fixed methods}{hypoth:teaming_3}

\subsubsection{Experimental Design}
We developed a gift-wrapping task (\cref{fig:collab_exper_setup}). A row of four boxes was arranged on a table between the human and the robot; each box had a ribbon underneath it. The robot's task was to stamp the top of each box with a marker it held in its hand. The human's task was to tie a bow from the ribbon around each box. By nature of the task, the goals had to be selected serially, though ordering was unspecified. Though participants were not explicitly instructed to avoid the robot, tying the bow while the robot was stamping the box was challenging because the robot's hand interfered, which provided a natural disincentive toward selecting the same goal simultaneously.
\begin{figure}
	\centering
	\includegraphics[width=0.9\columnwidth]{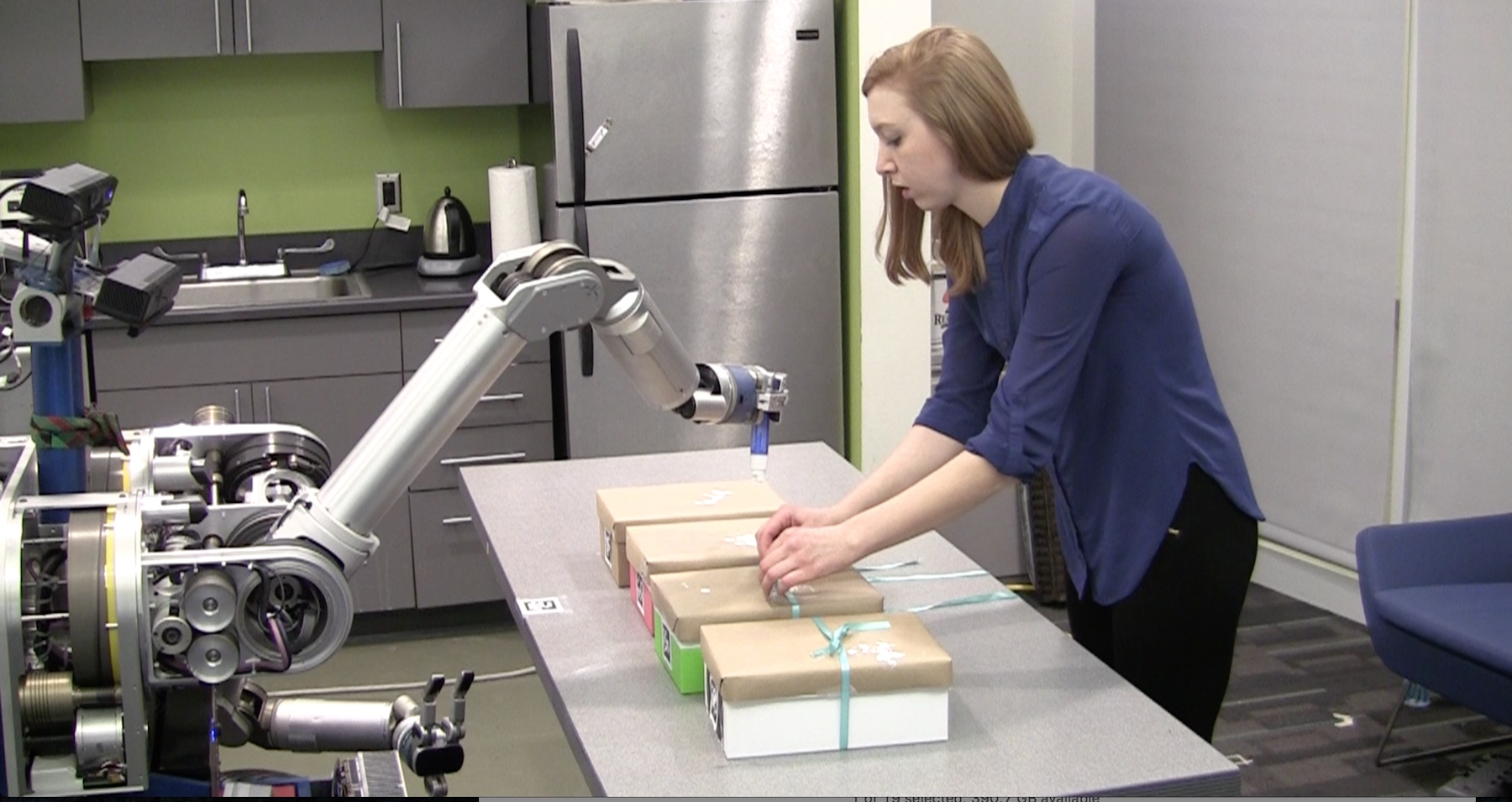}
  \caption{Participants performed a collaborative gift-wrapping task with HERB to evaluate our POMDP based reactive system against a state of the art predict-then-act method, and a non-adaptive fixed sequence of robot goals.}
	\label{fig:collab_exper_setup}
\end{figure}

\subsubsection{Implementation}
We implemented the three control methods on HERB~\citet{srinivasa_2012}, a bi-manual mobile manipulator with two Barrett WAM arms. A Kinect was used for skeleton tracking and object detection. Motion planning was performed using CHOMP, except for our policy method in which motion planning works according to \cref{sec:framework}.

The stamping marker was pre-loaded in HERB's hand. A stamping action began at a home position, the robot extended its arm toward a box, stamped the box with the marker, and retracted its arm back to the home position.

To implement the fixed method, the system simply calculated a random ordering of the four boxes, then performed a stamping action for each box. To implement the predict-then-act method, the system ran the human goal prediction algorithm from \cref{sec:framework_prediction} until a certain confidence was reached (50\%), then selected a goal that was not within the restricted set $\goalrestrictionset$ and performed a stamping action on that goal. There was no additional human goal monitoring once the goal action was selected. In contrast, our policy implementation performed as described in \cref{sec:human_robot_teaming}, accounting continually for adapting human goals and seamlessly re-planning when the human's goal changed.

\subsubsection{Procedure}
We conducted a within-subjects study with one independent variable (control method) that had 3 conditions (policy, plan, and fixed). Each performed the gift-wrapping task three times, once with each robot control method. To counteract the effects of novelty and practice, we counterbalanced on the order of conditions.

We recruited 28 participants (14 female, 14 male; mean age 24, SD 6) from the local community. Each participant was compensated \$5 for their time. After providing consent, participants were introduced to the task by a researcher. They then performed the three gift-wrapping trials sequentially. Immediately after each trial, before continuing to the next one, participants completed an eight question Likert-scale survey to evaluate their collaboration with HERB on that trial. At the end of the study, participants provided verbal feedback about the three methods. All trials and feedback were video recorded.

\subsubsection{Results}
\label{sec:results_iros_2016}
Two participants were excluded from all analyses for noncompliance during the study (not following directions). Additionally, for the fluency objective measures, five other participants were excluded due to Kinect tracking errors that affected the automatic calculation of minimum distance and time under collision threshold. Other analyses were based on video data and were not affected by Kinect tracking errors.

\begin{figure}[t]
	\centering
	\begin{subfigure}[t]{0.49\columnwidth}
		\includegraphics{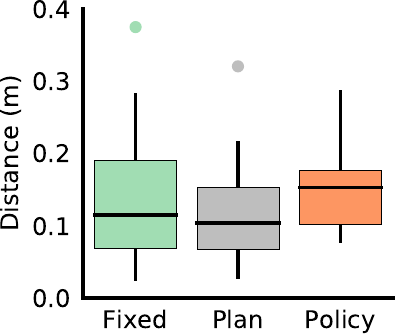}
		\caption{Minimum Distance}
		\label{fig:result_mindist}
	\end{subfigure}
	\hfill
	\begin{subfigure}[t]{0.49\columnwidth}
		\includegraphics{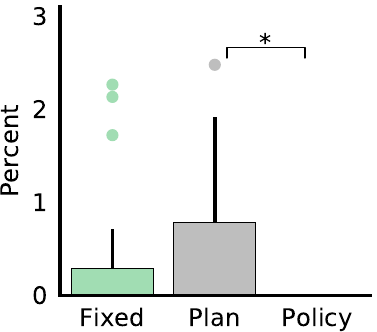}
		\caption{Percent in Collision}
		\label{fig:result_percentcollision}
	\end{subfigure}
	\caption{Distance metrics: no difference between methods for minimum distance during interaction, but the policy method yields significantly $(p<0.05)$ less time in collision between human and robot.}
	\label{fig:distance}
\end{figure}

\begin{figure*}[t]
	\begin{subfigure}[b]{0.32\textwidth}
		\includegraphics{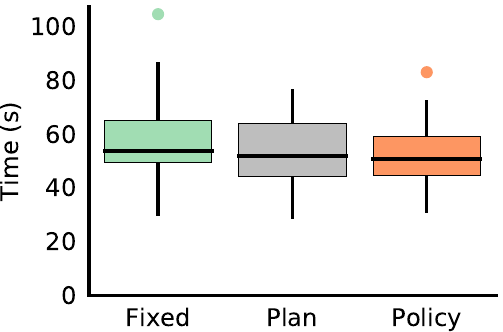}
		\caption{Human Trial Duration}
		\label{fig:result_humantime}
	\end{subfigure}
	\hfill
	\begin{subfigure}[b]{0.32\textwidth}
		\includegraphics{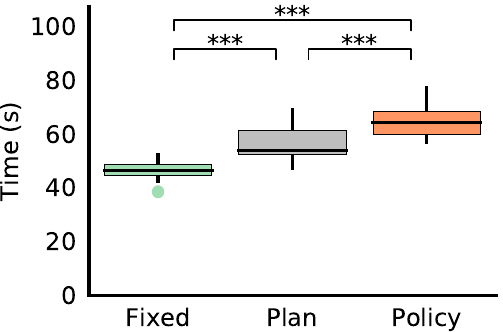}
		\caption{Robot Trial Duration}
		\label{fig:result_robottime}
	\end{subfigure}
	\hfill
	\begin{subfigure}[b]{0.32\textwidth}
		\includegraphics{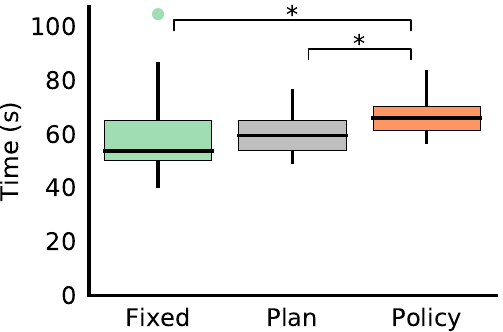}
		\caption{Total Trial Duration}
		\label{fig:result_totaltime}
	\end{subfigure}
	\caption{Duration metrics, with pairs that differed significantly during post-analysis are plotted, where ${*}$ indicates $p<0.05$ and ${*}{*}{*}$ that $p<0.001$. Human trial time was approximately the same across all methods, but robot time increased with the computational requirements of the method. Total time thus also increased with algorithmic complexity.}
	\label{fig:duration}
\end{figure*}

\begin{figure}[t]
	\begin{subfigure}[b]{0.49\columnwidth}
		\includegraphics{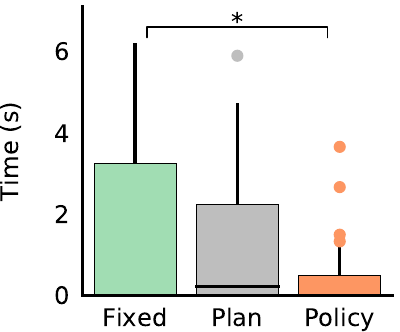}
		\caption{Human Idle Time}
		\label{fig:result_idletime}
	\end{subfigure}
	\hfill
	\begin{subfigure}[b]{0.49\columnwidth}
		\includegraphics{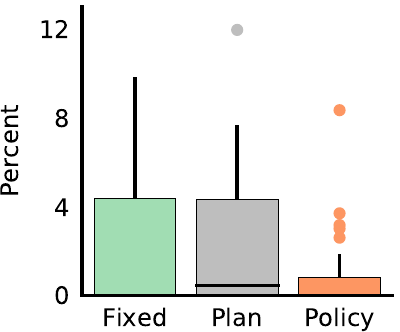}
		\caption{Human Idle Percentage}
		\label{fig:result_idlepercentage}
	\end{subfigure}
	\caption{Idle time metrics: policy yielded significantly $(p<0.05)$ less absolute idle time than the fixed method.}
	\label{fig:idle}
\end{figure}

We assess our hypotheses using a significance level of $\alpha=0.05$. For data that violated the assumption of sphericity, we used a Greenhouse-Geisser correction. If a significant main effect was found, a post-hoc analysis was used to identify which conditions were statistically different from each other, with Holm-Bonferroni corrections for multiple comparisons.

To evaluate \cref{hypoth:teaming_1} (fluency), we conducted a repeated measures ANOVA testing the effects of method type (policy, plan, and fixed) on our two measures of human-robot distance: the minimum distance between participant and robot end effectors during each trial, and the proportion of trial time spent with end effector distance below the 8cm collision threshold (\cref{fig:distance}). The minimum distance metric was not significant ($F(2,40) = 1.405, p = 0.257$). However, proportion of trial time spent in collision was significantly affected by method type ($F(2,40) = 3.639, p = 0.035$). Interestingly, the policy method never entered under the collision threshold. Post-hoc pairwise comparisons with a Holm-Bonferroni correction reveal that the policy method yielded significantly ($p = 0.027$) less time in collision than the plan method (policy $M=0.0\%, SD=0$; plan $M=0.44\%, SD = 0.7$). 

Therefore, \cref{hypoth:teaming_1} is partially supported: the policy method actually yielded no collisions during the trials, whereas the plan method yielded collisions during 0.4\% of the trial time on average. This confirms the intuition behind the differences in the two methods: the policy continually monitors human goals, and thus never collides with the human, whereas the plan method commits to an action once a confidence level has been reached, and is not adaptable to changing human goals.

To evaluate \cref{hypoth:teaming_2} (efficiency), we conducted a similar repeated measures ANOVA for the effect of method type on task durations for robot and human (\cref{fig:duration}), as well as human time spent idling (\cref{fig:idle}).
Human task duration was highly variable and no significant effect for method was found ($F(2,50)=2.259, p = 0.115$). On the other hand, robot task duration was significantly affected by method condition ($F(2,50)=79.653, p<0.001$). Post-hoc pairwise comparisons with a Bonferroni correction reveal that differences between all conditions are significant at the $p < 0.001$ level. Unsurprisingly, robot task completion time was shortest in the fixed condition, in which the robot simply executed its actions without monitoring human goals ($M=46.4s, SD=3.5s$). It was significantly longer with the plan method, which had to wait until prediction reached a confidence threshold to begin its action ($M=56.7s, SD=6.0$). Robot task time was still longer for the policy method, which continually monitored human goals and smoothly replanned motions when required, slowing down the overall trajectory execution ($M=64.6s, SD=5.3$). 

Total task duration (the maximum of human and robot time) also showed a statistically significant difference ($F(2,50)=4.887, p = 0.012$). Post-hoc tests with a Bonferroni-Holm correction show that both fixed ($M=58.6s, SD=14.1$) and plan ($M=60.6s, SD=7.1$) performed significantly ($p = 0.026$ and $p=0.032$, respectively) faster than policy ($M=65.9s, SD=6.3$). This is due to the slower execution time of the policy method, which dominates the total execution time. 

Total idle time was also significantly affected by method type ($F(2,50)=3.809, p=0.029$). Post-hoc pairwise comparisons with Bonferroni correction reveal that the policy method yielded significantly ($p = 0.048$) less idle time than the fixed condition (policy $M=0.46s, SD=0.93$, fixed $M=1.62s, SD=2.1$). Idle time percentage (total idle time divided by human trial completion time) was also significant ($F(2,50)=3.258, p=0.047$). Post-hoc pairwise tests with Bonferroni-Holm correction finds no significance between pairs. In other words, the policy method performed significantly better than the fixed method for reducing human idling time, while the plan method did not.


Therefore, \cref{hypoth:teaming_2} is partially supported: although total human task time was not significantly influenced by method condition, the total robot task time and human idle time were all significantly affected by which method was running on the robot. The robot task time was slower using the policy method, but human idling was significantly reduced by the policy method.

To evaluate \cref{hypoth:teaming_3} (subjective responses), we first conducted a Chronbach's alpha test to assure that the eight survey questions were internally consistent. The four questions asked in the negative (e.g., ``I'm dissatisfied with how HERB and I worked together'') were reverse coded so their scales matched the positive questions. The result of the test showed high consistency ($\alpha=0.849$), so we proceeded with our analysis by averaging together the participant ratings across all eight questions. 

During the experiment, participants sometimes saw collisions with the robot. We predict that collisions will be an important covariate on the subjective ratings of the three methods. In order to account for whether a collision occurred on each trial in our within-subjects design, we cannot conduct a simple repeated measures ANOVA. Instead, we conduct a linear mixed model analysis, with average rating as our dependent variable; method (policy, plan, and fixed), collision (present or absent), and their interaction as fixed factors; and method condition as a repeated measure and participant ID as a covariate to account for the fact that participant ratings were not independent across the three conditions. Table \ref{tab:subjective} shows details of the scores for each method broken down by whether a collision occurred.

\begin{table}
	\centering
	\begin{tabular}{lcccc}
		\toprule
		& \multicolumn{2}{c}{No Collision} & \multicolumn{2}{c}{Collision}\\
		\cmidrule(r){2-3} \cmidrule(l){4-5}
		& \emph{mean (SD)} & \emph{N} & \emph{mean (SD)} & \emph{N} \\ 
		\cmidrule(r){2-2}\cmidrule(lr){3-3}\cmidrule(lr){4-4} \cmidrule(l){5-5}
		Fixed 		&	5.625 (1.28)	& 14	& 4.448	(1.23)	& 12	\\
		Plan      & 	5.389 (1.05)	& 18	& 4.875	(1.28)	& 8		\\
		Policy 		&	5.308 (0.94)	& 26	& ---			& 0		\\
		\bottomrule
	\end{tabular}
	\caption{Subjective ratings for each method condition, separated by whether a collision occurred during that trial.}
	\label{tab:subjective}
\end{table}

We found that collision had a significant effect on ratings ($F(1,47.933)=6.055, p=0.018$), but method did not ($F(1,47.933)=0.312, p = 0.733$). No interaction was found. In other words, ratings were significantly affected by whether or not a participant saw a collision, but not by which method they saw independent of that collision. Therefore, \cref{hypoth:teaming_3} is not directly supported. However, our analysis shows that collisions lead to poor ratings, and our results above show that the policy method yields fewer collisions. We believe a more efficient implementation of our policy method to enable faster robot task completion, while maintaining fewer collisions, may result in users preferring the policy method.

\section{Discussion and Conclusion}
\label{sec:conclusion}

In this work, we present a method for shared autonomy that does not rely on predicting a single user goal, but assists for a distribution over goals. Our motivation was a lack of assistance when using predict-then-act methods - in our own experiment (\cref{sec:experiment_hri_2016}), resulting in no assistance for $69\%$ of execution time. To assist for any distribution over goals, we formulate shared autonomy as a POMDP with uncertainty over user goals. To provide assistance in real-time over continuous state and action spaces, we used hindsight optimization~\citep{littman_1995,chong_2000,yoon_2008} to approximate solutions. We tested our method on two shared-control teleoperation scenarios, and one human-robot teaming scenario. Compared to predict-then-act methods, our method achieves goals faster, requires less user input, decreases user idling time, and results in fewer user-robot collisions.

In our shared control teleoperation experiments, we found user preference differed for each task, even though our method outperformed a predict-then-act method across all objective measures for both tasks. This is not entirely surprising, as prior works have also been mixed on whether users prefer more control authority or better task completion~\cite{you_2011, kim_2012, dragan_2013_assistive}. In our studies, user's tended to prefer a predict-then-act approach for the simpler grasping scenario, though not significantly so. For the more complex eating task, users significantly preferred our shared autonomy method to a predict-then-act method. In fact, our method and blending were the only pair of algorithms that had a significant difference across all objective measures and the subjective measuring of like and rank (\cref{tab:eating_p_val_results}).

However, we believe this difference of rating cannot simply be explained by task difficulty and timing, as the experiments had other important differences. 
The grasping task required minimal rotation, and relied entirely on assistance to achieve it. Using blending, the user could focus on teleoperating the arm near the object, at which point the predictor would confidently predict the user goal, and assistance would orient the hand. For the feeding task, however, orienting the fork was necessary before moving the arm, at which point the predictor could confidently predict the user goal. For this task, predict-then-act methods usually did not reach their confidence threshold until users completed the most difficult portion of the task - cycling control modes to rotate and orient the fork. These mode switches have been identified as a significant contributor to operator difficulty and time consumption~\citep{herlant_2016}. This inability to confidently predict a goal until the fork was oriented caused predict-then-act methods to provide no assistance for the first $29.4$ seconds on average - which is greater then the total average time of our method ($18.5s$). We believe users were more willing to give up control authority if they did not need to do multiple mode switches and orient the fork, which subjectively felt much more tedious then moving the position.



In all experiments, we used a simple distance-based cost function, for which we could compute value functions in closed form. This enabled us to compute prediction and assistance 50 times a second, making the system feel responsive and reactive. However, this simple cost function could only provide simple assistance, with the objective of minimizing the time to reach a goal. Our new insights into possible differences of user costs for rotation and mode switches as compared to translation can be incorporated into the cost function, with the goal of minimizing user effort.

For human-robot teaming, the total task time was dominated by the robot, with the user generally finishing before the robot. In situations like this, augmenting the cost function to be more aggressive with robot motion, even at the cost of responsiveness to the user, may be beneficial. Additionally, incorporating more optimal robot policies may enable faster robot motions within the current framework.

Finally, though we believe these results show great promise for shared control teleoperation and teaming, we note users varied greatly in their preferences and desires. Prior works in shared control teleoperation have been mixed on whether users prefer control authority or more assistance~\cite{you_2011, kim_2012, dragan_2013_assistive}. Our own experiments were also mixed, depending on the task. Even within a task, users had high variance, with users fairly split for grasping (\cref{fig:survey_means}), and a high variance for user responses for full autonomy for eating (\cref{fig:survey_questions}). For teaming, users were similarly mixed in their rating for an algorithm depending on whether or not they collided with the robot (\cref{tab:subjective}). This variance suggests a need for the algorithm to adapt to each individual user, learning their particular preferences. New work by \citet{nikolaidis_2017_shared} captures these ideas through the user's \emph{adaptability}, but we believe even richer user models and their incorporation into the system action selection would make shared autonomy systems better collaborators.


\bibliographystyle{SageH}
\bibliography{references}


\begin{appendices}

\begin{table*}[!t]
\centering
\begin{tabular}{rl}
\toprule
Symbol & Description\\ \midrule
$\stateenv \in \Stateenv$ & Environment state, e.g. robot and human pose\\
$\goal \in \Goal$ & User goal\\
$\stateenvgoal \in \Stateenvgoal$ & $\stateenvgoal = (\stateenv, \goal)$. State and user goal\\
$\actionuser \in \Actionuser$ & User action\\
$\actionrobot \in \Actionrobot$ & Robot action\\
$\costuser(\state, \actionuser) = \costusergoal(\stateenv, \actionuser)$ & Cost function for each user goal\\
$\costrobot(\state, \actionuser, \actionrobot) = \costrobotgoal(\stateenv, \actionuser, \actionrobot)$ & Robot cost function for each goal\\
$\transitionallargs$ & Transition function of environment state\\
$\transition( (\stateenv', g) \given (\stateenv, g), \actionuser, \actionrobot) = \transition( \stateenv' \given \stateenv, \actionuser, \actionrobot)$ & User goal does not change with transition\\
$\transitionuser( \stateenv' \given \stateenv, \actionuser) = \transition( \stateenv' \given \stateenv, \actionuser, 0)$ & User transition function assumes the user is in full control\\
$\vgoal(\stateenv) = \vopt(\state)$ & The value function for a user goal and environment state\\
$\qgoal(\stateenvactions) = \qopt(\stateactions)$ & The action-value function for a user goal and environment state\\
$\belief$ & Belief, or distribution over states in our POMDP.\\
$\transitionbelief(\belief' \given \belief, \actionuser, \actionrobot)$ & Transition function of belief state\\
$\vrobot^{\policyrobot}(\belief)$ & Value function for following policy $\policyrobot$ given belief $\belief$\\
$\qrobot^{\policyrobot}(\belief, \actionuser, \actionrobot)$ & Action-Value for taking actions $\actionuser$ and $\actionrobot$ and following $\policyrobot$ thereafter\\
$\vhs(\belief)$ & Value given by Hindsight Optimization approximation\\
$\qhs(\belief, \actionuser, \actionrobot)$ & Action-Value given by Hindsight Optimization approximation\\
\bottomrule
\end{tabular}
\captionof{table}{Variable definitions}
\label{table:variable_definitions}
\end{table*}

\section{Variable Definitions} 
\label{sec:variable_definitions}

For reference, we provide a table of variable definitions in \cref{table:variable_definitions}.

\section{Multi-Target MDPs Proofs}
\label{sec:mingoal_thms}

Below we provide the proofs for decomposing the value functions for MDPs with multiple targets, as introduced in~\cref{sec:framework_multitarget}.

\subsection{\Cref{thm:mingoal_assist}: Decomposing value functions}
Here, we show the proof for our theorem that we can decompose the value functions over that the targets for deterministic MDPs. The proofs here are written for our shared autonomy scenario. However, the same results hold for any deterministic MDP:
\valfundecompose*
\begin{proof}
We show how the standard value iteration algorithm, computing $\qgoal$ and $\vgoal$ backwards, breaks down at each time step. At the final timestep T, we get:
\begin{align*}
  \qgoalt{T}(\stateenvactions) &= \costgoal(\stateenvactions)\\
  &= \costtarg(\stateenvactions) \qquad \text{for any $\target$}\\
  \vgoalt{T}(\stateenv) &= \min_{\actionrobot} \costgoal(\stateenvactions) \qquad \actionuser = \policyuser(\stateenv)\\
  &= \min_{\actionrobot} \min_\target \costtarg(\stateenvactions) \\
  &= \min_\target \vtargt{T}(\stateenv)
\end{align*}
Let $\target^* = \argmin \vtarg(\stateenv')$ as before. Now, we show the recursive step:
\begin{align*}
  \qgoalt{t-1}(\stateenvactions) &= \costgoal(\stateenvactions) + \vgoalt{t}(\stateenv')\\
  &= \costtargstar(\stateenvactions) + \min_\target \vtargt{t}(\stateenv') \hspace{1.3em}\\
  &= \costtargstar(\stateenvactions) + \vtargstart{t}(\stateenv')\\
  &= \qtargstar(\stateenvactions)\\
  \vgoalt{t-1}(\stateenv) &= \min_{\actionrobot} \qgoalt{t-1}(\stateenvactions)  \qquad \actionuser = \policyuser(\stateenv)\\
  &=  \min_{\actionrobot} \costtargstar(\stateenvactions) + \vtargstart{t}(\stateenv')\\
  & \geq  \min_{\actionrobot} \min_\target \left( \costtarg(\stateenvactions) + \vtargt{t}(\stateenv') \right)\\
  &= \min_\target \vtargt{t-1}(\stateenv)
\end{align*}

Additionally, we know that $\vgoal(\staterobot) \leq \min_{\target} \vtarg(\staterobot)$, since $\vtarg(\staterobot)$ measures the cost-to-go for a specific target, and the total cost-to-go is bounded by this value for a deterministic system. Therefore, $\vgoal(\staterobot) = \min_{\target} \vtarg(\staterobot)$.
\end{proof}

\subsection{\Cref{thm:mingoal_pred}: Decomposing soft value functions}
Here, we show the proof for our theorem that we can decompose the soft value functions over that the targets for deterministic MDPs:
\softvalfundecompose*

\begin{proof}
As the cost is additive along the trajectory, we can expand out $\exp(-\costtarg(\traj))$ and marginalize over future inputs to get the probability of an input now:
\begin{align*}
  \policyuser(\actionuser_t,\target| \staterobot_t) &= \frac{ \exp(-\costtarg(\staterobot_t, \actionuser_t)) \int \exp(-\costtarg(\trajatp{\staterobot_{t+1}})) } {\sum_{\target'}\int \exp(-\costtargprime(\trajat{\staterobot_{t}}))} 
\end{align*}
Where the integrals are over all trajectories. By definition, $\exp(-\vtargsoftt{t}(\staterobot_t)) = \int \exp(-\costtarg(\trajat{\staterobot_t}))$:
\begin{align*}
  &= \frac{ \exp(-\costtarg(\staterobot_t, \actionuser_t)) \exp(-\vtargsoftt{t+1}(\staterobot_{t+1}))} {\sum_{\target'} \exp(-\vsoft_{\target',t}(\staterobot_{t}) )} \\
  &= \frac{ \exp(-\qtargsoftt{t}(\staterobot_t, \actionuser_t))} {\sum_{\target'} \exp(-\vsoft_{\target',t}(\staterobot_{t}) )} 
\end{align*}
Marginalizing out $\target$ and simplifying:
\begin{align*}
  & \policyuser(\actionuser_t| \staterobot_t) = \frac{\sum_\target \exp( -\qtargsoftt{t}(\staterobot_t, \actionuser_t))} {\sum_{\target} \exp(-\vtargsoftt{t}(\staterobot_{t}) )} \\
  &= \exp \left( \log \left( \frac{\sum_\target \exp( -\qtargsoftt{t}(\staterobot_t, \actionuser_t))} {\sum_{\target} \exp(-\vtargsoftt{t}(\staterobot_{t}) )} \right) \right)\\
  &= \exp \left( \softmin_\target \vtargsoftt{t}(\staterobot_t) - \softmin_\target \qtargsoft{t}(\staterobot_t, \actionuser_t) \right)
\end{align*}
As $\vgoalsoftt{t}$ and $\qgoalsoftt{t}$ are defined such that $\policyuser_t(\actionuser | \staterobot, \goal) = \exp(\vgoalsoftt{t}(\staterobot) -\qgoalsoftt{t}(\staterobot, \actionuser))$, our proof is complete.
\end{proof}

\newcommand{\inreg}[1]{n_{#1}}
\newcommand{\inreghat}[1]{\widehat{n}_{#1}}
\newcommand{\numwithobs}[1]{n^{#1}}
\newcommand{\numwithobshat}[1]{\widehat{n}^{#1}}
\newcommand{\inregandobs}[2]{n_{#1}^{#2}}
\newcommand{\inregandobshat}[2]{\widehat{n}_{#1}^{#2}}
\newcommand{\totalhypoths}{N}
\newcommand{\totalhypothshat}{\widehat{N}}

\newcommand{\edgermind}{l}
\newcommand{\inregrm}{\inreg{\edgermind}}
\newcommand{\subregrm}{\subreg{\edgermind}}

\newcommand{\hyperedgesetk}{\hyperedgeset_\edgermind}
\newcommand{\hyperedgesetnok}{\overline{\hyperedgeset_{\edgermind}}}

\newcommand{\minhyperedgeset}{\hyperedgeset^{\min}}
\newcommand{\minhyperedgesetk}{\minhyperedgeset_\edgermind}
\newcommand{\minhyperedgesetnok}{\overline{\minhyperedgeset_{\edgermind}}}
\newcommand{\nominhyperedgeset}{\hyperedgeset^{\overline{\min}}}
\newcommand{\nominhyperedgesetk}{\nominhyperedgeset_\edgermind}
\newcommand{\nominhyperedgesetnok}{\overline{\nominhyperedgeset_{\edgermind}}}
\newcommand{\numk}{ {\vert\hyperedge_\edgermind\vert} }

\newcommand*\circled[1]{\tikz[baseline=(char.base)]{\node[shape=circle,draw,inner sep=2pt] (char) {#1};}}

\newcommand{\sumhe}{\sum_{\hyperedge \in \hyperedgeset}}
\newcommand{\sumhek}{\sum_{\hyperedge \in \hyperedgesetk}}
\newcommand{\sumhenok}{\sum_{\hyperedge \in \hyperedgesetnok}}
\newcommand{\summinhenok}{\sum_{\hyperedge \in \minhyperedgesetnok}}
\newcommand{\sumnominhenok}{\sum_{\hyperedge \in \nominhyperedgesetnok}}
\newcommand{\sumallobs}{\sum_{\observationitem \in \observationset}} 
\newcommand{\sumobsnotc}{\sum_{\observationitem \in \observationset \backslash c}} 

\newcommand{\prodedge}{\prod_{i \in \hyperedge}}
\newcommand{\prodedgenok}{\prod_{i \in \hyperedge, i \neq l}}

\end{appendices}

%
%

\end{document}